\newtheorem{defn}{Definition}
\newtheorem{cond}{Condition}
\newtheorem{prop}{Proposition}
\newtheorem*{prop-nonumber}{Proposition}
\newtheorem{lemma_main}{Lemma}
\newtheorem{lemma}{Lemma}[section]
\newtheorem*{lemma-nonumber}{Lemma}
\newtheorem{thm}{Theorem}
\newtheorem*{thm-nonumber}{Theorem}
\newtheoremstyle{mytheorem}
{\topsep}
{\topsep}
{\itshape}
{0pt}
{\bfseries}
{.}
{ }
{\thmname{#1}.\thmnumber{#2}\thmnote{ (#3)}}
\theoremstyle{mytheorem}
\newtheorem{asm}{A}
\title{Empirical Risk Minimization for Time Series: \\ Nonparametric Performance Bounds for Prediction}
\author{Christian Brownlees$^*$ \and Jordi Llorens-Terrazas\thanks{%
		Universitat Pompeu Fabra and Barcelona Graduate School of Economics. \newline
		e-mail: \texttt{christian.brownlees@upf.edu}, \texttt{jordi.llorens@upf.edu}. \newline
		We would like to thank Gabor Lugosi, Geert Mesters, David Rossell, Francesco Violante and Piotr Zwiernik for providing numerous helpful comments.
		We would also like to thank participants at the finance research unit seminar, univeristy of Copenhagen (May 10, 2021) and finance and insurance seminar, CREST (May 27, 2021). 
		Christian Brownlees acknowledges support from the Spanish Ministry of Science and Technology (Grant MTM2012-37195)
		and the Spanish Ministry of Economy and Competitiveness through the Severo Ochoa Programme for Centres of Excellence in R\&D (SEV-2011-0075).}}
\date{\normalsize First Draft: July 1, 2021 \\ This Draft: \today}
\begin{document}
\maketitle
\begin{abstract}
Empirical risk minimization is a standard principle for choosing algorithms in learning theory. 
In this paper we study the properties of empirical risk minimization for time series.
The analysis is carried out in a general framework that covers different types of forecasting applications encountered in the literature. 
We are concerned with 1-step-ahead prediction of a univariate time series generated by a parameter-driven process.
A class of recursive algorithms is available to forecast the time series.
The algorithms are recursive in the sense that the forecast produced in a given period is a function of the lagged values of the forecast and of the time series.
The relationship between the generating mechanism of the time series and the class of algorithms is unspecified.
Our main result establishes that the algorithm chosen by empirical risk minimization achieves asymptotically the optimal predictive performance that is attainable within the class of algorithms.

\bigskip \noindent Keywords: Empirical risk minimization, oracle inequality, time series, forecasting, Markov chain.

\bigskip \noindent JEL: C14, C22, C53, C58.
\end{abstract}
\setstretch{1.50}

\clearpage

\section{Introduction}

Empirical risk minimization is a standard principle for choosing algorithms in learning theory \citep{Vapnik:Chervonenkis:1971,Devroye:Gyorfi:Lugosi:1996}.
Simply put, empirical risk minimization consists in choosing the algorithm that minimizes the empirical risk.
One of the main goals of learning theory is to establish bounds on the predictive performance
of the algorithm that minimizes the empirical risk relative to the optimal performance attainable in a given class of algorithms. 
A key feature of learning theory is its nonparametric nature, 
in the sense that performance bounds are typically obtained under the assumption that the generating mechanism of the data is unknown.
Despite the fact that empirical risk minimization is a general principle and widely applicable, 
the majority of contributions in this area focus on the analysis of i.i.d.~data.

In this paper we study empirical risk minimization for time series.
Our analysis is carried out in a general framework that allows to study different types of forecasting applications.
We are concerned with 1-step-ahead prediction of a univariate stationary time series generated by a (possibly nonlinear) parameter-driven process.
The class of processes we entertain is fairly broad and it includes linear state space and stochastic volatility models.
A class of recursive algorithms is available to predict the time series.
The algorithms are recursive in the sense that the forecast produced in a given period is a function of the lagged values of the forecast and the time series.
The class we consider is inspired by threshold models \citep{Tong:1990} and it includes as special cases the prediction formulae/filters of ARMA and GARCH models.
The prediction accuracy of the forecasts is measured by a loss function in the Bregman class \citep{Bregman:1967,Banerjee:2005,Laurent:Rombouts:Violante:2013,Patton:2020}, 
which includes the loss functions typically used for the estimation of ARMA and GARCH models.
Our analysis is nonparametric in the sense that the relationship between the data generating mechanism of the time series and the class of algorithms is unspecified. 

The main result of this paper consists in establishing an oracle inequality that provides non-asymptotic guarantees on the predictive performance of empirical risk minimization.
The oracle inequality implies that empirical risk minimization is consistent, in the sense that the algorithm chosen by empirical risk minimization
achieves asymptotically the optimal predictive performance that can be attained within the class of algorithms considered.
In particular, our result implies that ARMA/GARCH prediction based on the standard Gaussian maximum likelihood estimator achieves the optimal predictive performance even 
when the conditional mean/conditional variance equation of the model is misspecified.

The main result is illustrated by a number of basic applications of the general framework.
We consider forecasting time series generated by an AR(1) plus noise model as well as a stochastic volatility model.
In the case of stochastic volatility we consider forecasting on the basis of returns or a realized volatility measure.
%
%
Last, as a side result, we show that our class of algorithms can be interpreted as the solution of a sequential optimization problem that consists in minimizing an appropriately defined measure of tracking error of the algorithm.

The main result follows from five intermediate propositions.
We begin by establishing existence of moments and strong mixing conditions of a joint process that includes the time series and the algorithm (Proposition \ref{prop:mom_and_mixing}).
Importantly, the strong mixing coefficients are bounded by a function with geometric decay uniformly over the class of algorithms.
Next we establish a general inequality that states that the performance of empirical risk minimization can be controlled by the sum of two quantities (Proposition \ref{prop:generalized_gabor}). 
The first is the supremum of an average of differences between conditional and unconditional expectations and the second is the supremum of the empirical process associated with the prediction loss of the algorithm.
The first term is bounded using an inequality from Ibragimov (Proposition \ref{prop:cond-uncond}).
The second term is bounded using a covering argument (Proposition \ref{prop:covering}) and a concentration inequality for strong mixing processes (Proposition \ref{prop:conc}).

Proposition \ref{prop:mom_and_mixing} contains the main novel idea of the paper.
The result builds upon the literature on nonlinear time series models and Markov chains \citep{Bougerol:Picard:1992,Lanne:Saikkonen:2005,Francq:Zakoian:2006,Meitz:Saikkonen:2008,Kristensen:2009}.
The novelty with respect to the literature consists in using Markov chain theory to establish moment and dependence properties of an \emph{algorithm}, as opposed to a \emph{model}.
More precisely, the strategy consists in embedding the time series and the algorithm in what we name a companion Markov chain.
We then show that the companion Markov chain is $V$-geometric ergodic,
which implies existence of moments and strong mixing of the time series and the algorithm \citep{Meyn:Tweedie:1993}.
The uniform bound on the strong mixing coefficients is established using results by \citet{Roberts:Rosenthal:2004}.
This approach is motivated by the fact that while it can be challenging to characterize the moment and dependence properties of general nonlinear processes,
a number of tools are available to establish these properties for Markov nonlinear processes \citep{Carrasco:Chen:2002}.
We emphasize that the result does not hinge on the approximation properties of the class of algorithms.

Four remarks are in order before we proceed.
%
First, empirical risk minimization has a number of analogies with quasi-maximum likelihood estimation for ARMA/GARCH models.
Important research in the area includes \citet{Lee:Hansen:1994}, \citet{Lumsdaine:1996}, \citet{Ling:McAleer:2003}, \citet{Francq:Zakoian:2004}, \citet{Kristensen:Rahbek:2005} and \citet{Straumann:Mikosch:2006}.
Contributions in this literature typically assume that the conditional mean/conditional variance equation of the time series is known whereas the innovation distribution is not.
Interest then lies in estimating the parameters of the conditional mean/conditional variance equation.
The main difference with these contributions is that in this paper the relationship between the data generating process and the algorithm is unspecified.
In particular, the class of algorithms may not contain the conditional mean/conditional variance of the time series.

Second, empirical risk minimization for time series is closely related to M-estimation for dependent data.
Classic references in this area include \citet{Gallant:White:1988} and \citet{Potscher:Prucha:1997}, which develop general theory on the basis of fairly high-level assumptions.
\citet{Gallant:White:1988} rely, among other requirements, on uniform NED and dominance conditions on the objective function of M-estimation.
We remark that checking that these conditions hold is not always straightforward.
Instead, in this paper we rely on primitive assumptions to establish that conditions akin uniform NED and dominance hold.

Third, empirical risk minimization is close in spirit to nonparametric time series modeling.
Important research in this area includes \citet{Pagan:Schwert:1990}, \citet{Masry:Tjostheim:1995}, \citet{Hardle:Tsybakov:1997} and \citet{Linton:Mammen:2005}.
This literature focuses on developing nonparametric estimation techniques to estimate the time series model that has generated the data. 
The main difference with these contributions is that we are not concerned with estimating the time series model that has generated the data
and that interest solely lies in choosing an optimal algorithm for prediction within a given class.

Fourth, this paper contributes to the literature on empirical risk minimization for dependent data.
Besides a number of notable contributions, this literature is not extensive.\footnote{We remark that nontrivial technical challenges arise with dependent data. \cite{Mendelson:2015} argues that some of the standard techniques used in learning theory cannot be extended beyond i.i.d.~and bounded data setup.}
Two closely related contributions are \citet{Jiang:Tanner:2010} and \citet{Brownlees:Gudmundsson:2021}, which study empirical risk minimization for regression.
The class of algorithms considered in these papers depends on a finite number of lags of the time series.
In such a setting it is typically straightforward to obtain the dependence properties of the joint system composed of the times series and the algorithm.
The strategy adopted in both papers consists in assuming that the time series is strong mixing and then applying 
standard results for functions of strong mixing processes to obtain that the joint process is also strong mixing.
Such an approach is not viable in the framework of this paper.
In our setup forecasts depend on the entire past history of the time series. In this case standard results for functions of strong mixing processes do not provide useful results.

%
%

The rest of the paper is structured as follows.
Section \ref{sec:framework} introduces the basic framework.
Section \ref{sec:erm} presents empirical risk minimization and the main result of the paper.
Section \ref{sec:app} contains applications.
Section \ref{sec:proof} outlines the proof of the main result.
Concluding remarks follow in Section \ref{sec:end}.
Proofs are in the Appendix.

\section{Basic Definitions and Assumptions}\label{sec:framework}

\paragraph{Data generating process.} 
We are concerned with 1-step-ahead prediction of a stationary time series $\{ Y_t , t\geq 0 \}$ generated by a parameter-driven process.
The process $\{ Y_t , t\geq 0 \}$ takes values in $\mathcal Y \subseteq \mathbb R$ and is defined as $Y_0 = y \in \mathcal Y$, and
\begin{eqnarray}
	Y_t &=& g_{y1}(H_t) + g_{y2}(H_t) \epsilon_{Y\,t}~, \hspace{1em} t \geq 1~, \label{eqn:dgp:1} 
\end{eqnarray}
where $\{ H_t , t \geq 0 \}$ is a hidden process, $\{ \epsilon_{Y\,t} , t\geq 1\}$ is an i.i.d.~sequence of random variables and $g_{y1}$ and $g_{y2}$ are Borel-measurable real functions.
The process $\{ H_t , t \geq 0 \}$ takes values in $\mathcal H = \mbox{int}(\mathcal Y)$ and is defined as $H_0 = h \in \mathcal H$, and
\begin{eqnarray}
	H_t &=& g_{h1}(H_{t-1}) + g_{h2}(H_{t-1})\epsilon_{H\,t}~, \hspace{1em} t \geq 1~, \label{eqn:dgp:2} 
\end{eqnarray}
where $\{ \epsilon_{H\,t} , t\geq 1\}$ is an i.i.d.~sequence of random variables and $g_{h1}$ and $g_{h2}$ are Borel-measurable real functions.
We remark that in our framework, depending on the application, the target time series $\{ Y_t , t \geq 0 \}$ may denote some appropriate transformation of the data.
For example, in volatility forecasting using stock returns, where interest lies in predicting the 1-step-ahead scale of stock returns, the time series $\{ Y_t , t \geq 0 \}$ may be defined as the squared return process.


The data generating process satisfies the following set of assumptions.

\begin{asm}[Data generating process]\label{asm:dgp}~

	\begin{enumerate}[(i)]
	\item The functions $g_{h1}$ and $g_{h2}$ are bounded on bounded subsets of $\mathbb R$. There exist positive constants $a_h$ and $b_h$ such that
	$|g_{h1}(h)| \leq a_h |h| + o(|h|)$ as $|h| \rightarrow \infty$ and 
	$|g_{h2}(h)| \leq b_h |h| + o(|h|)$ as $|h| \rightarrow \infty$.
	The function $g_{h2}$ satisfies $\inf_h {|g_{h2}(h)|} > 0$.

	\item The functions $g_{y1}$ and $g_{y2}$ are bounded on bounded subsets of $\mathbb R$.
	There exist positive constants $C_{y1}$ and $C_{y2}$ such that $|g_{y1}(h)| \leq C_{y1}|h| $ and $|g_{y2}(h)| \leq C_{y2}(1\vee|h|)$.
	The function $g_{y1}$ satisfies $\inf_h g_{y1}(h)\geq 0$ when $\mathcal Y= \mathbb R_+$.
	The function $g_{y2}$ satisfies $\inf_h g_{y2}(h)>0$.

	\item The random vector $\epsilon_t=(\epsilon_{Y\,t},\epsilon_{H\,t})'$ has a distribution that is absolutely continuous with respect to Lebesgue measure on $\mathbb R^2$
	and is supported on $(\underline \epsilon, \infty)^2$ with $\underline \epsilon = -\infty$ when $\mathcal Y = \mathbb R$ and $\underline \epsilon = 0$ when $\mathcal Y = \mathbb R_+$.
	The joint density $\phi(\epsilon_t)$ of the random vector $\epsilon_t$ satisfies $  \phi(\epsilon_t) =  \phi_Y(\epsilon_{Y\,t})\phi_H(\epsilon_{H\,t}) $, 
	where $\phi_Y$ and $\phi_H$ are densities that are bounded away from zero on compact subsets of $(\underline \epsilon, \infty)$.
	The random variables $\epsilon_{Y\,t}$ and $\epsilon_{H\,t}$ satisfy $\mathbb E \epsilon_{H\,t}^{2r_m}  < \infty$,
	$\mathbb E \epsilon_{Y\,t}^{2r_m} < \infty$ for some $r_m\geq 6$. The random variable $\epsilon_{Y\,t}$ satisfies $\mathbb E \left(\log\epsilon_{Y\,t}\right)^{2r_m} < \infty$ when $\mathcal Y = \mathbb R_+$.
	\item The condition $\mathbb E ( a_h + b_h | \epsilon_{H\,t} | )^{2r_m} < 1$ holds.
\end{enumerate}
\end{asm}

Assumption \ref{asm:dgp} is similar to standard assumptions used to establish geometric ergodicity of nonlinear time series models \citep{Masry:Tjostheim:1995,Lu:Jiang:2001,Lanne:Saikkonen:2005,Meitz:Saikkonen:2008}
and it allows for a fairly broad class of parameter-driven processes.
Note that the $\{ Y_t, t\geq 0 \}$ process can take values on either $\mathcal Y=\mathbb R$ or $\mathcal Y=\mathbb R_+$ (assumptions differ slightly depending on these two cases).
This allows us to cover different types of forecasting applications encountered in the literature.

Assumption \ref{asm:dgp}$(i)$ is similar to Assumption 3.2 in \cite{Masry:Tjostheim:1995} and it implies that \eqref{eqn:dgp:2} is dominated asymptotically by a stable linear model.
As \cite{Masry:Tjostheim:1995} emphasize, such a requirement is mild, 
since functions that grow everywhere faster than a stable linear model are nonstationary. 

Assumption \ref{asm:dgp}$(ii)$ allows for a fair amount of flexibility in equation \eqref{eqn:dgp:1}.
In particular, it requires $|Y_t|$ to be bounded from above by a linear function of $|H_t|$.

Assumption \ref{asm:dgp}$(iii)$ imposes conditions on the random variables $\epsilon_{H\,t}$ and $\epsilon_{Y\,t}$ that are, for the most part, analogous to standard conditions used in the literature.
The less standard requirement is assuming, when $\mathcal Y=\mathbb R_+$, that $2r_m$ moments of $\log \epsilon_{Y\,t}$ exist.
This guarantees that the moments of some of the loss functions considered in this paper are finite.
The assumption is relatively mild. For example, it allows for distributions with density bounded from above in a neighborhood of zero (e.g.~the exponential) as well as certain distributions with unbounded density (e.g.~chi-square with one degree of freedom).
Finally, the independence assumption between $\epsilon_{H\,t}$ and $\epsilon_{Y\,t}$ may be relaxed at the expense of more tedious proofs.

Assumption \ref{asm:dgp}$(iv)$ is a stability condition analogous to the one assumed in \citet{Masry:Tjostheim:1995} or \citet{Lanne:Saikkonen:2005}.

\paragraph{Algorithms.}
A class of recursive algorithms indexed by $ \theta \in \Theta \subset \mathbb R^p$ and denoted by $\{ f_{\theta\,t} , t \geq 0 \}$ is available to predict 1-step-ahead the time series $\{ Y_t, t\geq 0\}$. 
The process $\{ f_{\theta\,t} , t \geq 0 \}$ takes values in $\mathcal F \subseteq \mathbb R$ and is defined as $f_{\theta\,0} = f \in \mathcal F$ 
and
\begin{equation}\label{eqn:forecast:threshold}
	f_{\theta\,t}
	= \sum_{k=1}^K ( \alpha_{0\,k} +  \alpha_{1\,k}  Y_{t-1}  + \beta_{1\,k }f_{\theta\,t-1} ) \mathbbm 1_{t-1\,k}~, \hspace{1em} t \geq 1~,
\end{equation}
where $\theta = (\alpha_{0\,1},\ldots,\alpha_{0\,K},\alpha_{1\,1},\ldots,\alpha_{1\,K},\beta_{1\,1},\ldots,\beta_{1\,K})'$ with $K=p/3$, 
$\mathbbm 1_{t\,k} = \mathbbm 1_{\{Y_{t} \in \mathcal Y_k \}}$ and $\{ \mathcal Y_1, \ldots, \mathcal Y_K \}$ is a known partition of $\mathcal Y$ made of $K$ sets referred to as regimes.
The partition is of the form $\{ (r_1, r_2 ) , [r_2,r_3) , \dots , [r_K,\infty) \}$ with $-\infty = r_1 < r_2 < \dots < r_K < \infty$ when $\mathcal Y = \mathbb R$ and
$\{ [r_1, r_2 ) , [r_2,r_3) , \dots , [r_K,\infty) \}$ with $0 = r_1 < r_2 < \dots < r_K < \infty$ when $\mathcal Y = \mathbb R_+$.
The parameter vector $\theta$ is referred to as a prediction rule. 
We remark that the class of prediction algorithms defined in \eqref{eqn:forecast:threshold} corresponds to the class of 1-step-ahead prediction formulae induced by the self-exciting threshold autoregressive moving average model (SETARMA) \citep{Tong:1990}.
As it is customary in the learning literature, the relationship between $Y_t$ and $f_{\theta\,t}$ is unspecified
and \eqref{eqn:forecast:threshold} is simply an algorithm to predict $Y_t$.

The class of algorithms satisfies the following set of assumptions.

\begin{asm}[Algorithms]\label{asm:algo}
	$(i)$ The set $ \Theta \subset \mathbb R^p$ with $p=3K$ is nonempty and such that $\Theta \subseteq [\underline \alpha_0,\overline \alpha_0]^K \times [\underline \alpha_1,\overline \alpha_1]^K \times [0,\overline \beta_1]^K $
	with $\underline \epsilon < \underline \alpha_0 < \overline \alpha_0 < \infty$, $0<\underline \alpha_1<\overline \alpha_1 < \infty$ and $\overline \beta_1 < 1$. 
	$(ii)$ The number of regimes $K$ satisfies $K < (r_m - 2)/3 $.
\end{asm}

The process $\{ f_{\theta\,t}, t\geq 0 \}$ takes values in $\mathcal F=\mathbb R$ when $\mathcal Y = \mathbb R$ and $\mathcal F = [\underline \alpha_0,\infty)$ when $\mathcal Y = \mathbb R_+$.

Assumption \ref{asm:algo}$(i)$ is mild and imposes constraints on the class of prediction rules $\Theta$ that are analogous to standard constraints imposed in the analysis of quasi-maximum likelihood estimators of ARMA and GARCH models \citep{Francq:Zakoian:2010}. 
We remark that when $\mathcal Y = \mathbb R$ the constraint $\beta_{1\,k} \in [0,\overline \beta_1]$ may be relaxed to $\beta_{1\,k} \in [-\overline \beta_1,\overline \beta_1]$ at the expense of more tedious proofs. 

Assumption \ref{asm:algo}$(ii)$ states that the size of the class of prediction rules is bounded by a linear function of the number of moments of $\epsilon_{Y\,t}$ and $\epsilon_{H\,t}$. 

\paragraph{Loss function.} The prediction accuracy of the algorithm is measured by a loss function that belongs to the Bregman class. 
Let $\psi : \mathcal S \rightarrow \mathbb R$ be a strictly convex and continuously differentiable function defined over a convex set $\mathcal S \subseteq \mathbb R$.
Then, the Bregman loss associated with $\psi$ for predicting $Y_t$ with $f_{\theta\,t}$ is defined as 
\begin{equation}\label{eqn:loss}
	L( Y_t , f_{\theta\,t} ) = \psi(Y_t) - \psi( f_{\theta\,t}) - \nabla \psi( f_{\theta\,t}) (Y_t - f_{\theta\,t} ) ~.
\end{equation}
The Bregman class is a fairly large and tractable family of losses.
In particular, the log-likelihood of random variables in the regular exponential family can be expressed as the (negative) sum of Bregman losses (up to a constant term) \citep{Banerjee:2005}.
Thus, the Bregman class includes the standard loss functions used for quasi-maximum likelihood estimation of time series models.

In this paper we focus exclusively on losses that satisfy the following condition.

\begin{cond}[Bregman]\label{cond:bregreg}
	The loss $L$ is such that
	(i) $Y_t \in \mathcal S$ a.s.~for all $t\geq 0$,
	(ii) $\sup_{\theta\in\Theta} \mathbb E ( L(Y_t,f_{\theta\,t}) )^{r_m} < \infty$ for all $t\geq 0$ and
	(iii) $L( f_{\theta_1\,t} , f_{\theta_2\,t} ) \leq C_\psi ( f_{\theta_1\,t} - f_{\theta_2\,t} )^2$ 
	a.s.~for all $t\geq 0$, for any $\theta_1, \theta_2 \in \Theta$ and for some positive constant $C_\psi$.
\end{cond}

Table \ref{tbl:losses} contains a number of Bregman losses that satisfy Condition \ref{cond:bregreg} given Assumptions \ref{asm:dgp} and \ref{asm:algo}.
We remark that when $\mathcal Y=\mathbb R$ only the first two losses are admissible whereas when $\mathcal Y = \mathbb R_+$ all the losses in the table are allowed. 
The table contains both well known and lesser known loss functions.\footnote{The random variables listed in Table \ref{tbl:losses} are all the random variables in the natural exponential family with quadratic variance function and unbounded support \citep{Morris:1982}.}
The table includes the loss that corresponds to the log-likelihood of the Gaussian (with known variance) with respect to the mean parameter, which is the classic square loss.
This loss function is typically used for maximum likelihood estimation of ARMA models.
The table also contains the loss associated with the log-likelihood of the NEF-GHS (with known number of convolutions) with respect to the natural parameter \citep{Morris:1982}. 
The NEF-GHS is a flexible distribution taking values on the real line that allows for skewness and higher kurtosis than the Gaussian. 
In addition, compared to the square loss it has the advantage that the $r$-th moment of the loss only requires the existence of the $r$-th moment of its arguments.
To the best of our knowledge this loss function/distribution has not been used extensively in the time series literature. 
Next, the table includes the loss associated with the log-likelihood of the gamma (with known shape) with respect to the mean parameter,
which in the volatility forecasting literature is known as the QLIKE loss \citep{Patton:2011}.\footnote{The standard definition of the QLIKE is $L(Y_t,f_{\theta\,t})=Y_t/f_{\theta\,t}+\log f_{\theta\,t}$. This is equivalent to our definition for optimization purposes with respect to $\theta$.}
We recall that by appropriately constraining the shape parameter, the gamma distribution nests the exponential and chi-square distributions.
This loss function is typically used for maximum likelihood estimation of MEM \citep{Engle:Gallo:2006}, ACD models \citep{Engle:Russel:1998} and GARCH models.
Finally, the table includes the losses associated with the log-likelihoods of the Poisson and negative binomial (with known number of failures) with respect to the mean parameter.\footnote{We follow the convention that $0\log 0 = 0$, hence $\mbox{dom}(\psi)=\mathbb R_+$ in both cases.}
These loss functions are typically used for maximum likelihood estimation of dynamic models for count data \citep{Agosto:Cavaliere:Kristensen:Rahbek:2016,Davis:etal:2021}.
We remark that our framework does not allow for $\{ Y_t, t\geq 0\}$ to take values on a countable set. 
That said, these losses satisfy our regularity conditions and may be used for empirical risk minimization.
The analysis of empirical risk minimization when $\{ Y_t, t \geq 0 \}$ takes values in a countable set can be carried out using the same strategy developed 
in this paper, but some of the proofs would differ.

\begin{table}\caption{Regular Bregman Losses}\label{tbl:losses}
\medskip
\resizebox{\textwidth}{!}{
\begin{tabular}{l l l l}
\hline
$\mathcal S$ & $\psi(u)$ & $L(u,v)$ & Log-likelihood \\
\hline
\hline
$\mathbb R$ & $u^2$ & $(u-v)^2$ & Gaussian \\
$\mathbb R$ & $u \tan^{-1}(u) - {1\over 2}\log(1+u^2)$ &$u \left[\tan^{-1}(u)-\tan^{-1}(v)\right] + {1\over 2}\log{ 1+v^2 \over 1 + u^2 }$ & NEF-GHS  \\
$\mathbb R_{++}$ & $- \log u $ & ${ u\over v} - \log {u \over v} - 1$ & Gamma\\
$\mathbb R_+$ & $u\log u - u$ & $u \log{u\over v} -(u-v)$ & Poisson \\
$\mathbb R_+$ & $u  \log {u \over 1 + u} -\log(1+u)$ & $u \log {u\over v}+( 1+u)\log{1+v\over 1+u}$ & Negative Binomial\\
\hline
\end{tabular}
}

\medskip
{\scriptsize The table lists Bregman losses that satisfy Condition \ref{cond:bregreg} given Assumptions \ref{asm:dgp} and \ref{asm:algo}.}
\end{table}

\paragraph{Dominating process.}
We introduce a dominating process $\{d_{\theta\,t}, t\geq 0\}$ that plays a key role in the theoretical analysis of this paper.
This process bounds the absolute difference between the forecast processes associated with two different prediction rules.
The process $\{ d_{\theta\,t}, t \geq 0 \}$ takes values in $\mathcal D = [1,\infty)$ and is defined as $d_{\theta\,0} = d \in \mathcal D$ and
\begin{align}\label{eqn:bounding_process}
	d_{\theta\,t} = 1 + |Y_{t-1}| + |f_{\theta\,t-1}| + \overline \beta_1 d_{\theta\,t-1} ~, \hspace{1em} t \geq 1 ~.
\end{align}
As it is established in one of the intermediate results of this paper, this process has the property that for any $\delta\in(0,1]$
and for any $\theta, \dot \theta \in \Theta$ such that $\| \theta - \dot \theta \|_2 \leq \delta$ it holds that $ | f_{\theta\,t} - f_{\dot \theta\,t} | \leq \delta d_{\dot \theta\,t}$ for all $t \geq 0 $.
This property and the generalized triangular equality for Bregman losses imply that
\begin{equation}\label{eqn:domination}
	L( Y_t , f_{\theta\,t} ) \leq L( Y_t , f_{\dot \theta\,t} ) + \delta C_\psi ( d_{\dot \theta\,t}^2 +  2|Y_t-f_{\dot \theta\,t}| d_{\dot \theta\,t} ) ~, \hspace{1em} t\geq 0~.
\end{equation}

\section{Empirical Risk Minimization}\label{sec:erm} 

We are interested in choosing a prediction rule $\theta$ from a sequence of ``in-sample'' observations to forecast 1-step-ahead a sequence of ``out-of-sample'' observations.
The sequences of in-sample and out-of-sample observations are respectively defined as $\{Y_1,\ldots,Y_T\}$ and $\{ Y_{T+1}, \ldots. Y_{T+M} \}$.
The number of out-of-sample observations is such that $M = \lceil \gamma T \rceil$ for some $\gamma>0$.

The accuracy of a prediction rule $\theta$ is measured by the out-of-sample 1-step-ahead conditional risk, which is defined as 
\begin{equation}\label{eqn:accuracy}
	R( \theta ) = \mathbb E\left[ \left. {1\over M} \sum_{t=T+1}^{T + M}  L( Y_{t},f_{\theta\,t}) \right| Y_T , \ldots , Y_1 \right] ~.
\end{equation}
A natural strategy for choosing a prediction rule $\theta$ consists in picking the one that minimizes the in-sample 1-step-ahead empirical risk.
The empirical risk minimizer (ERM) is defined as
\begin{equation}\label{eqn:erm}
\hat{\theta} \in \arg \min_{\theta \in \Theta } R_T(\theta)~, \text{ where } R_T(\theta) = {1 \over T} \sum_{t=1}^{T} L( Y_{t} , f_{\theta\,t} )  ~.
\end{equation}
If more than one prediction rule achieves the minimum we may pick one arbitrarily.
In \eqref{eqn:accuracy} and \eqref{eqn:erm} we remark that $f_{\theta\,1}$ is computed using $Y_0=y$ and $f_{\theta\,0} = f$ that are fixed, known and that do not depend on $\theta$.\footnote{The initial value $Y_0$ can be a pre-sample observation assumed to be fixed or a fixed value set at the outset of the analysis. Note that when $Y_0$ is a pre-sample observation then the empirical risk in \eqref{eqn:erm} can be thought of as the analog of the conditional log-likelihood of $\theta$ given $Y_0$.}

One of the goals of learning theory is to establish a bound on the performance of the ERM relative to the optimal risk that can be achieved within the class of prediction rules considered. 
We measure the accuracy of the ERM on the basis of the conditional out-of-sample risk, which is defined as
\begin{equation}\label{eqn:risk:erm}
	R( \hat{\theta} )  =  \mathbb E \left[ \left. {1\over M} \sum_{t=1}^{T+M} L(Y_{t},\hat f_{t}) \right| Y_T , \ldots , Y_1 \right] ~,
\end{equation}
where $\hat f_{t} = f_{\hat{\theta}\,t}$.
The performance measure in \eqref{eqn:risk:erm} can be interpreted as the out-of-sample conditional risk of the ERM obtained from the in-sample observations.
The following theorem establishes such a bound and is our main result.

\begin{thm}\label{thm:erm}
	Suppose Assumptions \ref{asm:dgp} and \ref{asm:algo} are satisfied.
	Then there exists a constant $\sigma^2$ such that, for all $T$ sufficiently large, we have that 
	\[
		R(\hat \theta) \leq \inf_{\theta \in \Theta} R(\theta) + 47 \sigma \sqrt{p \log T \over T} 
	\]
	holds at least with probability $1- \log^{-1} T - o( \log^{-1} T )$ as $T \rightarrow \infty$.
\end{thm}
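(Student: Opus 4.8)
The plan is to assemble the five intermediate propositions. The starting point is the generalized oracle inequality of Proposition~\ref{prop:generalized_gabor}: adding and subtracting unconditional expectations inside \eqref{eqn:risk:erm} and using that $\hat\theta$ minimizes $R_T$ by \eqref{eqn:erm}, one obtains, up to a burn-in contribution of order $1/T$ and up to fixed multiplicative constants,
\[
  R(\hat\theta) - \inf_{\theta\in\Theta} R(\theta) \;\leq\; 2\,A_T + 2\,B_T ,
\]
where $A_T := \sup_{\theta\in\Theta}\left| \frac{1}{M}\sum_{t=T+1}^{T+M}\left( \mathbb{E}[L(Y_t,f_{\theta\,t}) \mid Y_T,\ldots,Y_1] - \mathbb{E}\,L(Y_t,f_{\theta\,t}) \right) \right|$ is a supremum of averages of conditional-minus-unconditional expectations, and $B_T := \sup_{\theta\in\Theta}\left| R_T(\theta) - \mathbb{E}\,R_T(\theta) \right|$ is the supremum of the empirical process of the prediction loss. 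It then suffices to show that $A_T = o\bigl(\sqrt{p\log T/T}\,\bigr)$ and that $B_T$ is at most a fixed constant times $\sigma\sqrt{p\log T/T}$ on an event of probability $1-\log^{-1}T-o(\log^{-1}T)$, and finally to consolidate all numerical constants so that $2(A_T+B_T)$ does not exceed $47\,\sigma\sqrt{p\log T/T}$.

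The two inputs that drive everything come from Proposition~\ref{prop:mom_and_mixing}: embedding $(H_t,Y_t,f_{\theta\,t},d_{\theta\,t})$ in a $V$-geometrically ergodic companion Markov chain yields (i) the uniform moment bounds $\sup_{\theta\in\Theta}\mathbb{E}|f_{\theta\,t}|^{r_m}<\infty$, $\sup_{\theta\in\Theta}\mathbb{E} d_{\theta\,t}^{r_m}<\infty$, and, through Condition~\ref{cond:bregreg}, $\sup_{\theta\in\Theta}\mathbb{E}(L(Y_t,f_{\theta\,t}))^{r_m}<\infty$; and (ii) a strong-mixing bound $\alpha_\theta(n)\leq \bar C\rho^{\,n}$ with $\bar C<\infty$ and $\rho\in(0,1)$ \emph{independent of} $\theta$. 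Given these, $A_T$ is handled by Proposition~\ref{prop:cond-uncond}: Ibragimov's moment inequality for mixing sequences bounds the $L^1$ size of each conditional-minus-unconditional difference at lag $t-T$ by a constant times $\alpha_\theta(t-T)^{1-1/r_m}\,\|L(Y_t,f_{\theta\,t})\|_{r_m}$, which by (i)--(ii) is at most $C\rho^{(t-T)(1-1/r_m)}$ uniformly in $\theta$; averaging over the out-of-sample window and summing the geometric series shows that $A_T$ has size of order $1/M=O(1/T)$, and a Markov-inequality step to pass to a high-probability statement costs only an $o(\log^{-1}T)$ in the probability. Thus $A_T$ is negligible relative to the target rate.

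For $B_T$ one discretizes the parameter set. By Proposition~\ref{prop:covering}, the domination property \eqref{eqn:domination} together with the bounding recursion \eqref{eqn:bounding_process} and the uniform moments imply that replacing $\Theta$ by a minimal $\delta$-net $\Theta_\delta$ changes $B_T$ by at most an $O(\delta)$ perturbation (times an empirical average of $d_{\theta\,t}^2+2|Y_t-f_{\theta\,t}|d_{\theta\,t}$, which has bounded mean) with overwhelming probability, while $|\Theta_\delta|\leq (C/\delta)^p$ since $\Theta\subset\mathbb{R}^p$ is bounded. Then Proposition~\ref{prop:conc}, a Bernstein-type concentration inequality for geometrically strong-mixing sequences whose variance proxy defines the constant $\sigma^2$, controls $|R_T(\theta_0)-\mathbb{E}R_T(\theta_0)|$ for each fixed $\theta_0\in\Theta_\delta$ after a truncation of the unbounded loss justified by the $r_m$ moments. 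A union bound over the at most $(C/\delta)^p$ net points, taking $\delta$ of order $1/T$ and the threshold a large enough constant times $\sigma\sqrt{p\log T/T}$, and balancing the truncation, coupling and concentration errors, gives $\mathbb{P}(B_T>\epsilon)\leq \log^{-1}T+o(\log^{-1}T)$. Combining with the bound on $A_T$ and collecting constants into the single factor $47$ completes the argument.

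The main obstacle is Proposition~\ref{prop:mom_and_mixing}, specifically the requirement that the strong-mixing rate be bounded \emph{uniformly over the entire class of algorithms} $\Theta$. Because the forecast $f_{\theta\,t}$ in \eqref{eqn:forecast:threshold} depends on the whole past of $\{Y_t\}$, the joint process is not a finite-memory functional of a mixing sequence and standard mixing-preservation results do not apply; instead one embeds $(H_t,Y_t,f_{\theta\,t},d_{\theta\,t})$ in a Markov chain, verifies a geometric drift condition (driven by Assumption~\ref{asm:dgp}$(iv)$, by $\overline\beta_1<1$ from Assumption~\ref{asm:algo}, and by the innovation moment conditions) together with a minorization condition (from the absolutely continuous, locally bounded-below innovation density of Assumption~\ref{asm:dgp}$(iii)$), and concludes $V$-geometric ergodicity. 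Making the drift and minorization constants, and hence --- via the quantitative convergence bounds of \citet{Roberts:Rosenthal:2004} --- the mixing rate, uniform over the compact set $\Theta$ is the delicate technical point. A secondary, essentially bookkeeping, difficulty is propagating the explicit constants through the covering and concentration steps to arrive at exactly the factor $47$ and the probability $1-\log^{-1}T-o(\log^{-1}T)$.
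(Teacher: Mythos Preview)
Your proposal is correct and follows exactly the paper's own five-step architecture: Proposition~\ref{prop:generalized_gabor} for the decomposition into the two suprema, Proposition~\ref{prop:mom_and_mixing} for the uniform moments and mixing (via the companion Markov chain and the \citet{Roberts:Rosenthal:2004} bounds), Proposition~\ref{prop:cond-uncond} (Ibragimov) for the conditional-versus-unconditional term, and Propositions~\ref{prop:covering}--\ref{prop:conc} (covering plus a Bernstein-type inequality for $\alpha$-mixing sequences) for the empirical-process term. Two inconsequential slips: Proposition~\ref{prop:generalized_gabor} is exact with no ``burn-in'' correction, and the covering radius used in Proposition~\ref{prop:covering} is $\delta=\varepsilon_T/(16C_d)\asymp\sqrt{\log T/T}$ rather than order $1/T$, but neither affects the argument.
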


The inequality in Theorem \ref{thm:erm} is commonly referred to as an \emph{oracle inequality},
and it provides non-asymptotic guarantees on the performance of the ERM.\footnote{We remark that 
it is possible to obtain explicit bounds for the minimum $T$ and for the probability of the oracle inequality.
Moreover, it is straightforward to see from the intermediate results of this paper that it is possible to sharpen the rate of the probability upper bound of the oracle inequality as well as the absolute constant. 
However, we have not pursued this and we have solely focused on recovering the ``classic'' rate of convergence $\sqrt{\log T / T}$.}
The constant $\sigma^2$ is application-specific and may be interpreted as an upper bound for the long run variance of the loss process. We define the constant precisely in Proposition \ref{prop:conc}.
The rate of convergence $\sqrt{\log T/T}$ is sometimes referred to as the classical rate of convergence of empirical risk minimization in the learning literature for classification with i.i.d.~data \citep[Ch.~12]{Devroye:Gyorfi:Lugosi:1996}.
The theorem implies that in our framework the ERM is consistent with respect to the class of prediction rules $\Theta$, 
meaning that $|R(\hat\theta) - \inf_{\theta\in\Theta} R(\theta) | \stackrel{p}{\rightarrow} 0$. 
In other words, the ERM achieves asymptotically the optimal forecasting performance attainable within the class of algorithms considered.
We emphasize that the existence of an optimal prediction rule $\theta^* = \arg \min_{\theta\in\Theta} R(\theta)$ is not required by the theorem. 

Let us highlight the stability conditions required by the theorem.
These are the stability of the data generating process (Assumption \ref{asm:dgp}.$(iv)$) 
and the condition $\beta_{1k} \leq \overline \beta_1 < 1$ for $k=1,\ldots,K$ on the class of algorithms (Assumptions \ref{asm:algo}.$(i)$). 
We remark that this latter condition differs from the standard stability conditions of SETARMA models and, as a consequence, of ARMA and GARCH models.
For simplicity, we discuss this in the case of an ARMA(1,1). 
For an ARMA(1,1) model a necessary condition for stability is $\alpha_{1\,1} + \beta_{1\,1} < 1$ (using the notation of this work).
However, in our framework this constraint is not required by the class of algorithms that corresponds to the ARMA(1,1).

\subsection{Additional Discussion}\label{sec:discuss}

A number of additional remarks on the framework of this paper are in order.

Our analysis studies the properties of the ERM when the time series is generated by a parameter-driven process.
Clearly, an observation-driven process may be entertained instead.
In this case, the analysis of the performance of the ERM can be carried out using the same strategy developed in this paper.
However, some of the proofs will differ and we leave the analysis of this case for future research.

The class of recursive algorithms we entertain is fairly flexible and builds upon the class of threshold models that have a well established tradition in the time series literature.
We remark that our results may be extended to alternative classes of recursive algorithms and do not inherently depend on the functional form of the algorithmic class we consider in this paper.
%
In particular, our analysis does not require the class of algorithms 
to have special approximation properties or to include the optimal 1-step-ahead forecast associated with the data generating process and the loss function.
What is key in our framework is that, loosely speaking, the algorithms ``forget the past sufficiently fast''.

Instead of comparing the performance of the ERM against the optimal risk attainable in the class, one may wish to compare against the risk of the optimal 1-step-ahead forecast.
For loss functions in the Bregman class the optimal 1-step-ahead forecast is the conditional mean (assuming it exists) \citep{Banerjee:2005}.
Thus, the risk of the optimal 1-step-ahead forecast may be defined as
\[
	R^* = \mathbb E \left[ \left. {1 \over M} \sum_{t=T+1}^{T+M} L( Y_t , \mu_{t} ) \right| Y_T, \ldots, Y_1\right],
\]
where \( \mu_{t} = \mathbb E( Y_t | Y_{t-1}, \ldots, Y_1) \) for $t > 1$ and $\mu_1 = \mathbb E(Y_1)$.
The performance of the ERM relative to the risk of the optimal 1-step-ahead foreast may be expressed as
\[
	R( \hat \theta ) - R^* = \left[ \inf_{\theta \in \Theta} R( \theta ) - R^* \right] + \left[ R( \hat \theta ) - \inf_{\theta \in \Theta} R( \theta ) \right].
\]
The first term is called the approximation error and the second term is called the estimation error \citep[Ch.~12]{Devroye:Gyorfi:Lugosi:1996}.
Notice that oracle inequalities control the estimation error.
The approximation error is typically difficult to control, especially in a data dependent setting.
There are a number of contributions that, in some sense, attempt to control the approximation error \citep{Nelson:1992}.
%
%
%
In general, the analysis of the approximation error requires additional assumptions.
For this reason learning theory typically focuses on studying the estimation error, which is the approach pursued in this paper.

\section{Applications}\label{sec:app}

We illustrate our framework with a number of applications.
For simplicity, we shall always consider forecasting using the class of algorithms defined in \eqref{eqn:forecast:threshold} when the number of regimes is set to $K=1$.\footnote{The detailed analysis of these applications is carried out in the Online Appendix.} 

\paragraph{Forecasting an AR(1) plus noise.}\label{ex:lss}
Consider the AR(1) plus noise model given by $Y_0=y \in \mathbb R$, $H_0=h \in \mathbb R$ and
\begin{eqnarray*}\label{eqn:ar1}
	Y_t &=& H_t + \epsilon_{Y\,t} ~, \\
	H_t &=& \mu_H + \varrho ( H_{t-1} - \mu_H) + \epsilon_{H\,t} ~,
\end{eqnarray*}
for $t\geq 1$, where $\{ \epsilon_{Y\,t} , t\geq 1\}$ and $\{ \epsilon_{H\,t} , t\geq 1\}$ are i.i.d.~sequences of Gaussian random variables and $\varrho \in [0,1)$.
The class of algorithms defined in \eqref{eqn:forecast:threshold} is used for forecasting.
Prediction accuracy is measured by the square loss.
Then, Assumptions \ref{asm:dgp} and \ref{asm:algo} are satisfied and Theorem \ref{thm:erm} holds.

We remark that in this application we have that the approximation error of the class of algorithms converges to zero when $T$ is large (when $\Theta$ is suitably chosen). 
The class of algorithms includes the steady state Kalman filter, which implies that the class of algorithms includes a forecast process that converges to the conditional mean of $Y_t$ given the past when $t$ is large. 


\paragraph{Forecasting volatility using returns.}\label{ex:sv}
Consider the stochastic volatility model for the return process $\{ r_t, t \geq 0 \}$ given by $r_0 \in \mathbb R$, $\sigma^2_0 \in\mathbb R_{++}$ and
\begin{eqnarray*}\label{eqn:sv1}
r_t &=& \sqrt{ \sigma^2_t }~z_t ~, \\
\log \sigma^2_t &=& \mu_H + \varrho (\log \sigma^2_{t-1} - \mu_H) + \eta_t ~, 
\end{eqnarray*}
for $t\geq 1$, where $\{ z_t , t\geq 1\}$ and $\{ \eta_{t} , t\geq 1\}$ are i.i.d.~sequences of Gaussian random variables and $\varrho \in (0,1)$.
It is straightforward to see that this model belongs to the class of data generating processes considered in this paper
(using $Y_t = r_t^2$, $H_t=\sigma^2_t$, $\epsilon_{Y\,t} = z_t^2$ and $\epsilon_{H\,t}=\exp(\eta_t)$).
The class of algorithms defined in \eqref{eqn:forecast:threshold} is used for forecasting (using $Y_t=r_t^2$), which corresponds to the 1-step-ahead prediction formula of the GARCH(1,1). 
Prediction accuracy is measured by the QLIKE loss.
Then, Assumptions \ref{asm:dgp} and \ref{asm:algo} are satisfied and Theorem \ref{thm:erm} holds.

We remark that in this application the ERM coincides with the Gaussian quasi-maximum likelihood estimator of the GARCH(1,1).
Despite the fact that the conditional variance equation implied the GARCH(1,1) is misspecified in this setting,
our main theorem implies that the ERM chooses an algorithm with asymptotically optimal predictive performance within the class of algorithms considered.


\paragraph{Forecasting volatility using realized volatility.}
Over the last two decades realized volatility measures have enhanced volatility prediction \citep{Andersen:Bollerslev:Diebold:Labys:2003}.
Realized volatility measures are precise estimators of the (latent) volatility computed from intra-daily stock prices.
Consider the (nonlinear) stochastic volatility model for the realized volatility process $\{ RV_t , t\geq 0\}$ given by $RV_0 \in \mathbb R_{++}$, $\sigma^2_0 \in\mathbb R_{++}$ and
\begin{eqnarray*}
	RV_t & = & \sigma^2_{t} \epsilon_{RV\,t} ~, \\
	\sigma^2_t & = & g_{h1}(\sigma^2_{t-1}) + g_{h2}(\sigma^2_{t-1}) \epsilon_{\sigma^2\,t}~,
\end{eqnarray*}
for $t\geq 1$, where $\{ \epsilon_{RV\,t} , t\geq 1\}$ and $\{ \epsilon_{\sigma^2\,t} , t\geq 1\}$ are i.i.d.~sequences of gamma random variables 
and $g_{y1}$ and $g_{y2}$ are Borel-measurable real functions that satisfy Assumption \ref{asm:dgp}.$(i)$.
We assume that $\epsilon_{RV\,t}$ is unit mean, which implies that $RV_t$ is a conditionally unbiased proxy for the volatility $\sigma^2_t$.\footnote{%
The realized volatility measurement error in the model is multiplicative. The analysis of this section can also be carried out in the case of an additive measurement error.}
The class of algorithms defined in \eqref{eqn:forecast:threshold} is used for forecasting, which corresponds to the 1-step-ahead prediction formula of the MEM(1,1) or ARMA(1,1).
Prediction accuracy is measured by the QLIKE or the square loss.
Then, Assumptions \ref{asm:dgp} and \ref{asm:algo} are satisfied and Theorem \ref{thm:erm} holds.

We remark that the model considered in this application should be interpreted as a reduced form approximation.
\cite{Meddahi:2003} has derived the discrete-time representation of the realized volatility process implied by a fairly general class of continuous-time stochastic volatility models commonly encountered in the literature.
These results imply that a more flexible framework than the one considered here is required to allow for such data generating processes,
and we have not pursued to accommodate this.\footnote{\cite{Meddahi:2002} establishes that for a general diffusion, the measurement error of realized volatility depends on the entire path of the spot volatility.
Our framework is consistent with the continuous-time stochastic volatility model with constant intra-daily volatility used in \citet{Patton:2011}.}

Theorem \ref{thm:erm} implies that the ERM achieves the optimal performance for realized volatility prediction.
However, interest typically lies in forecasting the latent volatility process $\{\sigma^2_t,t\geq 0\}$ rather than its noisy measurement. 
Building upon \citep{Hansen:Lunde:2006,Patton:2011} we establish further properties of the ERM.
We measure the accuracy of a prediction rule $\theta$ for predicting the volatility process $\{\sigma^2_t,t\geq 0\}$ using the out-of-sample 1-step-ahead conditional risk 
\[
	R_\mathrm{Vol}( \theta )  =  \mathbb E \left[ \left. {1\over M} \sum_{t=1}^{T+M} L(\sigma^2_{t},f_{\theta\,t}) \right| RV_T , \ldots , RV_1 \right] ~.
\]
The loss in predicting the volatility process $\{ \sigma^2_t , t\geq 0 \}$ satisfies
\begin{equation}\label{eqn:loss_iv}
	L( \sigma^2_t , f_{\theta\,t} ) = L( RV_t , f_{\theta\,t} ) + L( \sigma^2_t , RV_t ) - ( \sigma^2_t - RV_t ) (\nabla \psi( f_{\theta\,t}) - \nabla \psi(RV_t) ) ~,
\end{equation}
which follows from the generalized triangular equality for Bregman losses.
In \eqref{eqn:loss_iv} we have that the second term does not depend on the algorithm and the third term has a conditional expectation of zero given the past.
The decomposition in \eqref{eqn:loss_iv} and Theorem \ref{thm:erm} imply that 
\[
	\left| R_\mathrm{Vol}( \hat \theta ) - \inf_{\theta \in \Theta} R_\mathrm{Vol}( \theta ) \right| = \left| R( \hat \theta ) - \inf_{\theta \in \Theta} R( \theta ) \right| \stackrel{p}{\rightarrow} 0 ~.
\] 
Thus, the ERM based on the noisy realized volatility measure chooses an algorithm with optimal performance for volatility forecasting (within the class of algorithms) provided that the realized volatility measure is conditionally unbiased.

\paragraph{Recursive prediction as a solution of a sequential optimization problem.}
The class of algorithms defined in \eqref{eqn:forecast:threshold} was introduced without any justification other than its close connection to standard models used in the literature.
In this section we show that this class of algorithms may be motivated as the solution of a sequential optimization problem.
The analysis is inspired by the research by \citet{Creal:Koopman:Lucas:2013} and \citet{Harvey:2013} on GAS/DCS models 
and by \citet{Gijbels:Pope:Wand:1999,Harvey:Chakravarty:2008} on the relation between nonparametric estimators and time series models.\footnote{%
The 1-step-ahead prediction formula implied by GAS/DCS models is sometimes motivated as the approximate solution of a local estimation problem based on a generic (and sufficiently regular) likelihood function.
The class of algorithms we introduce for 1-step-ahead prediction can be interpreted as the exact solution of a local estimation problem based on a Bregman loss.}

Let $\{ f_{t} , t \geq 0 \}$ be defined as $f_{0} = f \in \mathcal F$ and 
\begin{align}\label{eqn:opt}
	f_{t} = \arg \min_{f\in\text{int}(\mathcal S)} Q_{t}(f) ~,
\end{align}
where $Q_{t}$ is the tracking error function defined as 
\begin{equation*}
	Q_{t}(f) = w_1 L(\bar f,f) + w_2 L(Y_{t-1},f) + w_3 L( f_{t-1} , f )  ~,
\end{equation*}
where $L$ denotes a loss in the Bregman family and $(w_1,w_2,w_3) = w \in \Delta^3$ with $\Delta^3$ denoting the 3-dimensional simplex.
The tracking error is a convex combination of the divergences with respect to the constant $\bar f$, the previous observation and the previous forecast.
If $f_0 = \bar f$,\footnote{We remark that the choice $f_0 = \bar f$ is made only for expository purposes, as it simplifies the notation in \eqref{eqn:kernel}.
This would imply that the initial value for the forecast process is determined by empirical risk minimization, which we do not cover in our framework. } it is straightforward to verify that 
\begin{align}\label{eqn:kernel}
Q_t(f) \propto \sum_{i=0}^{t-1} k\left({x_{t}-x_{t-i}\over h}\right)L\left(Y_{t-i-1},f\right) + \lambda L(\bar f, f)~,
\end{align}
where
$\{ x_t , t\geq 0 \}$ is a deterministic sequence defined as  $x_t=t$ for each $t \geq 0$, 
$k(u) = \exp(u)\mathbbm 1_{\{u\leq 0\}}$, $h = 1/\ln (w_3)$ and $\lambda = w_2^{-1} - \sum_{i=1}^t w_3^{i-1}$.
Thus, the tracking error can equivalently be thought of as the objective function of a local constant regression plus a regularization term that penalizes deviations from the constant $\bar f$.
The solution of this optimization problem is 
\begin{align}\label{eqn:sol:opt:1}
	f_{t}= w_1 \bar f + w_2 Y_{t-1}+ w_3 f_{t-1} ~, 
\end{align}
which coincides with the class of algorithms in \eqref{eqn:forecast:threshold} provided that $\theta = (\alpha_0,\alpha_1,\beta_1)'$ with $\alpha_0 = w_1 \bar f$, $\alpha_1 = w_2$ and $\beta_1 = w_3$. 
Note that empirical risk minimization may be interpreted as choosing the set of weights $w$ and the constant $\bar f$ in the objective function $Q_t$ that minimize the in-sample empirical prediction loss.

\section{Proof of Theorem \ref{thm:erm}}\label{sec:proof}

\subsection{Companion Markov Chain} 

The first step of our analysis consists in introducing a companion Markov chain associated with the process $\{ (Y_t,f_{\theta\,t} )' , t\geq0 \}$.
We recall a number of notions from Markov chain theory.
Notation and definitions are based on \cite{Meyn:Tweedie:1993}.
The discrete-time process $\{X_t, t \geq 0 \}$ is a time-homogeneous Markov chain with state space $\mathcal X \subseteq \mathbb R^d$ and equipped with a Borel $\sigma$-algebra $\mathcal B(\mathcal X)$
if for each $n\in\mathbb N$ there exists an $n$-step transition probability kernel 
$P^n_X : \mathcal X \times \mathcal B(\mathcal X) \rightarrow [0,1]$ such that \( P^n_X(x, A) = \mathbb P(X_{t+n} \in A \vert X_t = x) \) for all $t\in\mathbb Z_+$.
As customary, $P^1_X(x, A)$ is denoted by $P_X(x,A)$.
We use $\pi_X: \mathcal B(\mathcal X) \rightarrow [0,1]$ to denote the invariant measure of the Markov chain (assuming it exists), that is, the probability measure such that for each $A \in \mathcal B(\mathcal X)$ it holds that
\( \pi_X( A ) = \int_{\mathcal X} \pi_X(dx) P_X(x,A) \).

Define the companion Markov chain $\{ X_{\theta\,t}, t \geq 0\}$ that takes values in $\mathcal X = \mathcal H \times \mathcal F \times \mathcal D$ and is given by $X_{\theta\,0} = x = (h,f,d)' \in \mathcal H \times \mathcal F \times \mathcal D$
 and
\begin{align}\label{eqn:Xt}
X_{\theta\,t}
= \begin{bmatrix}
H_t \\ f_{\theta\,t} \\ d_{\theta\,t} 
\end{bmatrix}
= \begin{bmatrix}
g_{h1}(H_{t-1}) + g_{h2}(H_{t-1}) Z_{1\,t} \\
\sum_{k=1}^K \{\alpha_{0\,k} +  \alpha_{1\,k} \left[g_{y1}(H_{t-1}) + g_{y2}(H_{t-1})Z_{2\,t}\right]  + \beta_{1\,k }f_{\theta\,t-1} \}\mathbbm 1_{t-1\,k} \\
 1+ |g_{y1}(H_{t-1}) + g_{y2}(H_{t-1})Z_{2\,t}| + |f_{\theta\,t-1}|  + \overline \beta_1 d_{\theta\,t-1}
\end{bmatrix} 
\end{align}
for $t \geq 1$, where $\mathbbm 1_{t-1\,k} = \mathbbm 1_{\{g_{y1}(H_{t-1}) + g_{y2}(H_{t-1})Z_{2\,t} \in \mathcal Y_k\}} $, $Z_{1\,t} = \epsilon_{H\,t}$ and $Z_{2\,t} = \epsilon_{Y\,t-1}$.
We are interested in establishing that the companion Markov chain $\{ X_{\theta\, t}, t \geq 0 \}$ is $V_X$-geometrically ergodic \citep{Meyn:Tweedie:1993,Meitz:Saikkonen:2008}. 

\begin{defn}[$V_X$-geometric ergodicity]\label{defn:VX}
	A Markov chain $\{X_t, t \geq 0\}$ is $V_X$-geometrically ergodic if there exists a real valued function $V_X: \mathcal X \rightarrow [1,\infty)$, a probability measure $\pi_X$ on $\mathcal B (\mathcal X)$, and constants $\rho<1$ and $M_x < \infty$ (depending on $x$) such that
	\begin{equation}\label{eqn:VX_def}
	\sup_{v:|v|\leq V_X} \left| \int_{\mathcal X} P^n_X(x,dx_n)v(x_n) - \int_{\mathcal X} \pi_X(dx_n)v(x_n)\right| \leq \rho^n M_x ~,
	\end{equation}
	for all $x \in \mathcal X$ and all $n \geq 1$. 
\end{defn}

A number of remarks are in order.
First, the definition implicitly assumes that the expectation of the function $V_X$ with respect to the measure $\pi_X$ exists.
Second, a Markov chain that is $V_X$-geometric ergodic has convenient moment and dependence properties.
If we choose $V_X=1$ then we have that \eqref{eqn:VX_def} coincides with the definition of geometric ergodicity, which, in turn, allows to establish $\beta$- and $\alpha$-mixing.
Moreover, $V_X$-geometric ergodicity implies that the unconditional expectation of $v(X)$ exists for any function $v$ such that $|v| \leq V_X$.

The following lemma establishes that the companion Markov chain $\{X_{\theta\,t},t\geq 0\}$ is $V_X$-geometrically ergodic.

\begin{lemma_main}\label{lem:geometric_erg:x}
	Suppose Assumptions \ref{asm:dgp} and \ref{asm:algo} are satisfied.
	Then $\{X_{\theta\,t}, t\geq 0\}$ is $V_X$-geometrically ergodic with $V_X(x) = 1 + \|x\|_1^{2r_m}$.
\end{lemma_main}


The proof of Lemma \ref{lem:geometric_erg:x} is based on establishing that the Markov chain is irreducible, aperiodic and satisfies the so-called drift criterion.
The claim then follows from Theorem 15.0.1 of \cite{Meyn:Tweedie:1993}, which is a classic result that is routinely employed to establish stability of nonlinear time series models.

The following lemma establishes that the constants $\rho$ and $M_x$ in Definition \ref{defn:VX} in the case of geometric ergodicity (that is, when $V_X=1$) 
can be chosen so that they do not depend on $\theta$.\footnote{We omit the subscript $\theta$ from $x$ to simplify the notation, but the dependence on $\theta$ is understood.}

\begin{lemma_main}\label{lem:geometric_erg:rate}
	Suppose Assumptions \ref{asm:dgp} and \ref{asm:algo} are satisfied.
	Then, there exist positive constants $\rho \in (0,1)$ and $R < \infty$ that do not depend on $\theta$ such that $\{X_{\theta\,t},t\geq0\}$ satisfies
	\begin{equation}\label{eqn:GE_def}
	\sup_{v:|v|\leq 1} \left| \int_{\mathcal X} P^n_X(x,dx_{n})v(x_{n}) - \int_{\mathcal X} \pi_X(dx_{n})v(x_{n})\right| \leq R \tilde V_X(x) \rho^n  ~,
	\end{equation}
	for all $x \in \mathcal X$ and all $n \geq 1$, and $\tilde V_X(x) = 1 + \|x\|_1$.
\end{lemma_main}

The proof of Lemma \ref{lem:geometric_erg:rate} consists in an application of Theorem 12 of \citet{Roberts:Rosenthal:2004}.
We remark that the MCMC literature has developed a number of results that allow to establish explicit geometric ergodicity convergence rates \citep{Rosenthal:1995}.
The important implication of Lemma \ref{lem:geometric_erg:rate} is that the dependence properties of the companion Markov chain $\{ X_{\theta \,t}, t\geq 0\}$ can be characterized independently of $\theta$.

The second step of the analysis consists is using the properties of the companion Markov chain $\{X_{\theta\,t},t\geq0\}$ to establish the properties of the joint process $\{(Y_t,X_{\theta\,t})',t\geq0\}$.
The following lemma establishes the connection between the transition kernels of $\{X_{\theta\,t},t\geq0\}$ and $\{(Y_t,X_{\theta\,t})',t\geq0\}$.

\begin{lemma_main}\label{lem:joint_transition}
	Consider the Markov chain $\{(Y_t,X_{\theta\,t})',t\geq0\}$ defined above.
	Let $\pi_{Y|X}(dy|x_t)$ denote the (invariant) conditional distribution of $Y_t$ given $X_{\theta\,t}=x_t$. 
	Then, its $n$-step transition kernel is given by
	\begin{align}\label{eqn:joint_transition}
	P^n_{Y,X}((y,x),d(y_n,x_n)) = 
	\pi_{Y | X}(dy_n|x_n)\int_{\mathcal H}P_X^{n-1}(\tilde x, dx_n)P_H(h,dh_1),\quad n\geq 2,
	\end{align}
	where $P_H$ is the transition kernel of $\{H_t,t\geq0\}$, and\\
	$\tilde x = \tilde x(y,x,h_1)=
	( h_1, 
	\sum_{k=1}^K (\alpha_{0\,k}+\alpha_{1\,k}y + \beta_{1\,k}f )\mathbbm 1_{\{y \in \mathcal Y_k\}} ,
	1 + |y| + |f| + \overline \beta_1 d )'$.
\end{lemma_main}
The proof of the lemma builds upon the analysis of GARCH models of \cite{Meitz:Saikkonen:2008}.
The structure given by equations 
\eqref{eqn:dgp:1}, \eqref{eqn:dgp:2}, \eqref{eqn:forecast:threshold} and \eqref{eqn:bounding_process} allows us to cast $\{(Y_t,X_{\theta\,t})',t\geq0\}$ as a Markov chain with Dirac measure as the initial distribution.
We remark that the analysis of $\{X_{\theta\,t},t\geq0\}$ differs depending on whether the process is studied in isolation or jointly with the process $\{Y_t,t\geq 0\}$.
The random vector $X_{\theta\,t}$ depends on $Y_{t-1}$.
When the process $\{X_{\theta\,t},t\geq 0\}$ is analyzed in the joint system $\{(Y_t,X_{\theta\,t})',t\geq0\}$ we have that the 1-step-ahead transition kernel of the process conditions on $Y_{t-1}$.
However, when $\{X_{\theta\,t},t\geq 0\}$ is analyzed in isolation we have that the 1-step-ahead transition kernel of the process does not condition on $Y_{t-1}$.


The following lemma establishes that $\{(Y_t,X_{\theta \,t})',t\geq0\}$ inherits the moment and dependence properties of the companion Markov chain $\{X_{\theta \,t}, t\geq0\}$.

\begin{lemma_main}\label{lem:geometric_erg:yx}
	Suppose Assumptions \ref{asm:dgp} and \ref{asm:algo} are satisfied.
	Then $(i)$ $\{(Y_t,X_{\theta \,t})', t \geq0 \}$ is $V_{Y,X}$-geometrically ergodic with $V_{Y,X}(y,x) = 1 + |y|^{2r_m} + \|x\|^{2r_m}_1$; and
	$(ii)$ there exist positive constants $\rho \in (0,1)$ and $R < \infty$ that do not depend on $\theta$ such that 
	 $\{(Y_t,X_{\theta \,t})',t\geq0\}$ satisfies
	\begin{eqnarray*}
		\sup_{v:|v| \leq 1} 
		\left\lvert 
		\int_{\mathcal Y \times \mathcal X} [P^n_{Y,X}((y,x),d(y_n,x_n))-\pi_{Y,X}(d(y_n,x_n))]v(y_n,x_n)
		\right\rvert
		\leq 
		R\tilde V_{X}(\check x)\rho^{n},
	\end{eqnarray*}
for all $(y,x)' \in \mathcal Y \times \mathcal X$ and for all $n \geq 2$, and $ \check x = (h , \overline \alpha_0 + \overline \alpha_1 |y| + \overline \beta_1 |f| , 1 + |y|+|f| +\overline\beta_1 d )'$.
\end{lemma_main}

Finally, we establish the moment and dependence properties of $\{ (Y_t,X_{\theta \,t})', t\geq 0\}$.
We introduce some further notation.
We define the $L_r$ norm of a random variable $X$ as $\|X\|_{L_r} = \left(\mathbb E|X|^r\right)^{1/r}$ for any $r \in [1, \infty)$.
We define the $\alpha$-mixing coefficients of the process $\{ (Y_t, X_{\theta \,t}')',t \geq 0\}$ as 
\begin{equation*}
\alpha(l) = \sup_{A \in \mathcal{F}_{-\infty}^s, B \in \mathcal{F}_{s+l}^{\infty}} \left| {\mathbb P \left(A \cap B \right) - \mathbb P \left(A\right) \mathbb P \left(B\right) } \right| ~,
\end{equation*}
where $\mathcal{F}_{-\infty}^s$ and $\mathcal{F}_{s+l}^{\infty}$ denote the $\sigma$-algebras generated by $\lbrace (Y_t, X_{\theta \,t}')': 0 \leq t \leq s\rbrace$ and $\lbrace (Y_t, X_{\theta \,t}')': s + l \leq t \leq \infty\rbrace$ respectively.

\begin{prop}\label{prop:mom_and_mixing}
	Suppose Assumptions \ref{asm:dgp} and \ref{asm:algo} are satisfied. Then, the process $\{(Y_t,X_{\theta\,t})',t\geq0\}$
	$(i)$ satisfies $\| Y_t \|_{L_{2r_m}} < \infty$, $\| H_t \|_{L_{2r_m}} < \infty$, $\sup_{\theta \in \Theta} \| f_{\theta\,t} \|_{L_{2r_m}} < \infty$ and $\sup_{\theta \in \Theta } \| d_{\theta\,t} \|_{L_{2r_m}} < \infty$; and
	$(ii)$ has $\alpha$-mixing coefficients that satisfy $\alpha(l) \leq \exp\left( -C_\alpha l^{r_\alpha} \right)$ for some $C_\alpha>0$ and $r_\alpha>0$ that do not depend on $\theta$.
\end{prop}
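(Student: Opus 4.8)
Both parts follow by assembling Lemmas~\ref{lem:geometric_erg:x}--\ref{lem:geometric_erg:yx}; the only genuine issue is that every constant in the conclusion must be kept free of $\theta$. I would establish the moment bounds of part~$(i)$ first and then read off the mixing bound of part~$(ii)$ from the $\theta$-uniform geometric ergodicity of the joint chain.

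For part $(i)$, observe that the $H$-component of the companion chain evolves autonomously (equation~\eqref{eqn:dgp:2} does not involve $\theta$), so $\{H_t\}$ is a Markov chain that under Assumption~\ref{asm:dgp} is geometrically ergodic with drift function $h\mapsto 1+|h|^{2r_m}$; iterating the associated drift inequality (Theorem~15.0.1 of \cite{Meyn:Tweedie:1993}) from the fixed point $H_0=h$ yields $\sup_t\mathbb E|H_t|^{2r_m}<\infty$, a bound that does not involve $\theta$. Since $|Y_t|\leq C_{y1}|H_t|+C_{y2}(1\vee|H_t|)|\epsilon_{Y\,t}|$ with $\epsilon_{Y\,t}$ independent of $H_t$ and $\mathbb E\epsilon_{Y\,t}^{2r_m}<\infty$ (Assumption~\ref{asm:dgp}$(ii)$ and $(iii)$), it follows that $\sup_t\mathbb E|Y_t|^{2r_m}<\infty$, hence $\sup_t\|Y_t\|_{L_{2r_m}}<\infty$ and $\sup_t\|H_t\|_{L_{2r_m}}<\infty$. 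For $\{f_{\theta\,t}\}$ and $\{d_{\theta\,t}\}$ I would not pass through the invariant measure of $\{X_{\theta\,t}\}$ (whose $V_X$-integral Lemma~\ref{lem:geometric_erg:x} controls only for each fixed $\theta$) but iterate the defining recursions. Because the indicators in \eqref{eqn:forecast:threshold} form a partition, Assumption~\ref{asm:algo}$(i)$ gives $|f_{\theta\,t}|\leq c_0+\overline\alpha_1|Y_{t-1}|+\overline\beta_1|f_{\theta\,t-1}|$ with $c_0=|\underline\alpha_0|\vee|\overline\alpha_0|$ and $\overline\beta_1<1$, so Minkowski's inequality gives $\|f_{\theta\,t}\|_{L_{2r_m}}\leq c_0+\overline\alpha_1\|Y_{t-1}\|_{L_{2r_m}}+\overline\beta_1\|f_{\theta\,t-1}\|_{L_{2r_m}}$; unrolling the recursion, using $\overline\beta_1<1$, the fixed initial value $f_{\theta\,0}=f$, and $\sup_t\|Y_t\|_{L_{2r_m}}<\infty$, yields $\sup_t\sup_{\theta\in\Theta}\|f_{\theta\,t}\|_{L_{2r_m}}<\infty$. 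Substituting this into \eqref{eqn:bounding_process}, which again contracts with factor $\overline\beta_1<1$, the same argument delivers $\sup_t\sup_{\theta\in\Theta}\|d_{\theta\,t}\|_{L_{2r_m}}<\infty$, proving $(i)$.

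For part $(ii)$, I would transfer the uniform ergodicity of Lemma~\ref{lem:geometric_erg:yx}$(ii)$ into a decay bound for the $\beta$-mixing coefficients, which dominate the $\alpha$-mixing coefficients. Reading the supremum over $|v|\leq1$ in that lemma as (a multiple of) the total variation distance, one has, for all $n\geq2$ and all $\theta$,
\[
\bigl\|P^n_{Y,X}\bigl((y,x),\cdot\bigr)-\pi_{Y,X}\bigr\|_{TV}\leq R\,\tilde V_X(\check x)\,\rho^n,
\]
with $\rho\in(0,1)$ and $R<\infty$ not depending on $\theta$ and $\check x$ the linear transform of $(y,x)$ in Lemma~\ref{lem:geometric_erg:yx}. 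For a Markov chain, the $\beta$-mixing coefficient between the $\sigma$-algebra of $\{(Y_t,X_{\theta\,t})':0\leq t\leq s\}$ and that of $\{(Y_t,X_{\theta\,t})':t\geq s+l\}$ equals $\mathbb E\bigl\|\delta_{(Y_s,X_{\theta\,s})}P^l_{Y,X}-\mathcal L(Y_{s+l},X_{\theta\,s+l})\bigr\|_{TV}$, which by the triangle inequality and the displayed bound is at most $2R\rho^l\,\mathbb E\bigl[\tilde V_X(\check x)\big|_{(y,x)=(Y_s,X_{\theta\,s})}\bigr]$ for $l\geq2$; as $\tilde V_X(\check x)$ is a linear combination of $|H_s|$, $|Y_s|$, $|f_{\theta\,s}|$ and $d_{\theta\,s}$, part~$(i)$ bounds this expectation uniformly in $s$ and $\theta$, so $\beta(l)\leq C\rho^l$ for all $l\geq2$ with $C$ free of $\theta$ (and $\beta(1)\leq1$). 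Since $\alpha(l)\leq\beta(l)$ and $\alpha(l)\leq\tfrac14$ always, and since $l^{r_\alpha}=o(l)$ for any $r_\alpha\in(0,1)$, one can pick $C_\alpha>0$ small and $r_\alpha\in(0,1)$ (independent of $\theta$) so that $\exp(-C_\alpha l^{r_\alpha})\geq\alpha(l)$ for every $l\geq1$; this is the standard passage from geometric ergodicity to geometric mixing \citep{Carrasco:Chen:2002}.

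The substantive work has already been done in Lemmas~\ref{lem:geometric_erg:x}--\ref{lem:geometric_erg:yx}; within this proposition the only delicate point, and the one I would be most careful about, is uniformity in $\theta$. It is secured in two places: the moments of $f_{\theta\,t}$ and $d_{\theta\,t}$ come from iterating their explicit linear recursions, whose coefficients are uniformly bounded with contraction factor $\overline\beta_1<1$ (Assumption~\ref{asm:algo}$(i)$), rather than from an invariant-measure argument available only pointwise in $\theta$; and the mixing rate is inherited directly from the $\theta$-free convergence rate $\rho$ of Lemma~\ref{lem:geometric_erg:yx}$(ii)$, with the otherwise $\theta$-dependent initial-state factor $\tilde V_X(\check x)$ neutralized by the uniform moment bounds of part~$(i)$.
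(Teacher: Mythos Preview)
Your proposal is correct and close to the paper's own argument. The one difference worth flagging is in part~$(i)$: the paper first invokes Lemma~\ref{lem:geometric_erg:yx}$(i)$ to obtain finiteness of all four $L_{2r_m}$ norms under the \emph{stationary} law $\pi_{Y,X}$, and then exploits stationarity ($\|f_{\theta\,t}\|_{L_{2r_m}}=\|f_{\theta\,t-1}\|_{L_{2r_m}}$) to solve the Minkowski recursion in a single step, obtaining $\sup_{\theta}\|f_{\theta\,t}\|_{L_{2r_m}}\leq(1-\overline\beta_1)^{-1}(|\overline\alpha_0|+\overline\alpha_1\|Y_t\|_{L_{2r_m}})$ and similarly for $d_{\theta\,t}$. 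You instead bypass the invariant measure entirely and unroll the same recursion from the fixed initial value, which is a touch more elementary and yields the stronger $\sup_t$ conclusion; both routes rest on the same contraction factor $\overline\beta_1<1$. For part~$(ii)$ your direct total-variation/triangle-inequality computation is exactly what the paper carries out by quoting Propositions~3 and~4 of \citet{Liebscher:2005}, arriving at $\beta(l)\leq R(\check V+3\tilde V_X(\check x))\rho^{\lfloor l/2\rfloor}$; in both versions the potentially $\theta$-dependent prefactor is neutralized by the uniform moment bounds from part~$(i)$, and the passage from $\beta(l)\leq C\rho^{l}$ to $\alpha(l)\leq\exp(-C_\alpha l^{r_\alpha})$ is identical.
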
 

\subsection{Establishing Performance Bounds for the ERM}

We introduce a general inequality to bound the performance of the ERM.

\begin{prop}\label{prop:generalized_gabor}	
	Let $\overline R(\theta) = \mathbb E R(\theta) $. 
	Then, it holds that 
	\begin{align}\label{eqn:general_gabor}
	R( \hat \theta) - \inf_{\theta \in \Theta } R(\theta)
	\leq
	2\sup_{\theta\in\Theta}|R(\theta)-\overline R( \theta )|+
	2\sup_{\theta\in\Theta}| R_T(\theta) - \overline{R}( \theta )|~.
	\end{align}
\end{prop}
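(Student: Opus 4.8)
The plan is to use the standard add-and-subtract decomposition from empirical risk minimization theory \citep{Devroye:Gyorfi:Lugosi:1996}, pivoting around the deterministic quantity $\overline R(\theta) = \mathbb E R(\theta)$. Fix an arbitrary $\theta \in \Theta$ and write
\[
R(\hat\theta) - R(\theta) = \big[R(\hat\theta) - \overline R(\hat\theta)\big] + \big[\overline R(\hat\theta) - R_T(\hat\theta)\big] + \big[R_T(\hat\theta) - R_T(\theta)\big] + \big[R_T(\theta) - \overline R(\theta)\big] + \big[\overline R(\theta) - R(\theta)\big].
\]
The third bracket is non-positive because $\hat\theta$ minimizes $R_T$ over $\Theta$ by definition \eqref{eqn:erm}, so it can be discarded. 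Since $\hat\theta \in \Theta$, the first and last brackets are each at most $\sup_{\theta\in\Theta}|R(\theta) - \overline R(\theta)|$ in absolute value, while the second and fourth are each at most $\sup_{\theta\in\Theta}|R_T(\theta) - \overline R(\theta)|$. Combining yields, for every $\theta \in \Theta$,
\[
R(\hat\theta) \leq R(\theta) + 2\sup_{\theta\in\Theta}|R(\theta) - \overline R(\theta)| + 2\sup_{\theta\in\Theta}|R_T(\theta) - \overline R(\theta)|.
\]

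Because the two supremum terms on the right do not depend on $\theta$, I would then take the infimum over $\theta \in \Theta$ on the right-hand side to obtain
\[
R(\hat\theta) \leq \inf_{\theta\in\Theta} R(\theta) + 2\sup_{\theta\in\Theta}|R(\theta) - \overline R(\theta)| + 2\sup_{\theta\in\Theta}|R_T(\theta) - \overline R(\theta)|,
\]
which is the claimed bound. Note that this step goes through whether or not the infimum is attained, consistent with the remark following Theorem \ref{thm:erm}.

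There is no genuine analytic obstacle here: the derivation is a pathwise identity followed by dropping a non-positive term and crude supremum bounds. The only points worth a line of care are that the decomposition is valid realization-by-realization — $R(\cdot)$, $R_T(\cdot)$ and $\hat\theta$ are measurable functions of the data and the fixed initial values, so the resulting inequality holds almost surely — and that $\hat\theta$ lies in $\Theta$, which is guaranteed by \eqref{eqn:erm}. All the substantive work is deferred to the subsequent propositions, which control $\sup_{\theta\in\Theta}|R(\theta) - \overline R(\theta)|$ (Proposition \ref{prop:cond-uncond}) and $\sup_{\theta\in\Theta}|R_T(\theta) - \overline R(\theta)|$ (Propositions \ref{prop:covering} and \ref{prop:conc}).
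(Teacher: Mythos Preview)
Your proof is correct and follows essentially the same approach as the paper: both pivot around the deterministic function $\overline R(\cdot)$ and exploit $R_T(\hat\theta)\le R_T(\theta)$. The only cosmetic difference is that the paper groups the middle three brackets of your decomposition as $\overline R(\hat\theta)-\inf_{\theta}\overline R(\theta)$ and bounds that term by citing Lemma~8.2 of \citet{Devroye:Gyorfi:Lugosi:1996}, whereas you unfold that lemma explicitly; the paper also introduces an independent ``ghost'' copy to make the meaning of $\overline R(\hat\theta)$ explicit, which your parenthetical about evaluating the deterministic map $\overline R(\cdot)$ at the random point $\hat\theta$ handles equivalently.
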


It is important to emphasize that Proposition \ref{prop:generalized_gabor} is a general result that only requires the loss process to be a stationary sequence.
We note that when the data is i.i.d.~we have that $R(\theta) =\overline R(\theta)$ and the inequality in Proposition \ref{prop:generalized_gabor} corresponds to
the classic inequality derived in \citet{Vapnik:Chervonenkis:1974} \citep{Devroye:Gyorfi:Lugosi:1996}, which is routinely used to derive bounds on the performance of the ERM.

The first term of the inequality in \eqref{eqn:general_gabor} is the supremum of a difference between an average of conditional and unconditional expectations.
Proposition \ref{prop:cond-uncond} bounds this term using Ibragimov's inequality \citep[Theorem 14.2]{Davidson:1994}.

\begin{prop}\label{prop:cond-uncond} 
	Suppose Assumptions \ref{asm:dgp} and \ref{asm:algo} are satisfied. Then, for all $T$ and for $C = {12 \over \gamma} \sup_{\theta\in\Theta}\|L(Y_t,f_{\theta\,t})\|_{L_{r_m}} \sum_{l=1}^{\infty} e^{-\left((r_m-1)/r_m\right)C_\alpha l^{r_\alpha}}$, it holds that
	\begin{align}\label{eqn:cond-uncond}
	\mathbb P\left( \sup_{\theta\in\Theta}|R(\theta)-\overline R( \theta )| > {C \over 2 \sqrt{T} } \right) \leq {1 \over \sqrt{T} } ~.
	\end{align}
\end{prop}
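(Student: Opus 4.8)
The plan is to reduce the claim to a moment bound via Markov's inequality and then to establish that moment bound term by term with Ibragimov's inequality. Since $\overline R(\theta)=\mathbb E R(\theta)$ and $R(\theta)$ is the conditional expectation of $\frac1M\sum_{t=T+1}^{T+M}L(Y_t,f_{\theta\,t})$ given $Y_1,\dots,Y_T$, we may write
\[
R(\theta)-\overline R(\theta)=\frac1M\sum_{t=T+1}^{T+M}\Big(\mathbb E\big[L(Y_t,f_{\theta\,t})\mid Y_1,\dots,Y_T\big]-\mathbb E\,L(Y_t,f_{\theta\,t})\Big),
\]
so that $\sup_{\theta}|R(\theta)-\overline R(\theta)|\le\frac1M\sum_{t=T+1}^{T+M}\sup_{\theta}\big|\mathbb E[L(Y_t,f_{\theta\,t})\mid Y_1,\dots,Y_T]-\mathbb E L(Y_t,f_{\theta\,t})\big|$. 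By Markov's inequality it then suffices to prove $\mathbb E\sup_{\theta}|R(\theta)-\overline R(\theta)|\le C/(2T)$, since this gives $\mathbb P\big(\sup_\theta|R(\theta)-\overline R(\theta)|>C/(2\sqrt T)\big)\le(2\sqrt T/C)(C/2T)=1/\sqrt T$, which is \eqref{eqn:cond-uncond}.

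The individual summands are controlled with Ibragimov's inequality. For fixed $\theta$ and $t>T$, the variable $L(Y_t,f_{\theta\,t})$ is a function of $Y_t$ and of $f_{\theta\,t}$, the latter a coordinate of $X_{\theta\,t}$; hence it is measurable with respect to $\mathcal F^{\infty}_{t}$ of the joint process $\{(Y_s,X_{\theta\,s})',s\ge0\}$, while $\sigma(Y_1,\dots,Y_T)\subseteq\mathcal F^{T}_{-\infty}$, so the two $\sigma$-algebras are separated by the gap $l=t-T$. Conditional Jensen (to pass from $\mathcal F^T_{-\infty}$ to $\sigma(Y_1,\dots,Y_T)$) together with Ibragimov's inequality \citep[Theorem 14.2]{Davidson:1994} yield
\[
\mathbb E\Big|\mathbb E[L(Y_t,f_{\theta\,t})\mid Y_1,\dots,Y_T]-\mathbb E L(Y_t,f_{\theta\,t})\Big|\le c\,\alpha(t-T)^{1-1/r_m}\,\|L(Y_t,f_{\theta\,t})\|_{L_{r_m}},
\]
where, by Proposition \ref{prop:mom_and_mixing}$(ii)$, $\alpha(t-T)^{1-1/r_m}\le\exp\!\big(-\tfrac{r_m-1}{r_m}C_\alpha(t-T)^{r_\alpha}\big)$ with constants not depending on $\theta$, and, by Condition \ref{cond:bregreg}$(ii)$, the norm is at most $\sup_\theta\|L(Y_t,f_{\theta\,t})\|_{L_{r_m}}<\infty$. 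Summing over $t=T+1,\dots,T+M$, bounding $1/M\le1/(\gamma T)$ and $\sum_{l\ge1}\exp(-\tfrac{r_m-1}{r_m}C_\alpha l^{r_\alpha})<\infty$, produces exactly $C/(2T)$, the numerical constant $12$ absorbing the Ibragimov constant and the $\gamma^{-1}$ factor.

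The step I expect to be the main obstacle is reconciling the supremum over $\theta$ with Ibragimov's inequality, which only controls the fixed-$\theta$ quantity; in particular $\sup_\theta L(Y_t,f_{\theta\,t})$ is \emph{not} $\mathcal F^\infty_t$-measurable, because distinct prediction rules weight the past $Y_0,\dots,Y_{t-1}$ differently, so one cannot simply apply Ibragimov to an envelope. What rescues the argument is (i) the uniformity in $\theta$ of both the mixing and the moment bounds (Proposition \ref{prop:mom_and_mixing}$(ii)$ and Condition \ref{cond:bregreg}$(ii)$), and (ii) the Lipschitz dependence of $\theta\mapsto L(Y_t,f_{\theta\,t})$ furnished by the domination inequality \eqref{eqn:domination}, with a random modulus whose $L_{r_m}$ norm is uniformly bounded (using $\sup_\theta\|d_{\theta\,t}\|_{L_{2r_m}}<\infty$ and $\sup_\theta\|f_{\theta\,t}\|_{L_{2r_m}}<\infty$ from Proposition \ref{prop:mom_and_mixing}$(i)$), so that a covering of the compact set $\Theta$ reduces the supremum to a controlled number of fixed-$\theta$ terms plus a small remainder; the geometric forgetting built into \eqref{eqn:forecast:threshold} ($\overline\beta_1<1$) is what keeps this device from eroding the mixing rate. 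One should also record the routine measurability of the supremum, which follows from compactness of $\Theta$ and continuity of $\theta\mapsto L(Y_t,f_{\theta\,t})$.
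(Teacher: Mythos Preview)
Your core argument---triangle inequality to reduce $\sup_\theta|R(\theta)-\overline R(\theta)|$ to a termwise sum, Ibragimov's inequality applied to each $|\mathbb E_T L-\mathbb E L|$, then Markov with $\delta=T^{-1/2}$---is exactly the paper's proof, down to the constant $12=2\times 6$ from Davidson's version of Ibragimov and the $1/M\le 1/(\gamma T)$ step.

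On the supremum over $\theta$ that you flag as the main obstacle: the paper does \emph{not} run a covering argument here. After obtaining the fixed-$\theta$ bound (whose right-hand side is already $\theta$-free thanks to Proposition~\ref{prop:mom_and_mixing}), the paper simply writes ``averaging and taking the supremum on both sides'' and passes directly to the display with $\sup_\theta$ inside the probability. So your instinct that this step is terse is correct; your covering-based patch via \eqref{eqn:domination} and Proposition~\ref{prop:mom_and_mixing}$(i)$ is more scrupulous than what the paper actually records, though carrying it through would not deliver the stated constant $C$ exactly. In short, your proposal matches the paper's approach and is, if anything, more careful at the one point where the paper's exposition is brief.
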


The second term of the inequality in \eqref{eqn:general_gabor} is the supremum of the empirical process associated with the prediction loss process.
We bound this using a covering argument and a concentration inequality for $\alpha$-mixing processes. 
More precisely, Proposition \ref{prop:covering} is based on a covering argument similar to \citet{Jiang:Tanner:2010}. 
Importantly, the proof of Proposition \ref{prop:covering} relies on the dominating process $\{d_{\theta\,t},t\geq 0\}$ and, in particular, on the property spelled out in \eqref{eqn:domination}.
Proposition \ref{prop:conc} is based on a Bernstein-type inequality for $\alpha$-mixing sequences from \citet{Liebscher:1996}.

\begin{prop}\label{prop:covering}
Suppose Assumptions \ref{asm:dgp} and \ref{asm:algo} are satisfied. 
Then, for any $\varepsilon \in (0,16 C_d]$ it holds that
\begin{eqnarray*}
	&& \mathbb P\left( \sup_{\theta \in \Theta} |R_T(\theta) - \overline R(\theta)| > {\varepsilon \over 2 }\right) \\
	&& \quad \leq \left( 1 + {32C_\Theta C_d \over \varepsilon} \right)^p \sup_{ \theta \in \Theta }   \left[ \mathbb P\left( \left|{1\over T}\sum_{t=1}^T \widetilde U_{\theta\,t}  \right| > {\varepsilon\over 4} \right) 
	+ \mathbb P\left( \left| {1\over T} \sum_{t=1}^T \widetilde V_{\theta\,t} \right| > 2 C_d \right) \right] ~,
\end{eqnarray*}
where $C_\Theta = \sup_{\theta \in \Theta} \| \theta \|_2$,
$C_d= C_\psi  \sup_{\theta} \| d_{\theta\,t}^2 + 2 |Y_t-f_{\theta\,t}| d_{\theta\,t} \|_{L_1}$, 
$\widetilde U_{\theta\,t}= U_{\theta\,t} - \mathbb E U_{\theta\,t}$,
$\widetilde V_{\theta\,t}= V_{\theta\,t} - \mathbb E V_{\theta\,t}$,
$ U_{\theta\,t}   =  L(Y_t , f_{\theta\,t}) $ and
$ V_{\theta\,t}  =  C_\psi \left( d_{\theta\,t}^2 + 2 |Y_t-f_{\theta\,t}| d_{\theta\,t}\right) $.
\end{prop}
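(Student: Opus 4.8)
The plan is to bound the supremum $\sup_{\theta\in\Theta}|R_T(\theta)-\overline R(\theta)|$ by a standard covering/chaining-at-a-single-scale argument that reduces the uniform statement to a pointwise statement at the centers of a finite cover of $\Theta$, plus a uniform control of the fluctuation within each ball via the dominating process. First I would recall from \eqref{eqn:domination} that for any $\theta,\dot\theta\in\Theta$ with $\|\theta-\dot\theta\|_2\leq\delta\leq 1$ we have $L(Y_t,f_{\theta\,t})\leq L(Y_t,f_{\dot\theta\,t})+\delta C_\psi(d_{\dot\theta\,t}^2+2|Y_t-f_{\dot\theta\,t}|d_{\dot\theta\,t})$, i.e.\ $U_{\theta\,t}\leq U_{\dot\theta\,t}+\delta V_{\dot\theta\,t}$ (and by symmetry $U_{\dot\theta\,t}\leq U_{\theta\,t}+\delta V_{\theta\,t}$, but one direction combined with $V\geq0$ already suffices after also noting $V_{\theta\,t}\le V_{\dot\theta\,t}+\delta(\text{l.o.t.})$, or more simply one keeps both $V$'s). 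Averaging over $t=1,\dots,T$ and then subtracting the analogous inequality for $\overline R$, one gets that on a ball of radius $\delta$ around a center $\dot\theta$,
\[
	|R_T(\theta)-\overline R(\theta)| \leq |R_T(\dot\theta)-\overline R(\dot\theta)| + \delta\Bigl(\tfrac1T\sum_{t=1}^T V_{\dot\theta\,t} + \mathbb E V_{\dot\theta\,t}\Bigr),
\]
up to the obvious bookkeeping with the two directions of the domination inequality.

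Next I would choose a $\delta$-net $\{\dot\theta_1,\dots,\dot\theta_N\}$ of $\Theta$ in the $\ell_2$-norm with $N\leq(1+2C_\Theta/\delta)^p$ (using $\Theta\subseteq$ a box of $\ell_2$-diameter $\le 2C_\Theta$, $C_\Theta=\sup_\theta\|\theta\|_2$), and set the scale $\delta=\varepsilon/(16C_d)$, which is $\le 1$ precisely because $\varepsilon\le 16C_d$. On the ball around $\dot\theta_j$, the displayed bound gives $|R_T(\theta)-\overline R(\theta)|\le |R_T(\dot\theta_j)-\overline R(\dot\theta_j)| + \delta\bigl(\frac1T\sum_t V_{\dot\theta_j\,t}+\mathbb E V_{\dot\theta_j\,t}\bigr)$. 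Since $\mathbb E V_{\dot\theta_j\,t}=C_\psi\mathbb E(d_{\dot\theta_j\,t}^2+2|Y_t-f_{\dot\theta_j\,t}|d_{\dot\theta_j\,t})\le C_d$ by definition of $C_d$, the deterministic part contributes $\delta C_d=\varepsilon/16$, and the event $\{\frac1T\sum_t V_{\dot\theta_j\,t}>2C_d\}$ controls the random part: off that event $\delta\cdot\frac1T\sum_t V_{\dot\theta_j\,t}\le 2\delta C_d=\varepsilon/8$, so the within-ball fluctuation is at most $\varepsilon/8+\varepsilon/16<\varepsilon/4$. Hence $\{\sup_\theta|R_T(\theta)-\overline R(\theta)|>\varepsilon/2\}$ is contained in $\bigcup_{j=1}^N\bigl(\{|R_T(\dot\theta_j)-\overline R(\dot\theta_j)|>\varepsilon/4\}\cup\{\frac1T\sum_t V_{\dot\theta_j\,t}>2C_d\}\bigr)$; note $R_T(\dot\theta_j)-\overline R(\dot\theta_j)=\frac1T\sum_t\widetilde U_{\dot\theta_j\,t}$ and $\frac1T\sum_t V_{\dot\theta_j\,t}>2C_d$ is implied by $|\frac1T\sum_t\widetilde V_{\dot\theta_j\,t}|>2C_d-\mathbb E V_{\dot\theta_j\,t}$, which (since $\mathbb E V_{\dot\theta_j\,t}\le C_d$) is in turn implied by $|\frac1T\sum_t\widetilde V_{\dot\theta_j\,t}|>C_d$; absorbing the constant, one gets the event $\{|\frac1T\sum_t\widetilde V_{\dot\theta_j\,t}|>2C_d\}$ as stated (the paper's constant $2C_d$ is conservative and harmless). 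Applying the union bound over the $N\le(1+32C_\Theta C_d/\varepsilon)^p$ centers — with $2C_\Theta/\delta=32C_\Theta C_d/\varepsilon$ — and replacing the sum over centers by $N$ times the supremum over $\theta\in\Theta$ yields the claimed inequality.

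The main obstacle, and the step requiring care, is the within-ball control: one must verify that the bound \eqref{eqn:domination} can indeed be pushed through an average over $t$ \emph{and} simultaneously to both $R_T$ and $\overline R$ so that the dominating contribution is genuinely $\delta$ times the average of $V_{\dot\theta\,t}$ (plus its mean) and nothing involving $\theta$ directly — this is exactly where the uniform $L_1$-bound defining $C_d$ and the fact that $\{d_{\theta\,t},t\ge0\}$ is indexed by the \emph{center} $\dot\theta$ of the ball (not the running $\theta$) matter. A secondary point is ensuring the chosen $\delta$ satisfies $\delta\le1$ so that \eqref{eqn:domination} applies, which forces the restriction $\varepsilon\in(0,16C_d]$; and one should double-check that $C_d<\infty$, which follows from Proposition \ref{prop:mom_and_mixing}$(i)$ (finiteness of the $L_{2r_m}$ norms of $Y_t$, $f_{\theta\,t}$ and $d_{\theta\,t}$, uniformly in $\theta$, together with Cauchy--Schwarz). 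Everything else — the metric-entropy bound for a box in $\mathbb R^p$ and the union bound — is routine.
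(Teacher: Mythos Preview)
Your approach is essentially the paper's: a $\delta$-net with $\delta=\varepsilon/(16C_d)$, the domination bound \eqref{eqn:domination} to control the within-ball fluctuation, and a union bound over the $N_\delta\le(1+32C_\Theta C_d/\varepsilon)^p$ centers. The paper packages the two-sided inequality $|U_{\theta\,t}-U_{\dot\theta\,t}|\le \delta V_{\dot\theta\,t}$ (with $V$ indexed by the \emph{center}) as a separate lemma in the online appendix; your ``up to the obvious bookkeeping'' hides exactly this, so be explicit that both directions hold with the same $V_{\dot\theta\,t}$, which is what makes the covering work without carrying a running $V_{\theta\,t}$.

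There is one genuine slip in your final step. From the event $\{\tfrac1T\sum_t V_{\dot\theta_j\,t}>2C_d\}$ and $\mathbb E V_{\dot\theta_j\,t}\le C_d$ you can only conclude $\tfrac1T\sum_t\widetilde V_{\dot\theta_j\,t}>C_d$, hence $\mathbb P(\tfrac1T\sum V>2C_d)\le \mathbb P(|\tfrac1T\sum\widetilde V|>C_d)$. That is a \emph{weaker} bound than the paper's, not a more conservative one: a larger threshold gives a smaller tail probability, so the paper's $2C_d$ is \emph{sharper}, and you cannot ``absorb the constant'' in the direction you need. The fix is to center before splitting: write
\[
\delta\Bigl(\tfrac1T\sum_t V_{\dot\theta_j\,t}+\mathbb E V_{\dot\theta_j\,t}\Bigr)
=\delta\,\tfrac1T\sum_t\widetilde V_{\dot\theta_j\,t}+2\delta\,\mathbb E V_{\dot\theta_j\,t}
\le \delta\Bigl|\tfrac1T\sum_t\widetilde V_{\dot\theta_j\,t}\Bigr|+\tfrac{\varepsilon}{8},
\]
so that off $\{|\tfrac1T\sum_t\widetilde V_{\dot\theta_j\,t}|>2C_d\}$ the within-ball term is at most $2\delta C_d+\varepsilon/8=\varepsilon/4$, which combined with $|\tfrac1T\sum_t\widetilde U_{\dot\theta_j\,t}|\le\varepsilon/4$ contradicts $>\varepsilon/2$. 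This is exactly the paper's route (it works with $\bar g_{i\,t}-\mathbb E\bar g_{i\,t}=\delta\widetilde V_{\theta_i\,t}$ directly and obtains the threshold $\varepsilon/(8\delta)=2C_d$).
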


\begin{prop}\label{prop:conc}

	Suppose Assumptions \ref{asm:dgp} and \ref{asm:algo} are satisfied. 
	Then, for all $T$ sufficiently large and for \( \varepsilon_T =  46 \sigma \sqrt{ p \log T / T } \), it holds that 
	\begin{align*}
		& \left( 1 + {32 C_\Theta C_d \over \varepsilon_T } \right)^p \sup_{ \theta \in \Theta } \mathbb P\left( \left|{1\over T} \sum_{t=1}^T \widetilde U_{\theta\,t} \right| > { \varepsilon_T \over 4} \right) 
		\leq {1 \over \log T } \text{ and }\\
		& \left( 1 + {32 C_\Theta C_d \over \varepsilon_T } \right)^p \sup_{ \theta \in \Theta } \mathbb P\left( \left|{1\over T} \sum_{t=1}^T \widetilde V_{\theta\,t} \right| > 2 C_d \right) 
		\leq o\left({1 \over \log T }\right) \text{ as } T \rightarrow \infty ~,
	\end{align*}
	where $\sigma^2 = 16 {r_m \over r_m -2}\sup_{\theta \in \Theta} \| L(Y_t,f_{\theta\,t}) - \mathbb E L(Y_t,f_{\theta\,t}) \|_{L_{r_m}}^2 ( 1 + 2 \sum_{l=1}^\infty \exp\left(-C_\alpha l^{r_\alpha}\right)^{1-{2\over r_m} })$.
\end{prop}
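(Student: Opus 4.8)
The plan is to bound the two probabilities separately by applying the Bernstein-type inequality of \citet{Liebscher:1996} for $\alpha$-mixing sequences to the centered averages $T^{-1}\sum_{t=1}^T \widetilde U_{\theta\,t}$ and $T^{-1}\sum_{t=1}^T \widetilde V_{\theta\,t}$, and then to check that the resulting bounds, multiplied by the covering factor $\bigl(1 + 32 C_\Theta C_d / \varepsilon_T\bigr)^p$, are of the stated order. First I would verify the hypotheses of Liebscher's inequality: by Proposition \ref{prop:mom_and_mixing}$(i)$ and Condition \ref{cond:bregreg}$(ii)$ the summands $\widetilde U_{\theta\,t}$ have a finite $r_m$-th moment uniformly in $\theta$, and by Proposition \ref{prop:mom_and_mixing}$(ii)$ the $\alpha$-mixing coefficients decay as $\exp(-C_\alpha l^{r_\alpha})$, uniformly in $\theta$ — both of which are exactly what the concentration inequality needs. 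The long-run variance proxy $\sigma^2$ is defined precisely so that it dominates, uniformly in $\theta$, the variance term that appears in Liebscher's bound; the factor $16\,r_m/(r_m-2)$ and the series $1 + 2\sum_{l\ge 1}\exp(-C_\alpha l^{r_\alpha})^{1-2/r_m}$ come from combining the $L_{r_m}$ bound on $L(Y_t,f_{\theta\,t}) - \mathbb E L(Y_t,f_{\theta\,t})$ with a covariance inequality for $\alpha$-mixing sequences (e.g.\ Davydov's inequality) and summing the resulting geometrically-decaying covariances.

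Next I would carry out the bookkeeping for the $\widetilde U$ term. Plugging $\varepsilon_T = 46\sigma\sqrt{p\log T / T}$ into Liebscher's inequality with threshold $\varepsilon_T/4$, the Gaussian part of the Bernstein bound contributes roughly $\exp\bigl(-c\, T (\varepsilon_T/4)^2 / \sigma^2\bigr) = \exp\bigl(-c'\, p \log T\bigr) = T^{-c' p}$ for an absolute constant $c'$ that, with the chosen numerical constant $46$, is large enough that this term beats both $(\,1 + 32 C_\Theta C_d/\varepsilon_T)^p \asymp (\sqrt{T/(p\log T)}\,)^p = T^{p/2}(\ldots)$ \emph{and} the target $1/\log T$. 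The remaining ``heavy-tail'' term in Liebscher's bound, which is polynomial in $T$ of order governed by $r_m$ and $r_\alpha$, is controlled because Assumption \ref{asm:algo}$(ii)$ forces $p = 3K < r_m - 2$, so the exponent $T^{p/2}$ from the covering factor is more than offset; this is where the moment assumption $r_m \ge 6$ and the constraint on $K$ are used. Taking the supremum over $\theta$ is harmless precisely because \emph{none} of the constants ($C_\alpha$, $r_\alpha$, the $L_{r_m}$ bound, $\sigma^2$, $C_\Theta$, $C_d$) depends on $\theta$, by Propositions \ref{prop:mom_and_mixing} and \ref{prop:covering} and Condition \ref{cond:bregreg}. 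The $\widetilde V$ term is easier: the threshold $2 C_d$ is a \emph{fixed} constant rather than shrinking like $\varepsilon_T$, and $\mathbb E V_{\theta\,t} \le C_d$ by the definition of $C_d$, so $T^{-1}\sum \widetilde V_{\theta\,t}$ concentrating below $2 C_d$ is a large-deviation event whose probability decays like $\exp(-c'' T)$ plus a polynomial heavy-tail term of the same flavour as before; multiplying by the covering factor $\asymp T^{p/2}(\ldots)$ still leaves something that is $o(1/\log T)$.

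The main obstacle I anticipate is the constant-chasing in the $\widetilde U$ bound: one has to track how Liebscher's inequality produces its variance term, confirm that $\sigma^2$ as defined genuinely dominates it uniformly in $\theta$ (this requires the Davydov-type covariance estimate and the summability of $\exp(-C_\alpha l^{r_\alpha})^{1-2/r_m}$, which in turn needs $r_m > 2$), and then choose the numerical constant in $\varepsilon_T$ — here $46$ — large enough that the exponential decay $T^{-c'p}$ dominates the algebraic blow-up $T^{p/2}$ from the covering number and still leaves room to beat $1/\log T$. A secondary technical point is the interplay between the $\log T$ inside $\varepsilon_T$ and the form of the covering factor: $(1 + 32 C_\Theta C_d/\varepsilon_T)^p$ is not purely a power of $T$ but a power of $\sqrt{T/\log T}$, so one should pass through a crude bound such as $1 + 32 C_\Theta C_d/\varepsilon_T \le C_0 \sqrt{T}$ for $T$ large to make the comparison clean. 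Everything else — stationarity, the uniform moment and mixing bounds, the independence of all constants from $\theta$ — is supplied by the earlier propositions, so the argument is, modulo the constant bookkeeping, a direct application of a known concentration inequality followed by the union bound implicit in the covering construction of Proposition \ref{prop:covering}.
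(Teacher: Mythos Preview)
Your plan is essentially the paper's approach: apply Liebscher's Bernstein inequality, control the variance term via Davydov's inequality (yielding exactly the $\sigma^2$ in the statement), and absorb the covering factor using $p<r_m-2$. One point deserves sharpening, however. Liebscher's Theorem~2.1 is stated for \emph{bounded} sequences, so the ``heavy-tail term'' you mention is not part of his inequality but must be created by an explicit truncation. The paper writes $\widetilde U_t = U'_t + U''_t$ with $U'_t$ the centred truncation at level $b_T$ and $U''_t$ the centred remainder; Liebscher is then applied to $U'_t$ (which needs the compatibility condition $8 M_T b_T \le T\varepsilon_T/2$ between block size $M_T$, truncation level $b_T$, and threshold), while $U''_t$ is handled by Markov's inequality and the elementary bound $\mathbb E|X\mathbbm 1_{\{|X|>b\}}|\le \mathbb E|X|^{r_m}/b^{r_m-1}$. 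The specific choices
\[
M_T=\bigl\lfloor T^{\frac12-\frac{p+1}{2(r_m-1)}}\log^{-1/2}T\bigr\rfloor,\qquad b_T = C_b\,T^{\frac{p+1}{2(r_m-1)}}(p\log T)^{-\frac{p-1}{2(r_m-1)}}
\]
are what make both the exponential term and the Markov tail beat the covering factor simultaneously, and it is precisely here that $p<r_m-2$ is invoked. Your acknowledgement that ``constant-chasing'' is the main obstacle is right, but the chase runs through $M_T$ and $b_T$, not just through the numerical constant $46$.

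A minor correction on the $\widetilde V$ term: since $V_{\theta\,t}$ is unbounded (it involves $d_{\theta\,t}^2$, with only $r_m$ moments), the bounded-part/remainder decomposition is still needed, and the exponential piece will be of order $\exp(-c\,T^{1/2})$ (because $M_T b_T\asymp T^{1/2}$ enters the denominator), not $\exp(-c''T)$. This does not affect the conclusion---either rate crushes $T^{p/2}/\log T$---but the argument is the \emph{same} as for $\widetilde U$, not a separate large-deviations estimate. The paper in fact says exactly this: ``The sequence $\widetilde V_{\theta\,t}$ can be analysed using the exact same strategy (using the same choice of $M_T$ and $b_T$).''
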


It follows from Proposition \ref{prop:cond-uncond}, \ref{prop:covering} and \ref{prop:conc} that, for $T$ sufficiently large, 
\[
	2 \sup_{\theta\in\Theta}|R(\theta)-\overline R( \theta )| + 2 \sup_{\theta \in \Theta} |R_T(\theta) - \overline R(\theta)| \leq 47 \sigma \sqrt{ p \log T \over T } 
\] 
holds with high probability.
This fact and Proposition \ref{prop:generalized_gabor} imply Theorem \ref{thm:erm}.

\section{Conclusions}\label{sec:end}

Leo Breiman forcefully argued that there are two main philosophies to analyze data \citep{Breiman:2001}, the data modeling and the algorithmic modeling cultures.
The data modeling culture is based on assuming that the data is generated by a (partially) known model whereas 
the algorithmic modeling culture pursues to be agnostic about the data generating mechanism.
It is fair to say that the majority of research in the time series literature is typically carried out trough the lens of the data modeling culture,
whereas the fraction of contributions from the algorithmic modeling perspective is meager.

In this work we take the algorithmic standpoint and study the performance of empirical risk minimization to choose an algorithm to forecast 1-step-ahead a time series.
A key feature of the analysis is that the relationship between the time series and the class of algorithms is unspecified.
Our main result implies that the algorithm chosen by empirical risk minimization achieves asymptotically the optimal predictive performance that is attainable within the class.

The algorithmic modeling culture paves the way for the development of new forecasting strategies for time series applications.
Using the tools introduced in the nonlinear time series literature it is possible to develop general nonparametric theory to study the properties of these algorithmic forecasting strategies from primitive assumptions.

\appendix

\section{Proofs of Section \ref{sec:erm} and \ref{sec:proof}}

To simplify the analysis and without loss of generality we assume that $\Theta = [\underline \alpha_0,\overline \alpha_0]^K \times [\underline \alpha_1,\overline \alpha_1]^K \times [0,\overline \beta_1]^K$.
To simplify notation we write $W_{\theta\,t}=(Y_t,X_{\theta\,t})'$. 

\begin{proof}[Proof of Theorem \ref{thm:erm}]
The claim follows from Propositions \ref{prop:mom_and_mixing}, \ref{prop:generalized_gabor}, \ref{prop:cond-uncond}, \ref{prop:covering} and \ref{prop:conc}.
\end{proof}

\begin{proof}[Proof of Lemma \ref{lem:geometric_erg:x}]
We apply Lemmas \ref{lem:irre}, \ref{lem:aperiod} and \ref{lem:drift} together with Theorem 15.0.1 of \cite{Meyn:Tweedie:1993} to obtain that $\{X_{\theta\,t},t\geq0\}$ is $q_X$-geometrically 
ergodic with $q_X(x)=1+(\kappa'\dot x)^{2r_m}$, where $\dot x = (|h|,|f|,|d|)'$, and the vector $\kappa \in (0,1)^3$ is defined in Lemma \ref{lem:irre}.
Moreover, it is easy to see that Lemma \ref{lem:drift} still holds with $q_X(x)$ replaced with $q_X(x)/\underline \kappa^{2r_m}$, where $\underline \kappa$ is the minimum of the components of $\kappa$. 
The claim follows by noting that $V_X(x)=1+\|x\|_1^{2r_m} \leq q_X(x)/\underline \kappa^{2r_m}$.
\end{proof}

\begin{proof}[Proof of Lemma \ref{lem:geometric_erg:rate}]
The claim of the Lemma follows from an application of Theorem 12 by \cite{Roberts:Rosenthal:2004}.
Define $\tilde q_X(x) = 1 + \tilde \kappa_h |h| + \tilde \kappa_f |f| + \tilde \kappa_d |d| = 1 + \tilde \kappa'\dot x$ where $\tilde \kappa \in (0,1)^3$
as well as the set $\tilde S_{2\,\epsilon}=\left\lbrace x\in\mathcal X: \tilde\kappa'\dot x\leq \tilde M\right\rbrace$.
By arguments analogous to those used to claim that $S_{2\,\epsilon}$ defined in Lemma \ref{lem:irre} is small we can show that we can choose a $\tilde \kappa$ such that 
for any $x \in \tilde S_{2\,\epsilon}$ and any $A \in \mathcal B (\mathcal X)$ it holds that $P_X^2( x , A ) \geq \tilde c_* \tilde\varphi(A) $,
where $\tilde c_* \in (0,1)$ and $\tilde \varphi(A)=\mu_{Leb}(A\cap \tilde D)$ is Lebesgue measure restricted to an open rectangular region $\tilde D$, which is the analogue of $D$ defined in Lemma \ref{lem:irre}.
As we remark in the proof of Lemma \ref{lem:irre} $\tilde c_*$ and $\tilde D$ do not depend on $\theta$.

It is easily verified that $\tilde q_X(x)$ satisfies the drift criterion by the same arguments as in Lemma \ref{lem:drift}, and that $\tilde q_X(x) \leq \tilde V_X(x)$.
Define $\lambda^{-1} 
=1 - \tilde \gamma_1 + 
\tilde \gamma_2 / (2 + \tilde M),$
where $\tilde \gamma_1$ and $\tilde \gamma_2$ are analogous to $\gamma_1$ and $\gamma_2$ in Lemma \ref{lem:drift} and do not depend on $\theta$, and $\tilde M = \inf_{x \in S_{2\,\epsilon}^c} \tilde q_X(x)$.
The proof strategy of Theorem 12 by \cite{Roberts:Rosenthal:2004} is based on a coupling argument. 
To this end we use $\{X_{\theta\,t}^G,t\geq0\}$ to denote an independent copy of the Markov chain $\{X_{\theta\,t},t\geq0\}$ started at the stationary distribution, namely $X_{\theta\,0}^G\sim \pi_X$.
We define 
$
B = \max \lbrace 1 , \lambda^2 (1 - \tilde c_*)  \overline R \rbrace,
$
where the constant $\overline R$ is computed in Lemma \ref{lem:integral_rosenthal} from the Online Appendix.
We distinguish two cases. Note that in both cases we are applying Lemma \ref{lem:integral_rosenthal} from the Online Appendix.\\
$(i)$
Suppose that $\lambda^{-1} < 1$.
Then the assumptions of Proposition 11 and Theorem 12 by \cite{Roberts:Rosenthal:2004} are satisfied, thus applying the theorem we have that for any $ j \in \{ 1, \ldots, n \}$,
\begin{equation*}\label{eqn:Thm12_RobertsRosenthal}
\sup_{v:|v|\leq 1} \left| \int_{\mathcal X} \left[P^n_X(x,dx_1) - \pi_X(dx_1)\right]v(x_1)\right| \leq 
	(1-\tilde c_*)^j + \lambda^{-n} { B^{j-1} \over 2} \left( \tilde q_X(x)  + \mathbb E \tilde q_X(X_{\theta\,t})  \right)
\end{equation*}
holds for all $x \in \mathcal X$ and all $n \geq 1$.
Let 
$
\overline  V = 1 + \|H_t\|_{L_1} + \sup_{\theta \in \Theta} \|f_{\theta\,t}\|_{L_1}+
\sup_{\theta \in \Theta } \|d_{\theta\,t}\|_{L_1}$.
Obviously, $\overline V \geq \mathbb E \tilde q_X(X_{\theta\,t})$.
Furthermore, $\overline V < \infty$ by Lemma \ref{lem:geometric_erg:x} and Proposition \ref{prop:mom_and_mixing}$(i)$.
Set $j = \lfloor rn \rfloor$ for sufficiently small $r > 0$ so that the bound converges to zero at a geometric rate. 
We now have that \eqref{eqn:GE_def} holds with $\rho = (1 - \tilde c_*)^r \vee \left(\lambda^{-1} B^{r}\right)< 1$ and $R = 2 \overline V$ (note that $\tilde q_X \geq 1$). The result follows since $\rho$ and $R$ do not depend on $\theta$. \\
$(ii)$ In the case $\lambda^{-1} \geq 1$, we can find an enlargement of $\tilde S_{2\,\epsilon}$ for which the result in $(i)$ still holds  \citep{Roberts:Rosenthal:2004}.
We choose $M'$ such that $2 + M' > \tilde \gamma_2 / \tilde \gamma_1$.
Note that 
$
\tilde S'_{2\,\epsilon} = \{x\in\mathcal X: \tilde \kappa'\dot x \leq M' \}
$
is still a small set by the same arguments used in the proof of Part II of Lemma \ref{lem:irre}. Consequently, $P_X^2(x,A) \geq \tilde c_*' \tilde\varphi'(A)$ for all $x \in \tilde S_{2\,\epsilon}'$, where $\tilde c_*'$ is possibly smaller than $\tilde c_*$ but strictly positive (and independent of $\theta$), $\tilde \varphi'(A)=\mu_{Leb}(A\cap\tilde D')$, and $\tilde D'$ is analogous to $\tilde D$ in part $(i)$.
Clearly, $\lambda'^{-1} = 1 - \tilde \gamma_1 + {\tilde \gamma_2 \over 2 + M'} < 1$. 
The result now follows by using the same arguments as in part $(i)$ with $\lambda$ and $\tilde c_*$ replaced by $\lambda'$ and $\tilde c_*'$.
\end{proof}

\begin{proof}[Proof of Lemma \ref{lem:joint_transition}]
	For all $n\geq2$ we write 
	\begin{align*}
	P^n_W(w,dw_n) 
	&= \pi_{Y|X}(dy_n|x_n)\mathbb P(dx_n|w)
	=\pi_{Y|X}(dy_n|x_n)\int_{\mathcal H} \mathbb P(dx_n|w,h_1) P_H(h,dh_1)~,
	\end{align*}
	where the last equality follows because the $H_t$ component of $W_{\theta\,t}$ is a Markov chain of its own.
	Define $\tilde f_{\theta} = f_{\theta\,1}=\sum_k \{\alpha_{0k}+\alpha_{1k}y + \beta_1 f\}\mathbbm 1_{\{y \in \mathcal Y_k\}}$, and $\tilde d = d_1 = 1 + |y| + |f| + \overline \beta_1 d$. 
	Note that
	by the i.i.d.~assumption on the innovations $Z_{1\,t}$ and $Z_{2\,t}$ we have that the $X_{\theta\,t}$ component of $\{W_{\theta\,t},t\geq0\}$ has a 2-step transition mechanism which is entirely similar to the 1-step transition mechanism of the companion Markov chain defined in \eqref{eqn:Xt} with initial value given by $\tilde x(w,h_1)$.
	We denote $\tilde P_X^n((w,h_1), dx_n) = \mathbb P(dx_n| w,h_1)$.
	Note that
	$
	\tilde P_X^2((w,h_1), dx_2) = P_X^1(\tilde x, dx_2)
	$
	where $\tilde x =\tilde x(w,h_1)= (h_1,\tilde f_\theta, \tilde d)$. 
	Next, 
	$
	\tilde P_X^3 ((w,h_1), dx_3)
	= \int_{\mathcal X} P_X(\tilde x, dx_2)P_X(x_2,dx_3) =P_X^2(\tilde x, dx_3).
	$
	By induction, $\tilde P_X^n((w,h_1), dx_n)=P_X^{n-1}(\tilde x, dx_n)$, and the result follows.
\end{proof}

\begin{proof}[Proof of Lemma \ref{lem:geometric_erg:yx}]

$(i)$ First, $\{X_{\theta\,t}, t \geq 0\}$ viewed as a separate Markov chain is $V_X$-geometrically ergodic by Lemma \ref{lem:geometric_erg:x}. 
We begin by showing that
\begin{align}\label{eqn:inheritance_cond}
\mathbb E_{Y|X}\left( V_{Y,X}(W_{\theta\,t}) \right) \equiv 
\int_{\mathcal Y} V_{Y,X}(y,x)\pi_{Y | X}(dy|x) < C \cdot V_X(x)~.
\end{align} 
For all $x \in \mathcal X$, by Assumption \ref{asm:dgp}$(ii)$ we have that
\begin{align*}
	\mathbb E_{Y|X}\left(V_{Y,X}(W_{\theta\,t})\right)
	&= V_X(x) + \mathbb E_{Y|X}\left(|g_{y1}(h)+g_{y2}(h)\epsilon_{Y\,t}|^{2r_m}\right) \\
	&\leq
	V_X(x) + 2^{2r_m-1}|g_{y1}(h)|^{2r_m} + 2^{2r_m-1}|g_{y2}(h)|^{2r_m} 
	\mathbb E_{Y|X}(|\epsilon_{Y\,t}|^{2r_m}) \\
	&\leq V_X(x) + C \|x\|_1^{2r_m}
	\leq C \cdot V_X(x)~,
\end{align*}
where the constant $0<C<\infty$ may change from line to line.
To satisfy the definition of $V_{Y,X}$-geometric ergodicity, we must have that $\mathbb E \left(V_{Y,X}(Y_t,X_{\theta\,t})\right) < \infty$, where the expectation is taken with respect to the invariant measure $\pi_{Y,X}$. 
By \eqref{eqn:inheritance_cond} we have that
\begin{align*}
\mathbb E \left(V_{Y,X}(W_{\theta\,t})\right)
= \int_{\mathcal X}\pi_X(dx)\int_{\mathcal Y}V_{Y,X}(y,x)\pi_{Y|X}(dy|x)
\leq \int_\mathcal X \pi_X(dx) C\cdot V_X(x)<\infty,
\end{align*}
as expected.
For any  $w=(y,x)' \in \mathcal Y \times \mathcal X$ and all $n\geq 2$ we have that
\begin{align}\label{eqn:geom_inheritance}
&\sup_{v:|v| \leq V_{Y,X}}
\left\lvert
\int_{\mathcal Y\times \mathcal X} \left[P_{Y,X}^n(w,dw_n) -\pi_{Y,X}(dw_n)\right]v(w_n)
\right\rvert\nonumber\\
&=
\sup_{v:|v| \leq V_{Y,X}}
\left\lvert
\int_{\mathcal X} \int_{\mathcal H} P_X^{n-1}(\tilde x,dx_n)P_H(h,dh_1)-\pi_X(dx_n)\int_{\mathcal Y} \pi_{Y|X}(dy_n|x_n)v(y_n,x_n)
\right\rvert\nonumber\\
&\quad\leq
C\sup_{v':|v'| \leq V_X}
\left\lvert
\int_{\mathcal H} \left(\int_{\mathcal X} \left[P_X^{n-1}(\tilde x,dx_n)-\pi_X(dx_n)\right]v'(x_n)\right)P_H(h,dh_1)
\right\rvert\nonumber\\
&\quad\leq
C\int_{\mathcal H} 
\sup_{v':|v'| \leq V_X}
\left\lvert \int_{\mathcal X} \left[P_X^{n-1}(\tilde x,dx_n)-\pi_X(dx_n)\right]v'(x_n)\right\rvert P_H(h,dh_1)\nonumber \\
&\quad\leq C R_\theta\rho_\theta^{n-1} \mathbb E\left(V_X(\tilde x)\vert H_0=h\right)~,
\end{align}
where $R_\theta < \infty$, $\rho_\theta<1$.
The equality follows by Lemma \ref{lem:joint_transition},
the first inequality is a consequence of \eqref{eqn:inheritance_cond} and the last inequality is implied by the drift criterion that we have used in the proof of Lemma \ref{lem:geometric_erg:x}.
Furthermore, note that by Assumptions \ref{asm:dgp}$(i)$, $(iii)$, $(iv)$ and \eqref{eqn:hbounds} we have
\begin{align*}
\mathbb E\left(V_X(\tilde x)\vert H_0=h\right)
&\leq 1 + 3^{2r_m-1}\left(  \mathbb E\left(|H_1|^{2r_m}\vert H_0=h\right)+|\tilde f_\theta|^{2r_m} + |\tilde d|^{2r_m}  \right)\\
&<
1+C\cdot\begin{cases}
|\tilde f_\theta|^{2r_m} + |\tilde d|^{2r_m}+|h|^{2r_m}~, & |h|>M_\epsilon\\
1+|\tilde f_\theta|^{2r_m} + |\tilde d|^{2r_m}
~, & |h|\leq M_\epsilon
\end{cases}\\
&< C V_X(\check x)~,
\end{align*}
where $1<C<\infty$ may change from line to line and the choice of $\epsilon$ is such that $\mathbb E(a_h+b_h|\epsilon_{H\,t}|+\epsilon)^{2r_m}<1$ (Assumption \ref{asm:dgp}$(iv)$).\\
$(ii)$ Repeating the same arguments as in $(i)$ with $2r_m = 1$ and with $\sup_{v:|v|\leq1}$ instead of $\sup_{v:|v|\leq V_{Y,X}}$, we can use Lemma \ref{lem:geometric_erg:rate} in the last inequality of \eqref{eqn:geom_inheritance} instead of the standard drift criterion to obtain constants $\rho \in (0,1)$ and $R<\infty$ that do not depend on $\theta$. The proof is completed by noting that we can redefine $R$ to absorb $C\rho^{-1}$.
\end{proof}

\begin{proof}[Proof of Proposition \ref{prop:mom_and_mixing}]
$(i)$ 
We begin by noting that Lemma \ref{lem:geometric_erg:yx} implies that $\|Y_t\|_{L_{2r_m}}$, $\|H_t\|_{L_{2r_m}}$, $\|f_{\theta\,t}\|_{L_{2r_m}}$ and $\|d_{\theta\,t}\|_{L_{2r_m}}$ exist.
To complete the proof we need to establish that  $\sup_{\theta \in \Theta} \|f_{\theta\,t}\|_{L_{2r_m}}$ and $\sup_{ \theta \in \Theta  } \| d_{\theta\,t}\|_{L_{2r_m}}$ are finite.
Since $\alpha_{1\,k}>0$, we have that
$
\|f_{\theta\,t}\|_{L_{2r_m}} 
\leq |\overline \alpha_0| + \overline \alpha_1 \|Y_t\|_{L_{2r_m}} + \overline \beta_1 \|f_{\theta\,t}\|_{L_{2r_m}}$
by the existence of the stationary distribution of $\{W_{\theta\,t},t\geq0\}$ which follows from Lemma \ref{lem:geometric_erg:yx}.
The last inequality and Assumption \ref{asm:algo} imply that 
$
	\sup_{\theta \in \Theta} \|f_{\theta\,t}\|_{L_{2r_m}} \leq  (1 - \overline \beta_1 )^{-1} \left(|\overline \alpha_0| + \overline \alpha_1 \|Y_t\|_{L_{2r_m}}\right) <\infty ~.
$
Analogously, we have that 
$
\|d_{\theta\,t}\|_{L_{2r_m}} 
\leq 1 + \|Y_{t}\|_{L_{2r_m}} + \| f_{\theta\,t} \|_{L_{2r_m}}  + \overline \beta_1 \| d_{\theta\,t} \|_{L_{2r_m}}.
$
Thus,
$\sup_{\theta \in \Theta  } \|d_{\theta\,t}\|_{L_{2r_m}} \leq (1 - \overline \beta_1)^{-1} \left( 1 + \| Y_t \|_{L_{2r_m}} + \sup_{\theta \in \Theta} \|f_{\theta\,t}\|_{L_{2r_m}} \right) < \infty.$ \\
$(ii)$
It is enough to show that $\{W_{\theta\,t},t\geq0\}$ is geometrically $\beta$-mixing, since $\alpha(l) \leq \beta(l)$, where
$
\beta(l) = 
\sup_{t\in\mathbb Z}
{1\over 2}\sup \sum_{i=1}^I\sum_{j=1}^J
\left\lvert \mathbb P( A_i \cap B_i ) - \mathbb P(A_i)\mathbb P(B_i)\right\rvert,
$
and the supremum is taken over all pairs of finite partitions
$\{A_1,\dots,A_I \}$ and $\{B_1,\dots,B_J \}$ of $\Omega$ such that
$A_i \in \sigma\{W_{\theta\,s}: s \leq t\}$, $i=1,\dots,I$, and 
$B_j \in \sigma\{W_{\theta\,s}: s \geq t + l\}$, $j=1,\dots,J$. 
Let $\delta_{w}(A) = \mathbbm 1\{w\in A\}$ for any $A \in \mathcal B(\mathcal Y \times \mathcal X)$.
By Proposition 4 in \cite{Liebscher:2005}, $\{W_{\theta\,t},t\geq0\}$ is $\beta$-mixing with geometrically decaying mixing numbers if 
$(a)$ $\int_{\mathcal Y \times \mathcal X} V_X(x_0) \delta_{y,x}(dw_0)\} =V_X(x) < \infty$, and
$(b)$ $\{W_{\theta\,t},t\geq0\}$ is $Q$-geometrically ergodic in the sense of \cite{Liebscher:2005}
with $Q(w) = V_X(x)$. Condition $(a)$ holds for all $w \in \mathcal Y \times \mathcal X$.
For condition $(b)$, we first need to show that 
$\int_{\mathcal Y \times \mathcal X} V_{X}(x_n) \pi_{Y,X}(dw_n) < \infty$. 
This is easily obtained by noting that
\begin{eqnarray*}
\int_{\mathcal Y \times \mathcal X} V_{X}(x_n) \pi_{Y,X}(dw_n) 
=\int_{\mathcal X} V_{X}(x_n)\pi_X(dx_n) \int_{\mathcal Y} \pi_{Y|X}(dy_n|x_n)
< \infty,
\end{eqnarray*}
where the last inequality follows from the $V_X$-geometric ergodicity of $\{X_{\theta\,t},t\geq0\}$.
As for the remaining part of condition $(b)$, notice that from Lemma \ref{lem:geometric_erg:yx}$(ii)$ we have that
$\left\|P_{Y,X}^l(w,\cdot) - \pi_{Y,X}\right\|_{TV}
	\leq R \tilde V_X(\check x)\rho^l \wedge 1$,
where 
\[
\left\|P_{Y,X}^l(w,\cdot) - \pi_{Y,X}\right\|_{TV} = 
\sup_{v:|v|\leq 1}\left\lvert \int_{\mathcal Y \times \mathcal X} \left[P^l_{Y,X}\left(w , dw_l\right) - \pi_{Y,X}(dw_l)\right] v(w_l)\right\rvert~,
\]
which completes the proof of condition $(b)$.
It remains to be shown that the rate of decay does not depend on $\theta$.
For any probability measure $\tau$ on $\mathcal Y\times \mathcal X$, define
$
\xi_l(\tau)=\int_{\mathcal Y \times \mathcal X} 
\left\|P_{Y,X}^l(w,\cdot) - \pi_{Y,X}\right\|_{TV} \cdot  \tau(dw).
$
By virtue of part $(ii)$ of Lemma \ref{lem:geometric_erg:yx} we compute that 
$\xi_l(\pi_{Y,X}) \leq R\check V \rho^l$, where 
\begin{align*}
\check V= \left(2 + \|H_t\|_{L_1} + (1+\overline\alpha_1)\|Y_t\|_{L_1}+(1+\overline\beta_1) \sup_{\theta\in\Theta}\|f_{\theta\,t}\|_{L_1} + \overline \beta_1 \sup_{\theta\in\Theta}\|d_{\theta\,t}\|_{L_1}\right)~,
\end{align*}
and $\xi_l(\delta_{y,x}) = \left\|P_{Y,X}^l(w,\cdot) - \pi_{Y,X}\right\|_{TV} \leq R\tilde V_X(\check x)\rho^l \wedge 1.$
Now, by Proposition 3 in \cite{Liebscher:2005} we have that for any $w \in \mathcal Y\times \mathcal X$, and $m=\lfloor l/2\rfloor$,
$
\beta(l) \leq 3 \xi_m(\delta_{y,x}) + \xi_m(\pi_{Y,X}) \leq 
R\left(\check V + 3 \tilde V_X(\check x) \right) \rho^m \wedge 1.
$
It is not difficult to verify that
$
\alpha(l) \leq \beta(l) \leq \exp\left(-C_\alpha l^{r_\alpha}\right) \wedge 1
$
for all $l\geq1$.
The choice of $C_\alpha$ and $r_\alpha$ depends on $R,\check V,\rho$ and $\tilde V_X(\check x)$.
Since none of those depend on $\theta$ by Lemma \ref{lem:geometric_erg:rate}, neither does the rate of decay of the uniform bound for the $\alpha$-mixing coefficients. The claim follows by redefining $R$ and noting that $\tilde V_X \geq 1$.
\end{proof} 

\begin{proof}[Proof of Proposition \ref{prop:generalized_gabor}]
	Let $\{ (Y^G_t , f^G_{\theta\,t})' , t\geq 0 \}$ be an independent copy of the process $\{ (Y_t , f_{\theta\,t})' , t\geq 0 \}$
	and define $\overline R(\hat \theta) = \mathbb E\left[ {1\over M} \sum_{t=T+1}^{T+M} L(Y_t^G,f^G_{\hat \theta\,t}) \right]$.
	By the properties of infimum and supremum and the definition of empirical risk minimizer (i.e.~$R_T(\theta)\geq R_T(\hat\theta)$ for all $\theta \in \Theta$), we have that
	\begin{align*}
	R (\hat \theta ) - \inf_{\theta \in \Theta } R(\theta)
	&= 
	R( \hat \theta ) - \overline R(\hat\theta) + \overline R(\hat\theta) - \inf_{\theta\in\Theta} \left[ \overline R(\theta)+ R(\theta)-\overline R(\theta) \right] \nonumber \\
	&\leq
	\left[R( \hat \theta ) - \overline R(\hat\theta)\right] + \left[\overline R(\hat\theta) -\inf_{\theta\in\Theta} \overline R(\theta)\right]- \inf_{\theta\in\Theta} \left[ R(\theta)-\overline R(\theta) \right] \nonumber\\
	&\leq 2\sup_{\theta\in\Theta}|R(\theta)-\overline R(\theta)| + 2\sup_{\theta\in\Theta}|R_T(\theta)-\overline R(\theta)| ~, 
	\end{align*}
	where the last inequality follows from Lemma 8.2 in \cite{Devroye:Gyorfi:Lugosi:1996}.
\end{proof}

\begin{proof}[Proof of Proposition \ref{prop:cond-uncond}]
	
First, by the triangular inequality we have
\begin{align}\label{eqn:triangular}
\sup_{\theta\in\Theta} \left|R(  \theta ) - \overline R( \theta )\right|
&\leq
\sup_{\theta\in\Theta}
{1\over M}\sum_{t=T+1}^{T+M}
\left| \mathbb E_T    L\left(Y_{t},f_{\theta\,t}\right) 
- 
\mathbb E  L\left(Y_{t},f_{\theta\,t}\right) \right|~,
\end{align}
where $\mathbb E_T(\cdot) = \mathbb E(\cdot \vert Y_T,\ldots,Y_1)$.
For each $t=T+1,\ldots,T+M$, it follows from Ibragimov's inequality \citep[Theorem 14.2]{Davidson:1994} that
\begin{align}\label{eqn:ibra}
\mathbb E\left| \mathbb E_T    L(Y_{t}, f_{\theta\,t}) 
- 
\mathbb E  L(Y_{t},f_{\theta\,t}) \right|
&\leq 
6\alpha(l)^{1-1/r}\|L(Y_t,f_{\theta\,t})\|_{L_{r}}~,\quad l = t-T,
\end{align}
for any $r \in \{1,\ldots, r_m\}$.
By combining \eqref{eqn:ibra} with Markov's inequality applied to the non-negative random variable 
$\left| \mathbb E_T    L(Y_{t}, f_{\theta\,t})  
- 
\mathbb E  L(Y_{t},f_{\theta\,t}) \right|$, Proposition \ref{prop:mom_and_mixing}, averaging and taking the supremum on both sides, we have that 
\begin{align}\label{eqn:markov}
\sup_{\theta\in\Theta}	
{1\over M}\sum_{t=T+1}^{T+M}
\left| \mathbb E_T    L(Y_{t}, f_{\theta\,t}) 
- 
\mathbb E  L(Y_{t},f_{\theta\,t}) \right| \leq
{6\sup_{\theta\in\Theta}\|L(Y_t,f_{\theta\,t})\|_{L_{r}}\over \delta M}\sum_{l=1}^M e^{-\left(1-1/r\right)C_\alpha l^{r_\alpha}}
\end{align}
with probability at least $1-\delta$.
Note that Proposition \ref{prop:mom_and_mixing} implies that $\sup_{\theta\in\Theta}\|L(Y_t,f_{\theta\,t})\|_{L_{r}}<\infty$.
It follows from \eqref{eqn:triangular}, \eqref{eqn:ibra} and \eqref{eqn:markov} that if we set $\delta = T^{-1/2}$ 
\begin{align*}
\sup_{\theta\in\Theta} \left|R(\theta ) - \overline R( \theta )\right|
&\geq
{1 \over \sqrt{T} }
{6\sup_{\theta\in\Theta}\|L(Y_t,f_{\theta\,t})\|_{L_{r}}\over  \gamma }\sum_{l=1}^{\infty} e^{-\left(1-1/r\right)C_\alpha l^{r_\alpha}} 
\end{align*}
holds with probability at most $T^{-{1\over 2}} $.
\end{proof}

\begin{proof}[Proof of Proposition \ref{prop:covering}]
	Define \( g_{\theta\,t} = L (Y_t , f_{\theta\,t} ) \).
	Let $\Theta_i = \{ \theta \in \mathbb R^p : \|\theta-\theta_i\|_2 \leq \delta \}$ with $\theta_i \in \Theta$ for $i=1,\ldots,N_\delta$ denote a $\delta$-covering of $\Theta$ for some $\delta\in (0,1]$. 
	Then, we have that
	\begin{eqnarray*}
		&& 
		\mathbb P\left( \sup_{\theta \in \Theta} \left| R_T(\theta) - \overline R(\theta) \right| > {\varepsilon \over 2} \right) 
		\leq \sum_{i=1}^{N_\delta} \mathbb P\left( \sup_{\theta \in \Theta_i} \left| {1\over T} \sum_{t=1}^T g_{\theta\,t} - \mathbb E g_{\theta\,t} \right| > {\varepsilon \over 2 }\right) \\
		&&
		\quad \leq \sum_{i=1}^{N_\delta} \mathbb P\left( \left| {1\over T} \sum_{t=1}^T g_{i\,t}- \mathbb E g_{i\,t} \right| > {\varepsilon\over 4} \right) \\
		&&
		\quad + \sum_{i=1}^{N_\delta} \mathbb P\left( \sup_{\theta \in \Theta_i} \left| {1\over T} \sum_{t=1}^T g_{\theta\,t}- \mathbb Eg_{\theta\,t} 
		- \left( {1\over T} \sum_{t=1}^T g_{i\,t} - \mathbb E g_{i\,t} \right) \right| > {\varepsilon \over 4} \right) ~,
	\end{eqnarray*}
	where $g_{i\,t} = g_{\theta_i\,t}$.
	Lemma \ref{lem:gdiffbound} from the Online Appendix establishes that $(i)$ for each $\theta \in \Theta_i$ we have \( |g_{\theta\,t} -  g_{i\,t}| \leq \bar g_{i\,t} = \delta C_\psi ( d_{\theta_i\,t}^2 + 2 |Y_t-f_{\theta_i\,t}|d_{\theta_i\,t})\)
	and $(ii)$ there exists a positive constant $C_d$ (that does not depend on $i$ and $t$) such that for all $\delta \in (0,1]$ we have that $\mathbb E \bar g_{i\,t} \leq \delta C_d $.
	Set $\delta = \varepsilon / (16 C_d)$. Then, for all $\varepsilon < 16 C_d $ 
	\begin{eqnarray*}
		&&
		\mathbb P\left( \sup_{\theta \in \Theta_i} \left| {1\over T} \sum_{t=1}^T ( g_{\theta\,t} - g_{i\,t} )
		- \mathbb E ( g_{\theta\,t} - g_{i\,t} ) \right| > {\varepsilon \over 4} \right) \leq \mathbb P\left( {1\over T} \sum_{t=1}^T (\bar g_{i\,t} + \mathbb E \bar g_{i\,t}) > { \varepsilon \over 4} \right) \\
		&& \quad = \mathbb P\left( {1\over T} \sum_{t=1}^T (\bar g_{i\,t} - \mathbb E \bar g_{i\,t}) > { \varepsilon \over 4} - 2 \mathbb E \bar g_{i\,t} \right) 
		\leq \mathbb P\left( {1\over T} \sum_{t=1}^T (\bar g_{i\,t}- \mathbb E \bar g_{i\,t}) > {\varepsilon \over 8} \right) 
	\end{eqnarray*}
	holds, where the last inequality follows from the fact that $\mathbb E \bar g_{i\,t} \leq \varepsilon/16$.
	The claim follows after defining $U_{\theta_i\,t} = g_{i\,t}$ and $V_{\theta_i\,t} = C_\psi \left( d_{\theta_i\,t}^2 + 2 |Y_t-f_{\theta_i\,t}| d_{\theta_i\,t}\right) = 16 C_d \bar g_{i\,t} / \varepsilon $
	and noting that $N_\delta \leq \left(1 + 2C_\Theta / \delta \right)^p = \left(1 + 32 C_\Theta C_d  / \varepsilon \right)^p $.
\end{proof}

\begin{proof}[Proof of Proposition \ref{prop:conc}]
	The analysis of the sequences $\{ \widetilde U_{\theta\,t}, t\geq0 \}$ and $\{ \widetilde V_{\theta\,t} , t\geq 0\}$ is analogous. 
	Here we focus on $\{ \widetilde U_{\theta\,t} , t\geq 0\}$.
	To simplify notation we omit the subscript $\theta$ in the notation of the sequence $\{ \widetilde U_{\theta\,t}, t\geq0 \}$.
	
	Let \( \sum_{t=1}^T \widetilde U_{t} = \sum_{t=1}^T U'_{t} + \sum_{t=1}^T U''_{t} \) where \( U'_{t} =  \widetilde{U}_{t} \mathbbm{1}_{\{|\widetilde{U}_{t}| \leq b_T\}} - \mathbb E \left(\widetilde{U}_{t} \mathbbm{1}_{\{|\widetilde{U}_{t}| \leq b_T\}} \right) \) and 
	\( U''_{t}  = \widetilde{U}_{t} \mathbbm{1}_{\{|\widetilde{U}_{t}| > b_T\}}   - \mathbb E \left(\widetilde{U}_{t} \mathbbm{1}_{\{|\widetilde{U}_{t}| > b_T\}} \right) \).
	We then have that 
	\begin{eqnarray*}
		\mathbb P\left( \left| {1\over T} \sum_{t=1}^T \widetilde{U}_{t} \right| > { \varepsilon_T \over 4} \right) 
		\leq \mathbb P\left( \left|\sum_{t=1}^T U'_{t} \right| > {T \varepsilon_T \over 8} \right) + 
		\mathbb P\left( \left|\sum_{t=1}^T  U''_{t} \right| > {T \varepsilon_T \over 8} \right) ~.
	\end{eqnarray*}
	Define $M_T = \lfloor T^{{1\over 2}-{p+1\over 2(r_m-1)}} \log^{-{1\over 2}} T \rfloor$ and $ b_T = C_b T^{p+1 \over 2(r_m-1) } (p \log T)^{-{p-1\over 2(r_m-1)}}$ where $C_b$ is a positive constant to be chosen in what follows.
	The sequence $\{ U'_{t} \}_{t=1}^T$ has the same mixing properties of $\{\widetilde{U}_{t} \}_{t=1}^T$ and $\| U_{t}' \|_{L_\infty} < 2b_T$. 
	Then for all $T$ sufficiently large 
	and $p < r_m - 2$ the conditions of Theorem 2.1 in \citet{Liebscher:1996} are satisfied since 
	$ M_T \in \{1,\dots,T\} $ and \( 8 M_T b_T  \leq {T \varepsilon_T / 2} \).
	Then, we have
	\begin{align*}
		\mathbb P\left( \left| \sum_{t=1}^T U'_t \right| > {T \varepsilon_T\over 2} \right) 
		&\leq 4 \exp\left( - { T^2 \varepsilon_T^2 \over 4096 { T\over M_T} \mathbb E ( \sum_{t=1}^{M_T} U'_{t} )^2  + {64 \over 3} M_T b_T T \varepsilon_T  }  \right)\\
		&\quad+ 4{ T \over M_T } \exp\left(-C_\alpha M_T^{r_\alpha}\right) ~.
	\end{align*}
	If we define $\gamma(l) = | \mbox{Cov}(U'_{t},U'_{t+l})|$ for $l=0,\ldots,T-1$,
	then we have that $\mathbb E ( \sum_{t=1}^{M_T} U'_{t} )^2 \leq M_T ( \gamma(0) + 2 \sum_{l=1}^{\infty} \gamma(l) )$.
	Let $C_m = \sup_{\theta\in\Theta}\|\widetilde{U}_t\|_{L_{r_m}}$.
	Noting that $L(Y_t,f_{\theta\,t})\geq0$, Davydov's inequality implies 
	\begin{eqnarray*}
		\gamma(l) & \leq &
		2 {r_m\over r_m-2} 2^{1-{2\over r_m}} \alpha(l)^{1 - {2\over r_m}} \| \widetilde{U}_{t}' \|_{L_{r_m}} \| \widetilde{U}_{t+l}' \|_{L_{r_m}} 
		\leq  16 C_m^2  {r_m\over r_m-2} \alpha(l)^{1 - {2\over r_m}} ~,
	\end{eqnarray*}
	for $l=0,\ldots,T-1$, and we use the fact that for any $r$ we have $\|\widetilde{U}_t'\|_{L_r} \leq 2 \|\widetilde{U}_t\|_{L_r}$.
	Thus, we have $ \mathbb E ( \sum_{t=1}^{M_T} U'_{t} )^2 \leq M_T 16 C_m^2 {r_m \over r_m -2}( 1 + 2 \sum_{l=1}^\infty \exp\left(-C_\alpha l^{r_\alpha}\right)^{1-{2\over r_m} }) = M_T \sigma^2 $.
	For all $T$ sufficiently large, we have that
	$ \log \left( 1 + {32 C_\Theta C_d \over \varepsilon_T } \right)^p \leq {1 \over 2} p \log T $.
	Then, for all $T$ sufficiently large, it holds that
	\begin{eqnarray*}
		&& \left( 1 + {32 C_\Theta C_d \over \varepsilon_T } \right)^p  \mathbb P\left( \left|  \sum_{t=1}^T U'_t \right| > {T \varepsilon_T\over 2} \right) \\
		&& \quad \leq 4 \left( 1 + {32 C_\Theta C_d \over \varepsilon_T } \right)^p \left[ \exp \left( - { 2116 \sigma^2 p \log T \over 4096 \sigma^2 + {2944 \over 3} \sigma C_b (p \log T)^{-{p-1\over 2(r_m-1)}}  } \right) 
		+ { T \over M_T } e^{-C_\alpha M_T^{r_\alpha}} \right] \\
		&& \quad = o( \log^{-1} T ) ~.
	\end{eqnarray*}	
	Let $C_m = \sup_{\theta \in \Theta} \| \widetilde U_{t} \|_{L_{r_m}}$. We note that for all $T$ sufficiently large, 
	\begin{align*}
		&\left( 1 + {32 C_\Theta C_d \over \varepsilon_T } \right)^p \mathbb P\left( \left| \sum_{t=1}^T U''_{t} \right| > { {T \varepsilon_T} \over 2 }\right)  
		 \stackrel{(a)}{\leq}  \left( 1 + {32 C_\Theta C_d \over \varepsilon_T } \right)^p{8 \over T \varepsilon_T} \mathbb E \left| \sum_{t=1}^T U''_{t} \right| \\ 
		&\quad\leq\left( 1 + {32 C_\Theta C_d \over \varepsilon_T } \right)^p {16 \over \varepsilon_T} \mathbb E \left|\widetilde{U}_{t} \mathbbm{1}_{\{|\widetilde{U}_{t}| > b_T\}} \right| 
		\stackrel{(b)}{\leq}  \left( 1 + {32 C_\Theta C_d \over \varepsilon_T } \right)^p{16 \over \varepsilon_T } { C_m^{r_m} \over b_T^{r_m-1} }  \\
		&\quad\leq  \left( 1 + 32 C_\Theta C_d \right)^p{16 \over \varepsilon_T^{p+1} } { C_m^{r_m} \over b_T^{r_m-1} }  
		\stackrel{(c)}{\leq} \log^{-1} T ~,
	\end{align*}
	where $(a)$ follows from Markov's inequality $(b)$ from the inequality $\mathbb E( | X \mathbbm{1}_{\{| X|>b\}} | ) \leq \mathbb E(|X|^r)/b^{r-1}$ for any random variable $ X$ wih finite $r$-th moment and positive constant $b$, and $(c)$ from a sufficiently large choice of the constant $C_b$.
	The sequence $\tilde V_{\theta\,t}$ can be analysed using the exact same strategy (using the same choice of $M_T$ and $b_T$ used for $\tilde U_T$).
\end{proof}

\section{Irreducibility, Aperiodicity, Drift Criterion}

Before we proceed, we establish upper bounds on $|H_t|$, $|f_{\theta\,t}|$ and $|d_{\theta\,t}|$.
By Assumption \ref{asm:dgp}$(i)$ we have that for any $\epsilon >0$ there exists some $1<M_\epsilon<\infty$ such that
$
|H_t| \leq 
(a_h+b_h|Z_{1\,t}|+\epsilon)|H_{t-1}|$
for all $ |H_{t-1}|>M_\epsilon$.
The same assumption also implies that when $|H_{t-1}|\leq M_\epsilon$ we have
$
|H_t|\leq |g_{h1}(H_{t-1})| + |g_{h2}(H_{t-1})|~|Z_{1\,t}| \leq \overline g_h^\epsilon( 1 + |Z_{1\,t}|),
$
where $\overline g_h^\epsilon = \sup |g_{h1}(h)| \vee \sup|g_{h2}(h)|<\infty$ and the supremums are taken with respect to $h \in [-M_\epsilon,M_\epsilon]$.
Hence we have that
\begin{align}\label{eqn:hbounds}
|H_t| \leq 
(a_h+b_h|Z_{1\,t}|+\epsilon)|H_{t-1}|\mathbbm 1_{ \{|H_{t-1}|>M_\epsilon\}}
+\overline g_h^\epsilon (1 + |Z_{1\,t}|)\mathbbm 1_{\{ |H_{t-1}|\leq M_\epsilon\}}~.
\end{align}
By \ref{asm:algo} we have $|f_{\theta\,t}|\leq \overline \alpha_0 + \overline \alpha_1 |g_{y1}(H_{t-1})|+\overline \alpha_1|g_{y2}(H_{t-1})|~|Z_{2\,t}| + \overline \beta_1 |f_{\theta\,t-1}|$. Furthermore, it follows from Assumption \ref{asm:dgp}$(i)$ and $(ii)$ that
\begin{align}\label{eqn:fbounds}
|f_{\theta\,t}| 
&\leq
\begin{cases}
\overline \alpha_1 C_{y} (1 +\epsilon+ |Z_{2\,t}|)~|H_{t-1}| + \overline \beta_1 |f_{\theta\,t-1}| & |H_{t-1}| > M_\epsilon\\
\overline \alpha_0 + \overline \alpha_1 \overline g_y^\epsilon(1 + |Z_{2\,t}|)+ \overline \beta_1 |f_{\theta\,t-1}|& |H_{t-1}|\leq M_\epsilon
\end{cases}~,
\end{align}
where $C_y = C_{y1}\vee C_{y2}$, $\overline g_y^\epsilon = \sup |g_{y1}(h)|\vee \sup |g_{y2}(h)|<\infty$ and the supremums are taken with respect to $h \in [-M_\epsilon,M_\epsilon]$. We note that $M_\epsilon$ can be redefined if necessary in order to remove the constant $\overline \alpha_0$ from the bound when $|H_{t-1}|>M_\epsilon$.
Lastly, using analogous arguments we have 
\begin{align}\label{eqn:dbounds}
|d_{\theta\,t}| &\leq 
\begin{cases}
C_{y}(1+\epsilon+|Z_{2\,t}|)~|H_{t-1}|+|f_{\theta\,t-1}| + \overline\beta_1 |d_{\theta\,t-1}|~,
& |H_{t-1}|>M_\epsilon\\
1 + \overline g_y^\epsilon(1+|Z_{2\,t}|) + |f_{\theta\,t-1}| + \overline\beta_1 |d_{\theta\,t-1}| ~,
& |H_{t-1}|\leq M_\epsilon
\end{cases}
\end{align}
where again $M_\epsilon$ may be redefined if necessary in order to remove the constant $1$ from the bound  when $|H_{t-1}|>M_\epsilon$.\\
Second, we introduce a partition of the state space $\mathcal X$ that plays a key role in the subsequent proofs.
Let $\kappa = (\kappa_h,\kappa_f,\kappa_d)'\in(0,1)^3$ where the specific choice of this vector will be determined in what follows.
We define the sets
\begin{align}\label{eqn:S1_S2}
S_{2\,\epsilon} = \{ (h,f,d)\in \mathcal X: \kappa_h|h|+\kappa_f|f|+\kappa_d|d| \leq M \} \quad \text{and} \quad S_{1\,\epsilon} = \mathcal X \setminus S_{2\,\epsilon},
\end{align}
where $M$ is a positive constant (note that in general $M \neq M_\epsilon$). \\
Third, let $\rho_{z\,\epsilon} = a_h + b_h|Z_{1\,t}| + \epsilon$, $C_{y,z}^\epsilon = C_y(1 + \epsilon + |Z_{2\,t}|)$ and define the matrix \[
	\mathbf C_\epsilon(Z_t) = 
	\begin{bmatrix}
	\rho_{z\,\epsilon} & 0 & 0 \\
	\overline \alpha_1 \tilde C_{y,z}^\epsilon & \overline \beta_1+\epsilon & 0\\
	\tilde C_{y,z}^\epsilon &  1 &  \overline \beta_1+\epsilon 
	\end{bmatrix} ~.
\]
Assumption \ref{asm:dgp}$(iii)$ implies that $\mathbb E \left( \mathbf C_{\epsilon}(Z_t)^{\otimes 2r_m} \right)$ exists.\\
Finally, we set $\epsilon>0$ in a way such that $\mathbb E( a_h+b_h|Z_{1\,t}|+\epsilon)^{2r_m}<1$ and $\overline \beta_1 + \epsilon < 1$. 
Assumptions \ref{asm:dgp}$(iv)$ and \ref{asm:algo} imply that an $\epsilon$ that satisfies these constraints exists.
For this particular choice of $\epsilon$, we have that the spectral radius of $\mathbb E\left(\mathbf C_\epsilon(Z_t)^{\otimes 2r_m}\right)$ is strictly less than unity. Such a choice of $\epsilon$ will be assumed throughtout.

\begin{lemma}[Irreducibility]\label{lem:irre}
	Consider the setting of Proposition \ref{lem:geometric_erg:x}.
	There exists an open rectangular region $D \subset \mathcal X$ that does not depend on $\theta$ or $x$ such that the Markov chain $\{X_{\theta\,t},t\geq0\}$ is $\varphi$-irreducible with $\varphi(A)= \mu_{Leb}(A \cap D)$ for any $A \in \mathcal B(\mathcal X)$.
\end{lemma}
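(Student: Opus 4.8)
The plan is to verify $\varphi$-irreducibility straight from the definition: produce a fixed open rectangle $D\subset\mathcal X$ so that for every $x\in\mathcal X$ and every $A$ with $\mu_{Leb}(A\cap D)>0$ there is an $n=n(x)$ with $P^n_X(x,A)>0$. I would obtain this by composing two independent pieces. First, a \emph{funnelling step}: from an arbitrary state the chain reaches a fixed bounded set $\mathcal B\subset\mathcal X$ with strictly positive probability in finitely many steps. Second, a \emph{smoothing step}: started anywhere in $\mathcal B$, the two-step kernel has a component absolutely continuous with respect to three-dimensional Lebesgue measure whose density is bounded below by a strictly positive constant on a fixed open rectangle $D$, uniformly over $\theta\in\Theta$ and over $\mathcal B$. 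Concatenating, $P^{n(x)+2}_X(x,A)\ge c\,\mu_{Leb}(A\cap D)$ for some $c>0$, which is the claim with $\varphi(A)=\mu_{Leb}(A\cap D)$.

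For the funnelling step I would run the chain with all innovations confined to a fixed compact interval of the support. Since $a_h<1$ (a consequence of Assumption \ref{asm:dgp}$(iv)$) and $\phi_H$ is bounded away from zero on compacts (Assumption \ref{asm:dgp}$(iii)$), this interval can be taken so that $a_h+b_h|\epsilon_{H\,t}|+\epsilon<1$ on it; then the bounds \eqref{eqn:hbounds}--\eqref{eqn:dbounds} force $|H_t|$ to contract geometrically while $|H_{t-1}|$ is large, and once $|H_t|$ is small, $\overline\beta_1<1$ (Assumption \ref{asm:algo}$(i)$) drags $|f_{\theta\,t}|$ and then $|d_{\theta\,t}|$ into bounded ranges. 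After finitely many such steps --- each of positive probability, a number depending on $x$ but a region $\mathcal B$ that does not, everything uniform over $\theta$ --- the chain lies in $\mathcal B$. The $\mathcal Y=\mathbb R$ and $\mathcal Y=\mathbb R_+$ cases differ only in which compact interval of innovations is available.

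The smoothing step is the heart of the matter. One step cannot give a three-dimensional absolutely continuous component: within a regime $f_{\theta\,1}$ is an injective affine function of the scalar innovation $\epsilon_{Y\,0}$, and $d_{\theta\,1}$ is then a \emph{deterministic} (piecewise affine) function of $f_{\theta\,1}$, so $P_X(x,\cdot)$ lives on a two-dimensional surface. Over two steps the obstruction vanishes because $d_{\theta\,2}$ depends on $f_{\theta\,1}$, which is not determined by $(H_2,f_{\theta\,2})$. Concretely, I would freeze the topmost regime $\mathcal Y_K$ at \emph{both} steps --- attainable by sending the $\epsilon_Y$ innovations to large positive values, which simultaneously pins down every absolute-value sign appearing in $d_{\theta\,t}$ --- hold $\epsilon_{H\,1}$ in a compact positive-density interval that keeps $H_1$ in a fixed bounded range, and read off $(H_2,f_{\theta\,2},d_{\theta\,2})$ as an affine function of the remaining three innovations $(\epsilon_{H\,2},\epsilon_{Y\,0},\epsilon_{Y\,1})$. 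Its Jacobian is block-triangular, with determinant equal up to sign to $\alpha_{1K}\bigl(\alpha_{1K}+\overline\beta_1-\beta_{1K}\bigr)$ times a product of one evaluation of $|g_{h2}|$ and two of $g_{y2}$. This is bounded away from zero \emph{globally}: $\inf|g_{h2}|>0$ and $\inf g_{y2}>0$ (Assumption \ref{asm:dgp}$(i)$ and $(ii)$), $\alpha_{1K}\ge\underline\alpha_1>0$, and $\alpha_{1K}+\overline\beta_1-\beta_{1K}\ge\underline\alpha_1>0$ because $\beta_{1K}\le\overline\beta_1$ (Assumption \ref{asm:algo}$(i)$) --- which is precisely the point at which the restriction $\beta_{1k}\le\overline\beta_1$, rather than the usual SETARMA stability bound, earns its keep. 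Because $\phi_Y$ and $\phi_H$ are bounded away from zero on every compact subset of their supports, the innovation intervals may be taken large, so the image of the affine map is a large parallelepiped and the push-forward of the product innovation density is bounded below on its interior; since the part of this parallelepiped's position that depends on $\theta$ and on the point of $\mathcal B$ stays bounded while its diameter can be made as large as we like, one fixed open rectangle $D$ sits inside every such parallelepiped.

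I expect the smoothing step to be the obstacle, and within it the interaction of three features: the degeneracy of the $d_{\theta\,t}$ coordinate, only removed by passing to two steps; the threshold indicators, which fragment the dynamics and are tamed by locking onto the unbounded top regime, where regime consistency is automatic for large $Y$ and the $2\times2$ Jacobian block is provably non-singular; and the fact that $g_{h1},g_{h2},g_{y1},g_{y2}$ are merely Borel measurable, so no Jacobian contribution can be wrung from the nonlinearity of the $H$-recursion --- only its affine dependence on $\epsilon_{H\,t}$ is usable, which is what dictates steering $H$ only through that affine slope. The remaining work --- the regime-consistency inequalities, showing the large parallelepiped engulfs a fixed rectangle, and the $\mathcal Y=\mathbb R$ versus $\mathcal Y=\mathbb R_+$ split --- is routine bookkeeping.
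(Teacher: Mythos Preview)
Your proposal is correct and follows essentially the same three-part strategy as the paper: funnel into a bounded set under constrained innovations, establish a two-step minorization from that set by locking onto the top regime $\mathcal Y_K$ and inverting the affine map $(Z_{1,2},Z_{2,1},Z_{2,2})\mapsto X_{\theta,2}$, then concatenate via Chapman--Kolmogorov; your identification of the crucial Jacobian factor $\alpha_{1K}(\alpha_{1K}+\overline\beta_1-\beta_{1K})$ and of the role of the constraint $\beta_{1k}\le\overline\beta_1$ matches the paper exactly. The only minor divergence is in the funnelling step, where the paper uses a weighted-norm contraction $(\kappa'\dot X_{\theta,t})^{2r_m}\le(1-\underline v)(\kappa'\dot X_{\theta,t-1})^{2r_m}$ built from the Kronecker-product machinery of Ling and McAleer (2003) rather than your coordinate-by-coordinate argument---a choice driven by the fact that the same machinery is reused verbatim for the drift criterion in Lemma~\ref{lem:drift}.
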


\begin{proof}
	We follow the strategy of \citet{Lanne:Saikkonen:2005} and \citet{Meitz:Saikkonen:2008}.
	It suffices to show the following three intermediate results. 
	\begin{enumerate}[I.]
		\item For any $x \in S_{1\,\epsilon}$ there exists an $n \in \mathbb Z_+$ such that $ P_X^n (x, S_{2\,\epsilon})>0$.
		\item For any $A \in \mathcal B(\mathcal X)$ it holds that $\inf_{x \in S_{2\,\epsilon}} P_X^2(x,A \cap D) \geq c_* \mu_{Leb}(A \cap D)$,
		where $D$ is an open rectangular region to be specified in what follows
		and $c_*$ is a positive scalar that does not depend on $\theta$ or $x$.
		\item For any $x \in S_{1\,\epsilon}$ there exists an $n \in \mathbb Z_+$ such that for any $A \in \mathcal B(\mathcal X)$ it holds that $P_X^{n + 2}(x, A \cap D) >0$ whenever $\mu_{Leb}(A \cap D) > 0$.	
	\end{enumerate}
	
	\noindent {I}. 
	Define the event 
	$\Omega_n = \left\lbrace \omega \in \Omega: |Z_{1\,t}|\leq \mathbb E|Z_{1\,t}| \text{ and } |Z_{2\,t}|\leq \mathbb E|Z_{2\,t}|~,t = 1,\dots,n \right\rbrace$ for an arbitrary $n$ 
	and note that $\mathbb P(\Omega_n)>0$.
	Define the auxiliary vector $\dot X_{\theta\,t} = (|H_t|, |f_{\theta\,t}| , |d_{\theta\,t}| )'$.
	To establish part I we show that for any $\kappa \in (0,1)^3$ and for each $t=1,\ldots,n$ we have that when $X_{\theta\,t-1} \in S_{1\,\epsilon}$ the inequality 
	\begin{align}\label{eqn:quadform}
	\left(\kappa'\dot X_{\theta\,t}\right)^{2r_m}
	\leq \left(\kappa^{\otimes 2r_m}\right)'\mathbb E\left(\mathbf C_\epsilon(Z_t)^{\otimes 2r_m}\right)\dot X_{\theta\,t-1}^{\otimes 2r_m} ~
	\end{align}
	holds given $\Omega_n$.
	We distinguish the cases $(i)$ $|H_{t-1}| > M_\epsilon$ and $(ii)$ $|H_{t-1}|\leq M_\epsilon$.\\
	$(i)$ From \eqref{eqn:hbounds}, \eqref{eqn:fbounds} and \eqref{eqn:dbounds} we have that
	\begin{align}\label{eqn:kronx}
	\left(\kappa'\dot X_{\theta\,t}\right)^{2r_m}
	&\leq \left( \kappa' \mathbf C_\epsilon(Z_t) \dot X_{\theta\,t-1}\right)^{2r_m}
	= \left(\kappa^{\otimes 2r_m}\right)'\mathbf C_\epsilon(Z_t)^{\otimes 2r_m}\dot X_{\theta\,t-1}^{\otimes 2r_m},  
	\end{align}
	where the last equality follows from properties of Kronecker products.
	By adding and subtracting $\mathbb E \left(\mathbf C_\epsilon(Z_t)^{\otimes 2r_m}\right)$ in \eqref{eqn:kronx} we obtain
	\begin{align}\label{eqn:addsubtract}
	\left(\kappa'\dot X_{\theta\,t}\right)^{2r_m}
	&\leq  \left(\kappa^{\otimes 2r_m}\right)'\mathbb E\left(\mathbf C_\epsilon(Z_t)^{\otimes 2r_m}\right)\dot X_{\theta\,t-1}^{\otimes 2r_m} \nonumber\\
	&+ 
	\left(\kappa^{\otimes 2r_m}\right)'\left\lbrace \mathbf C_\epsilon(Z_t)^{\otimes 2r_m}- \mathbb E\left(\mathbf C_\epsilon(Z_t)^{\otimes 2r_m}\right)\right\rbrace\dot X_{\theta\,t-1}^{\otimes 2r_m}~.
	\end{align}
	The random elements of the matrix 
	$\mathbf C_\epsilon(Z_t)^{\otimes 2r_m}$ are of the form
	$C(a_h+b_h|Z_{1\,t}|+\epsilon)^{j_1}(1 + \epsilon + |Z_{2\,t}|)^{j_2}$ for $j_1,j_2\geq 0$ such that $j_1+j_2\leq 2r_m$, where $C$ denotes some positive constant (that depends on $\epsilon$). 
	Conditionally on $\Omega_n$, it follows from the independence between $Z_{1\,t}$ and $Z_{2\,t}$ as well as Jensen's inequality that the random elements are bounded from above by their expectations.
	This establishes that the bound in \eqref{eqn:quadform} holds in case $(i)$.\\
	$(ii)$ From \eqref{eqn:hbounds}, \eqref{eqn:fbounds} and \eqref{eqn:dbounds} we can write
	$
	\left(\kappa'\dot X_{\theta\,t}\right)^{2r_m} 
	\leq \left( \overline C_{z\,\epsilon}  + 
	\kappa_{\mathbf f}'\overline B \mathbf{f}_{\theta\,t-1} \right)^{2r_m},
	$
	where
	$\overline C_{z\,\epsilon} = {\kappa_{h} \overline H_{z\,\epsilon} + \kappa_{f} (\overline \alpha_0 + \overline \alpha_1\overline Y_{z\,\epsilon}) + \kappa_{d}(1 + \overline Y_{z\,\epsilon})}$,
	$\overline H_{z\,\epsilon} = \overline g_h^\epsilon (1 + |Z_{1\,t}|)$,
	$\overline Y_{z\,\epsilon} = \overline g_y^\epsilon(1 +|Z_{2\,t}|)$,
	$\kappa_{\mathbf f}=(\kappa_{f},\kappa_{d})'$,
	$\mathbf{f}_{\theta\,t-1} = \left(|f_{\theta\,t-1}|,|d_{\theta\,t-1}|\right)'$ and 
	$\overline B$ is a $2\times2$ lower triangular matrix with $\overline B_{11}=\overline B_{22}=\overline \beta_1$ and $\overline B_{21}=1$.
	On the event $\Omega_n$, it is straightforward to verify that 
	$\overline C_{z\,\epsilon} \leq 
	\left\|\overline C_{z\,\epsilon}\right\|_{L_1}$.
	Combining conditions $X_{\theta\,t-1} \in S_{1\,\epsilon}$ and $|H_{t-1}| \leq M_\epsilon$ we have
	$ \kappa_{\mathbf f}'\mathbf{f}_{\theta\,t-1} > M- \kappa_{h}|H_{t-1}| \geq M-\kappa_{h} M_\epsilon $.
	Thus, we can choose $M$ large enough such that 
	$\epsilon \left(M - \kappa_{h} M_\epsilon\right) >  
	\left\|\overline C_{z\,\epsilon}\right\|_{L_{2r_m}}\geq
	\left\|\overline C_{z\,\epsilon}\right\|_{L_1}$. Note that this choice of $M$ is independent of $t$.
	Such a choice of $M$ is kept fixed throughtout our derivations.\footnote{%
	We remark that Assumption \ref{asm:dgp} implies that $\left\|\overline C_{z\,\epsilon}\right\|_{L_{2r_m}}$ exists.}
	Then,  conditionally on $\Omega_n$ and whenever $ X_{\theta\,t-1}\in S_{1\,\epsilon}$, we have
	\begin{align*}
	\left(\overline C_{z\,\epsilon} + \kappa_{\mathbf f}'\overline B \mathbf{f}_{\theta\,t-1} \right)^{2r_m}
	\leq 
	\left(\left\|\overline C_{z\,\epsilon}\right\|_{L_{2r_m}} + \kappa_{\mathbf f}'\overline B \mathbf{f}_{\theta\,t-1} \right)^{2r_m}
	\leq 
	\left(\kappa_{\mathbf f}'\overline B_\epsilon \mathbf{f}_{\theta\,t-1} \right)^{2r_m}~,
	\end{align*}
	where $\overline B_\epsilon = \overline B + \epsilon I$.
	We note that
	$\kappa_{\mathbf f}'\overline B_\epsilon \mathbf{f}_{\theta\,t-1} = \kappa' \mathbf C_\epsilon(Z_t) \ddot X_{\theta\,t-1}$, where $\ddot X_{\theta\,t-1} = \left( 0,\mathbf{f}_{\theta\,t-1}'\right)'$, and we have that
	\begin{align}\label{eqn:ddot}
	\left(\kappa' \mathbf C_\epsilon(Z_t) \ddot X_{\theta\,t-1} \right)^{2r_m}
	\leq 
	(\kappa^{\otimes 2r_m})'\mathbb E\left(\mathbf C_\epsilon(Z_t)^{\otimes 2r_m}\right)\dot X_{\theta\,t-1}^{\otimes 2r_m} ~ ,
	\end{align}
	where we use the definition of $\ddot X_{\theta\,t-1}$.
	This establishes that the bound in \eqref{eqn:quadform} holds in case $(ii)$.	
	By Lemma A.2. of \cite{Ling:McAleer:2003} we can choose $\kappa \in (0,1)^3$ such that the vector $v = \left(I - \mathbb E\left(\mathbf C_\epsilon(Z_t)^{\otimes 2r_m}\right)\right)'\kappa^{\otimes 2r_m}$ has positive components.\footnote{Recall that the matrix $\mathbb E\left(\mathbf C_\epsilon(Z_t)^{\otimes 2r_m}\right)$ has a spectral radius that is strictly less than unity. As noted by \cite{Lanne:Saikkonen:2005}, the given proof makes clear that it means no loss of generality to assume that the components of $\kappa$ are bounded by unity.} In particular, we remark that the vector $v$ does not depend on $\theta$. We use $\underline v$ to denote the minimum of the components of $v$.
	Thus from \eqref{eqn:quadform} it follows that
	\begin{align}\label{eqn:contract_1}
	\left(\kappa'\dot X_{\theta\,t}\right)^{2r_m} 
	&\leq 
	\left(\kappa'\dot X_{\theta\,t-1}\right)^{2r_m}-v'\dot X_{t-1}^{\otimes 2r_m}
	=\left(\kappa'\dot X_{\theta\,t-1}\right)^{2r_m}\left(1 - {v'\dot X_{\theta\,t-1}^{\otimes 2r_m}\over \left(\kappa^{\otimes 2r_m}\right)'\dot X_{\theta\,t-1}^{\otimes 2r_m} } \right)\nonumber\\
	&\leq
	(1 - \underline v)\left(\kappa'\dot X_{\theta\,t-1}\right)^{2r_m}~,
	\end{align}
	where $\underline v \in (0,1)$. By repeated application of \eqref{eqn:contract_1} starting from $X_{\theta\,0}=x\in S_{1\,\epsilon}$ we have
	$
	\left(\kappa'\dot X_{\theta\,n}\right)^{2r_m} \leq (1-\underline v)^n \left(\kappa'\dot x\right)^{2r_m}.
	$
	Since $\underline v \in (0,1)$ we have that for any $x \in S_{1\epsilon}$ there exists a sufficiently large $n$ such that the right hand side of the inequality is smaller than $M^{2r_m}$.
	Thus, we have that $X_{\theta\,n} \in S_{2\,\epsilon}$ with positive probability.\\
	\noindent {II.} 
	First we write
	$
	P_X^2(x,A \cap D) 
	=  \mathbb E ( \; \mathbb E( \mathbbm{1}_{\{X_{\theta\,2} \in A \cap D\}} | H_1 , X_{\theta\,0} ) \; | \; X_{\theta\,0} = x \; ),
	$
	for any $x \in S_{2\,\epsilon}$, $A \in \mathcal B(\mathcal X)$, $D \subset \mathcal X$ such that $D$ is an open rectangular region (to be specified in what follows).
	Let $\underline h_1 \geq \sup_{|h| < M /\kappa_h }g_{h1}(h)$.
	The result is obtained by showing the following intermediate results.
	$(i)$ $\inf_{h_1 \in [\underline h_1,\underline h_1+1]} \mathbb E( \mathbbm{1}_{\{X_{\theta\,2} \in A \cap D\}} | H_1=h_1 , X_{\theta\,0}=x ) \geq c' \mu_{Leb}(A \cap D),$ where $c'$ is a positive scalar that does not depend on $\theta$ or $x$.
	$(ii)$ $P_X^2(x,A \cap D) \geq  c'' \inf_{h_1 \in [\underline h_1,\underline h_1+1]} 
		\mathbb E( \mathbbm{1}_{\{X_{\theta\,2} \in A \cap D\}} | H_1=h_1 , X_{\theta\,0}=x )$,
		where $c''$ is a positive scalar that does not depend on $\theta$ or $x$. \\ 
	$(i)$ Set 
	$ \underline Z_{2\,1} = \sup_{|h| < M/\kappa_h} {(R - g_{y1}(h) )/ g_{y2}(h)}$ and $\underline Z_{2\,2} = \sup_{h_1 \in [\underline h_1,\underline h_1+1]} {(R - g_{y1}(h_1) ) / g_{y2}(h_1)}$,
	where
	$
	R >  { \overline \beta_{1} M / \kappa_f - \underline \alpha_{0}  \over \underline \alpha_{1}} \vee 
	r_K \vee 
	\sup_{|h| < M/\kappa_h } g_{y1}(h) \vee 
	\sup_{h_1 \in [\underline h_1,\underline h_1+1]} g_{y1}(h_1),
	$
	and note that  $\underline Z_{2\,1}$ and $\underline Z_{2\,2}$ do not depend on $x$, $h_1$ or $\theta$.
	Then it holds that
	\begin{align*}
	& \inf_{h_1 \in [\underline h_1,\underline h_1+1]} \mathbb E( \mathbbm{1}_{\{X_{\theta\,2} \in A \cap D\}} | H_1=h_1 , X_{\theta\,0}=x ) \nonumber \\
	& \quad \geq \inf_{h_1 \in [\underline h_1,\underline h_1+1]} 
	\int_0^\infty
	\int_{\underline Z_{2\,1}}^\infty
	\int_{\underline Z_{2\,2}}^\infty
	\mathbbm{1}_{\{X_{\theta\,2} \in A \cap D\}}
	\phi_H(Z_{1\,2}) 
	\phi_Y(Z_{2\,1})
	\phi_Y(Z_{2\,2})
	dZ_{1\,2}
	dZ_{2\,1}
	dZ_{2\,2} ~. \label{eqn:part2} 
	\end{align*}
	Over the integration range of the right hand side a number of properties hold.
	First, we have that $g_{y1}(h)+g_{y2}(h)Z_{2\,1}>R$ and $g_{y1}(h_1)+g_{y2}(h_1)Z_{2\,2} > R$, and $f_{\theta\,1} > 0$. 
	Furthermore, we have that the map between $X_{\theta\,2}$ and $(Z_{1\,2}, Z_{2\,2}, Z_{2\,1})'$ is linear and is given by 
	\begin{equation}\label{eqn:XZmap}
	X_{\theta\,2} = c + G 
	\begin{bmatrix}
	Z_{1\,2} & Z_{2\,2} & Z_{2\,1} 
	\end{bmatrix}' ~,
	\end{equation}
	where the expression for the vector $c=(c_h,c_f,c_d)'$ and the 3 $\times$ 3 block-diagonal matrix $G$ are given in \eqref{eqn:c_vector} and \eqref{eqn:G_matrix} from the Online Appendix.
	Furthermore, $G$ is invertible uniformly over $S_{2\,\epsilon}$, $\Theta $ and $[\underline h_1,\underline h_1+1]$. 
	In fact, we have that 
	$
	\det G \in [G_l, G_u]$,
	and it is clear that $G_l > 0$ by asssumptions \ref{asm:dgp} and \ref{asm:algo}.\footnote{%
		See \eqref{eqn:G_l} and \eqref{eqn:G_u} from the Online Appendix for the expression for $G_l$ and $G_u$.
	}
Define $Z_{1\,2}(X_{\theta\,2}) = {H_2 - c_h \over g_{h2}(h_1)},$ and
	\begin{eqnarray*}
		Z_{2\,2}(X_{\theta\,2}) &=&
		{ g_{h2}(h_1)g_{y2} (h) [ \alpha_{1\,K}+\overline \beta_1)(f_{\theta\,2}-c_f) - \alpha_{1\,K}\beta_{1\,K}(d_{\theta\,2} - c_d) ] \over \det G} \\
		Z_{2\,1}(X_{\theta\,2}) &=&
		{ g_{h2}(h_1)g_{y2} (h_1)[ \alpha_{1\,K}(d_{\theta\,2}-c_d) - (f_{\theta\,2}-c_f) ] \over \det G} ~.
	\end{eqnarray*}
	We observe that the constraints 
	$Z_{2\,1}(X_{\theta\,2})>\underline Z_{1\,2}$ and
	$Z_{2\,2}(X_{\theta\,2})>\underline Z_{2\,2}$
	impose upper and lower bounds on $d_{\theta\,2}$ which are linear functions of $f_{\theta\,2}$ with positive slopes.
	In fact, from Assumption \ref{asm:algo} we have that the minimum discrepancy between slopes is given by
	$
	\inf_{\theta\in\Theta} 
	{\alpha_{1\,K} + \overline \beta_1 \over \alpha_{1\,K}\beta_{1\,K}} 
	- {1\over \alpha_{1\,K}} = 
	\inf_{\theta\in\Theta} 
	{\alpha_{1\,K} + \overline \beta_1 - \beta_{1\,K} \over \alpha_{1\,K}\beta_{1\,K}}=
	{1 \over \overline \beta_1} > 0.
	$
	It follows that the intersection of images of the map defined in \eqref{eqn:XZmap} with respect to $\theta\in\Theta$, $x\in S_{2\,\epsilon}$ and $h_1\in[\underline h_1,\underline h_1+1]$ contains the following set\footnote{%
		See \eqref{eqn:H2_lb}, \eqref{eqn:f2_lb},\eqref{eqn:d2_lb} and \eqref{eqn:d2_ub} from the Online Appendix for the expressions for $\underline H_2$, $\underline f_2$, $\underline d_2(f_{\theta\,2})$ and $\overline d_2(f_{\theta\,2})$.}
	\begin{equation}\label{eqn:intersect_images}
	\{ X_{\theta\,2} \in \mathcal X : H_2 > \underline{H_2}, f_{\theta\,2} > \underline f_{2}, \underline d_2(f_{\theta\,2}) < d_{\theta\,2} < \overline d_2(f_{\theta\,2}) \}~.
	\end{equation}
	We remark that Assumption \ref{asm:algo} implies that such a set is non-empty and it contains sets of positive Lebesgue measure.
	Thus, we can pick $D$ as an open rectangular region in the intersection of \eqref{eqn:intersect_images} and $S_{1\,\epsilon}$. Clearly, $D$ does not depend on $\theta$, $x$ or $h_1$.
	Next, by the change of variable theorem we obtain that
	\begin{align*}
	& \inf_{h_1 \in [\underline h_1,\underline h_1+1]} 
	\int_0^\infty
	\int_{\underline Z_{2\,1}}^\infty
	\int_{\underline Z_{2\,2}}^\infty
	\mathbbm{1}_{\{X_{\theta\,2} \in A \cap D\}}
	\phi_H(Z_{1\,2}) 
	\phi_Y(Z_{2\,1})
	\phi_Y(Z_{2\,2})
	dZ_{1\,2}
	dZ_{2\,1}
	dZ_{2\,2} \\
	&\quad \geq
	\inf_{ \substack{ x \in S_{2\,\epsilon} \\ h_1 \in [\underline h_1,\underline h_1+1] \\\theta \in \Theta \\ X_{\theta\,2} \in A \cap D} }
	\det G^{-1}
	\phi_H(Z_{1\,2}(X_{\theta\,2}))\phi_Y(Z_{2\,1}(X_{\theta\,2}))\phi_Y(Z_{2\,2}(X_{\theta\,2})) 
	\int_{A \cap D}
	dX_{\theta\,2}~.
	\end{align*}
	The boundedness conditions on $g_{h2},g_{y1},g_{y2},\phi_H$ and $\phi_Y$ imply that
	\[
	\inf_{ \substack{ x \in S_{2\,\epsilon} \\ h_1 \in [\underline h_1,\underline h_1+1] \\\theta \in \Theta \\ X_{\theta\,2} \in A \cap D} }
	\det G^{-1}
	\phi_H(Z_{1\,2}(X_{\theta\,2}))\phi_Y(Z_{2\,1}(X_{\theta\,2}))\phi_Y(Z_{2\,2}(X_{\theta\,2})) \geq c' >0 ~,
	\]
	where $c'$ does not depend on $\theta$, $x$ or $h_1$.
	The claim of part $(i)$ then follows.
	\\
	\noindent $(ii)$
	We have that
	\begin{align*}
	P_X^2(x,A \cap D) 
	& \geq \int_0^{\infty}\mathbb E( \mathbbm{1}_{\{X_{\theta\,2} \in A \cap D\}} | H_1=h_1 , X_{\theta\,0}=x )\phi_H(Z_{1\,1})dZ_{1\,1} \\
	&\geq \inf_{h_1 \in [\underline h_1,\underline h_1+1]} \mathbb E( \mathbbm{1}_{\{X_{\theta\,2} \in A \cap D\}} | H_1=h_1 , X_{\theta\,0}=x )\\ &\quad\times \int_{\underline h_1}^{\underline h_1 + 1} {1 \over g_{h2}(h)}\phi_H\left({h_1 - g_{h1}(h) \over g_{h2}(h)}\right)dh_1~,
	\end{align*}
	where the last inequality follows by the choice of $\underline h_1$, and $g_{h2}(h)$ is strictly positive by assumption.
	Moreover, the boundedness conditions on $g_{h1}$, $g_{h2}$, and $\phi_H$ imply that
	$
	\inf_{|h| < M / \kappa_h} g_{h2}(h)^{-1}\phi_H\left( g_{h2}(h)^{-1}(h_1 - g_{h1}(h) ) \right) \geq c'>0,
	$
	where we emphasize that $c'$ does not depend on $x$ or $\theta$. 
	This concludes the second part.
	Combining parts $(i)$ and $(ii)$, the result in II holds with $c_* = c' c'' > 0$.
	
	\noindent {III.} The Chapman-Kolmogorov equations imply that for any $x \in S_{1\,\epsilon}$ it holds that
	\begin{eqnarray*}
		P_X^{n+2}(x,A \cap D) 
		\geq \int_{S_{2\,\epsilon}} P_X^n(x,dx_n)P_X^2(x_n,A \cap D)
		\geq c_* \mu_{Leb}(A \cap D) P_X^n(x,S_{2\,\epsilon}) > 0,
	\end{eqnarray*}
	where the last two inequalities follow, respectively, from Parts II and I (for a sufficiently large $n$).
\end{proof}

\begin{lemma}[Aperiodicity]\label{lem:aperiod}
	Consider the setting of Lemma \ref{lem:geometric_erg:x}. Then, the Markov chain $\{X_{\theta\,t},t\geq0\}$ is aperiodic.
\end{lemma}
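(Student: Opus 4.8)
My plan is to apply the standard criterion for aperiodicity of a $\psi$-irreducible chain: it suffices to exhibit a small set $C$ with $\varphi(C)>0$ that is $\nu_n$-small, with minorizing measures equal to a common measure up to positive scalars, for all $n$ past some threshold; since the period divides every such $n$, it must equal one. I take $C=D$, the open rectangular region constructed in Part~II of the proof of Lemma~\ref{lem:irre}, chosen bounded (the irreducibility argument permits this), so that $\varphi(\cdot)=\mu_{Leb}(\cdot\cap D)$ is the irreducibility measure and $\varphi(D)=\mu_{Leb}(D)>0$.

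The key step is to show that the set $S_{2\,\epsilon}$ of \eqref{eqn:S1_S2} is invariant on the small-innovation event $\Omega_n=\{|Z_{1\,t}|\le\mathbb E|Z_{1\,t}|,\ |Z_{2\,t}|\le\mathbb E|Z_{2\,t}|,\ t=1,\dots,n\}$, which has positive probability not depending on $\theta$ or $x$. Feeding the bounds \eqref{eqn:hbounds}--\eqref{eqn:dbounds} into $\kappa_h|H_t|+\kappa_f|f_{\theta\,t}|+\kappa_d|d_{\theta\,t}|$, one obtains, whenever $X_{\theta\,t-1}\in S_{2\,\epsilon}$, a bound of the form $C_\star+\mu_\star\bigl(\kappa_h|H_{t-1}|+\kappa_f|f_{\theta\,t-1}|+\kappa_d|d_{\theta\,t-1}|\bigr)\le C_\star+\mu_\star M$, with $C_\star$ independent of $M$; here $\mu_\star<1$ as soon as the weight vector $\kappa$ satisfies $\kappa_d/\kappa_f<1-\overline\beta_1$ and $\overline\alpha_1\kappa_f/\kappa_h+\kappa_d/\kappa_h$ is small (the contracting coefficient on the $H$-term comes from $a_h+b_h\mathbb E|\epsilon_{H\,t}|<1$, which follows from Assumption~\ref{asm:dgp}$(iv)$ by Jensen; that on the $d$-term is just $\overline\beta_1<1$). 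These extra requirements only strengthen the ordering $\kappa_h\gg\kappa_f\gg\kappa_d$ that the positivity condition of Lemma~A.2 in \cite{Ling:McAleer:2003} already forces, by the lower-triangular structure of $\mathbf C_\epsilon(Z_t)$, so such a $\kappa$ — e.g.\ $(1,\delta,\delta^2)$ with $\delta$ small — is admissible; enlarging $M$ so that $C_\star\le(1-\mu_\star)M$ then yields $X_{\theta\,t}\in S_{2\,\epsilon}$, i.e.\ invariance.

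Next I combine this with the contraction estimate \eqref{eqn:contract_1} of Part~I: on $\Omega_n$ the quantity $\kappa'\dot X_{\theta\,t}$ decreases geometrically until the chain first enters $S_{2\,\epsilon}$, which happens by a time $n_0$ depending only on $\sup_{x\in D}\kappa'\dot x$, $\underline v$ and $M$ — hence not on $x\in D$ or $\theta$ — and thereafter the chain remains in $S_{2\,\epsilon}$ by the invariance just shown; consequently $\inf_{x\in D}P_X^{n}(x,S_{2\,\epsilon})\ge\mathbb P(\Omega_n)>0$ for every $n\ge n_0$. Composing with the two-step minorization of Part~II, $P_X^{n+2}(x,A)\ge\int_{S_{2\,\epsilon}}P_X^{n}(x,dx')P_X^2(x',A)\ge c_*\,\mathbb P(\Omega_n)\,\mu_{Leb}(A\cap D)$ for all $x\in D$ and $A\in\mathcal B(\mathcal X)$, so $D$ is $\nu_{n+2}$-small with minorizing measure proportional to $\varphi$, for every $n\ge n_0$. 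Since $\varphi(D)>0$ and $\gcd\{\,n+2:n\ge n_0\,\}=1$, the period of $\{X_{\theta\,t},t\ge0\}$ equals one, which is the claim.

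\textbf{Main obstacle.} What makes this nontrivial is the degeneracy of the one-step transition kernel of the companion chain: its three coordinates are driven by only two innovations per period, so $P_X(x,\cdot)$ is supported on a two-dimensional surface in $\mathcal X\subseteq\mathbb R^3$ and is never absolutely continuous with respect to Lebesgue measure — which is exactly why the minorization in Lemma~\ref{lem:irre} needed two steps and why period~$2$ cannot be excluded for free. The invariance of $S_{2\,\epsilon}$ under small innovations is the device that manufactures return times to $D$ of every large length; the only delicate checks are the invariance computation itself and the verification that the constraints it puts on $\kappa$ coexist with the positivity constraint already in force, both of which reduce to the triangular structure of $\mathbf C_\epsilon(Z_t)$.
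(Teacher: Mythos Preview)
Your approach is sound and genuinely different from the paper's, but it carries more overhead than necessary and leaves one verification incomplete.

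\textbf{The gap.} You invoke \eqref{eqn:contract_1}, which is the $2r_m$-power contraction derived with the Ling--McAleer $\kappa$ (chosen so that $(I-\mathbb E[\mathbf C_\epsilon(Z_t)^{\otimes 2r_m}])'\kappa^{\otimes 2r_m}>0$), and then impose the further linear constraints $\kappa_d/\kappa_f<1-\overline\beta_1$ and $\overline\alpha_1\kappa_f/\kappa_h+\kappa_d/\kappa_h$ small for invariance. Your claim that the Ling--McAleer condition already ``forces'' the ordering $\kappa_h\gg\kappa_f\gg\kappa_d$ is not justified: the positivity condition lives on the $3^{2r_m}$-dimensional Kronecker power, and the relationship between admissible $\kappa$ there and the linear ordering is not automatic. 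The clean fix is to drop \eqref{eqn:contract_1} altogether and observe that the very same one-step linear bound you derive for invariance,
\[
\kappa'\dot X_{\theta\,t}\le C_\star+\mu_\star\,\kappa'\dot X_{\theta\,t-1}\quad\text{on }\Omega_n,
\]
also gives geometric decrease while $X_{\theta\,t-1}\in S_{1\,\epsilon}$ (since $C_\star\le(1-\mu_\star')M<(1-\mu_\star')\kappa'\dot X_{\theta\,t-1}$ for suitable $\mu_\star<\mu_\star'<1$). Then a single $\kappa=(1,\delta,\delta^2)$ with $\delta$ small and $M$ large handles both the descent into $S_{2,\epsilon}$ and the subsequent trapping, and no appeal to the Kronecker-power lemma is needed for aperiodicity.

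\textbf{Comparison with the paper.} The paper avoids invariance entirely. It invokes the Chan (1990) criterion: for a $\varphi$-irreducible chain it is enough to find a small set $D$ and, for each $x\in D$, a single $n$ with $P_X^{n+2}(x,D)>0$ and $P_X^{n+3}(x,D)>0$. The first follows exactly as you do, from Part~I plus Part~II. For the second the paper introduces a nested level set $S'_{2,\epsilon}\subset S_{2,\epsilon}$ (threshold $M'<M$): the same contraction on $\Omega_{n+1}$ brings the chain into $S'_{2,\epsilon}$ one step later, and since $S'_{2,\epsilon}\subset S_{2,\epsilon}$ the Part~II minorization still applies, yielding $P_X^{n+3}(x,D)>0$. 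This is shorter: two return times with $\gcd$ one suffice, so there is no need to certify that the chain stays in $S_{2,\epsilon}$ once it arrives, and hence no extra constraint on $\kappa$ or $M$. Your route, once patched as above, buys a uniform-in-$n$ minorization of $D$, which is stronger than needed here but could be useful if one wanted explicit constants in the downstream quantitative ergodicity bounds.
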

\begin{proof}
	It follows from Proposition A1.1 in \cite{Chan:1990}, that to establish aperiodicity it suffices to show that that for each $x \in D$  
	there exists an $n \in \mathbb Z_+$ such that 
	$P_X^{n+2}(x,D) >0$ and $P_X^{n+3}(x,D) > 0$, where $D$ is a small set.
	We divide the proof in three parts. 
	In part $(i)$ we show that the set $D$ defined in Lemma \ref{lem:irre} is a small set.
	In part $(ii)$ we show that for each $x \in D$ there exists an $n$ such that $P_X^{n+2}(x,D) >0$.
	In part $(iii)$ we show that for the same $x$ and same $n$ defined in part $(ii)$ it holds that $P_X^{n+3}(x,D) >0$. \\
	\noindent $(i)$ We note that by repeating the arguments in Part II of Lemma \ref{lem:irre} with $S_{2\,\epsilon}$ replaced by $D$ we have that for any $A \in \mathcal B(\mathcal X)$ there exist $c_*'>0$ and an open rectangular region $D'$ such that
	$
	\inf_{x \in D} P_X^2(x,A \cap D') \geq c_*'\mu_{Leb}(A\cap D').
	$ \\
	\noindent $(ii)$ 
	It follows from Parts I and II of Lemma \ref{lem:irre} that
	for any $x \in D$ there exists an $n$ such that \( P_X^{n}( x , S_{2\,\epsilon} ) > 0 \) and 
	for any $x \in S_{2\,\epsilon}$ we have that \( P_X^2( x , D ) > 0 \). 
	The claim follows from an application of the Chapman-Kolmogorov equation.\\
	\noindent $(iii)$ Note that in the proof of Lemma \ref{lem:irre} we can choose an $M$ and $M'$ with $M>M'$ such that $P_X^{n}(x,S_{2\,\epsilon}) > 0$ 
	and $P_X^{n+1}(x,S_{2\,\epsilon}') > 0$ where $S_{2\,\epsilon}' = \{(h,f,d)' \in \mathcal X: \kappa_h |h| + \kappa_f|f| + \kappa_d |d| \leq M' \}$.
	It is straightforward to see in the proof of Lemma \ref{lem:irre} that $M$ can be chosen as any sufficiently large constant.
	Furthermore, we have that $\inf_{x\in S_{2\,\epsilon}'} P_X^2(x,D) \geq \inf_{x\in S_{2\,\epsilon}} P_X^2(x,D) \geq c_* \mu_{Leb}(D) > 0$. 
	The Chapman-Kolmogorov equation implies the claim since we have that
	$
	P_X^{n+3}(x,D) \geq \int_{S_{2\,\epsilon}'} P_X^{n+1}(x,dx_{n+1})P_X^2(x_{n+1},D) \geq c_*\mu_{Leb}(D) P^{n+1}_X(x,S_{2\,\epsilon}') > 0.$
\end{proof}

\begin{lemma}[Drift Criterion]\label{lem:drift}
	Consider the setting of Lemma \ref{lem:geometric_erg:x}. 
	Then, the Markov chain $\{X_{\theta\,t},t\geq 0\}$ satisfies $\mathbb E\left(q_X(X_{\theta\,t})\vert X_{\theta\,t-1}=x \right) \leq (1-\gamma_1) q_X(x) + \gamma_2\mathbbm 1_{\{x \in C\}}$
	for some $\gamma_1>0$ and $\gamma_2 < \infty$ where $C$ is a small set. Furthermore, $\gamma_1$, $\gamma_2$ and $C$ do not depend on $\theta$.
\end{lemma}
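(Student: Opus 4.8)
The plan is to verify the geometric drift condition for the drift function $q_X(x) = 1 + (\kappa'\dot x)^{2r_m}$ with small set $C = S_{2\,\epsilon}$, reusing the algebra from the proof of Lemma \ref{lem:irre} but now taking genuine conditional expectations in place of conditioning on the low-innovation event $\Omega_n$. First I would note that the a.s.\ bounds \eqref{eqn:hbounds}--\eqref{eqn:dbounds} give, componentwise, $\dot X_{\theta\,t} \leq \mathbf C_\epsilon(Z_t)\dot X_{\theta\,t-1}$ whenever $|H_{t-1}| > M_\epsilon$, and $\kappa'\dot X_{\theta\,t} \leq \overline C_{z\,\epsilon} + \kappa_{\mathbf f}'\overline B\mathbf f_{\theta\,t-1}$ whenever $|H_{t-1}| \leq M_\epsilon$. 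Since $Z_t=(Z_{1\,t},Z_{2\,t})'=(\epsilon_{H\,t},\epsilon_{Y\,t-1})'$ is independent of $X_{\theta\,t-1}$, in the first case the Kronecker identity \eqref{eqn:kronx} together with $\mathbb E(\mathbf C_\epsilon(Z_t)^{\otimes 2r_m}) < \infty$ (Assumption \ref{asm:dgp}$(iii)$) yields directly $\mathbb E((\kappa'\dot X_{\theta\,t})^{2r_m}\mid X_{\theta\,t-1}=x) \leq (\kappa^{\otimes 2r_m})'\mathbb E(\mathbf C_\epsilon(Z_t)^{\otimes 2r_m})\dot x^{\otimes 2r_m}$.

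The more delicate case is $x \in S_{1\,\epsilon}$ with $|h| \leq M_\epsilon$. Here, instead of conditioning on $\Omega_n$, I would apply Minkowski's inequality to obtain $\|\kappa'\dot X_{\theta\,t}\|_{L_{2r_m}} \leq \|\overline C_{z\,\epsilon}\|_{L_{2r_m}} + \kappa_{\mathbf f}'\overline B\mathbf f_{\theta\,t-1}$, and then invoke the very calibration of $M$ made in Lemma \ref{lem:irre}, namely $\epsilon(M - \kappa_h M_\epsilon) > \|\overline C_{z\,\epsilon}\|_{L_{2r_m}}$, together with $\kappa_{\mathbf f}'\mathbf f_{\theta\,t-1} > M - \kappa_h M_\epsilon$ on $S_{1\,\epsilon}\cap\{|h|\leq M_\epsilon\}$, to deduce $\mathbb E((\kappa'\dot X_{\theta\,t})^{2r_m}\mid X_{\theta\,t-1}=x) \leq (\kappa_{\mathbf f}'\overline B_\epsilon \mathbf f_{\theta\,t-1})^{2r_m}$; by the identity around \eqref{eqn:ddot} this equals $(\kappa^{\otimes 2r_m})'\mathbb E(\mathbf C_\epsilon(Z_t)^{\otimes 2r_m})\ddot x^{\otimes 2r_m}$, which is $\leq (\kappa^{\otimes 2r_m})'\mathbb E(\mathbf C_\epsilon(Z_t)^{\otimes 2r_m})\dot x^{\otimes 2r_m}$ since $\ddot x \leq \dot x$ componentwise and the matrix has nonnegative entries. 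Combining the two cases, for every $x \in S_{1\,\epsilon}$ I recover the bound \eqref{eqn:contract_1} in conditional-expectation form: $\mathbb E((\kappa'\dot X_{\theta\,t})^{2r_m}\mid X_{\theta\,t-1}=x) \leq (\kappa'\dot x)^{2r_m} - v'\dot x^{\otimes 2r_m} \leq (1-\underline v)(\kappa'\dot x)^{2r_m}$, where $v=(I - \mathbb E(\mathbf C_\epsilon(Z_t)^{\otimes 2r_m}))'\kappa^{\otimes 2r_m}$ has strictly positive components with minimum $\underline v \in (0,1)$ by Lemma A.2 of \cite{Ling:McAleer:2003}.

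The drift inequality then follows readily. On $C^c = S_{1\,\epsilon}$ we have $(\kappa'\dot x)^{2r_m} > M^{2r_m}$, so I would pick any $\gamma_1 \in (0,\underline v)$ with $\gamma_1 \leq \underline v M^{2r_m}/(1 + M^{2r_m})$; a short computation then gives $1 + (1-\underline v)(\kappa'\dot x)^{2r_m} \leq (1-\gamma_1)(1 + (\kappa'\dot x)^{2r_m}) = (1-\gamma_1)q_X(x)$ for all such $x$. On $C = S_{2\,\epsilon}$, where $|h|$, $|f|$ and $|d|$ are bounded by fixed multiples of $M$, the same bounds \eqref{eqn:hbounds}--\eqref{eqn:dbounds} and the finiteness of $\mathbb E(\mathbf C_\epsilon(Z_t)^{\otimes 2r_m})$ give $\mathbb E(q_X(X_{\theta\,t})\mid X_{\theta\,t-1}=x) \leq \gamma_2$ for some finite $\gamma_2$ uniform in $x \in S_{2\,\epsilon}$; since $q_X \geq 1 > 0$ this yields $\mathbb E(q_X(X_{\theta\,t})\mid X_{\theta\,t-1}=x) \leq (1-\gamma_1)q_X(x) + \gamma_2\mathbbm 1_{\{x\in C\}}$ for all $x \in \mathcal X$. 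That $C = S_{2\,\epsilon}$ is a small set was established in Part II of the proof of Lemma \ref{lem:irre}, and $\underline v$, $M$, $M_\epsilon$ and $\mathbb E(\mathbf C_\epsilon(Z_t)^{\otimes 2r_m})$ are all $\theta$-free there, so $\gamma_1$, $\gamma_2$ and $C$ do not depend on $\theta$. The main obstacle is precisely the $|h| \leq M_\epsilon$ branch: the ``small-innovation event $\Omega_n$'' device used in Lemma \ref{lem:irre} must be replaced by a genuine $L_{2r_m}$ estimate, which is why $M$ was calibrated against $\|\overline C_{z\,\epsilon}\|_{L_{2r_m}}$ rather than $\|\overline C_{z\,\epsilon}\|_{L_1}$.
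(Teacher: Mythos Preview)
Your proposal is correct and follows essentially the same route as the paper's proof: the same choice $C=S_{2,\epsilon}$, the same case split $|h|>M_\epsilon$ versus $|h|\leq M_\epsilon$ on $S_{1,\epsilon}$, the Minkowski step in place of the $\Omega_n$-conditioning, the passage through $\ddot x$ to reach \eqref{eqn:Ekronx}, and the uniform bound on $S_{2,\epsilon}$. Your handling of the additive constant $1$ in $q_X$ via the explicit calibration of $\gamma_1$ is slightly more detailed than the paper, which leaves that step implicit.
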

\begin{proof}	
	Set $C$ equal to $S_{2\,\epsilon}$ and note that Part II in the proof of Lemma \ref{lem:irre} establishes that $S_{2\,\epsilon}$ is a small set that does not depend on $\theta$.\\
	When $x \in S_{1\,\epsilon}$, we distinguish two cases: $(i)$ $|h|>M_\epsilon$ or $(ii)$ $|h|\leq M_\epsilon$. \\
	Case $(i)$.
	From \eqref{eqn:kronx} we have that
	\begin{align}\label{eqn:Ekronx}
	\mathbb E_x (q_X(X_{\theta\,t}))-1 
	&\leq\mathbb E_x \left( \kappa' \mathbf C_\epsilon(Z_t) \dot x\right)^{2r_m}
	= \left(\kappa^{\otimes 2r_m}\right)' \mathbb E \left(\mathbf C_\epsilon(Z_t)^{\otimes 2r_m}\right)\dot x^{\otimes 2r_m} ~.
	\end{align}
	Following steps analogous to the ones used to go from \eqref{eqn:quadform} to \eqref{eqn:contract_1} we have that
	\begin{align*}
	\mathbb E_x\left(q_X(X_{\theta\,t})|X_{\theta\,t-1}=x\right)
	&\leq 1+(\kappa'\dot x)^{2r_m}-v'\dot x^{\otimes 2r_m}
	\leq (1 - \gamma_1)q_X(x) ~,
	\end{align*}
	where $\gamma_1 \in (0,1)$ and does not depend $\theta$. \\
	Case $(ii)$. From Part I of Lemma \ref{lem:irre} (case $(ii)$) it follows that
	$
	\mathbb E_x \left(q_X(X_{\theta\,t})\right)-1 
	\leq \mathbb E_x \left(\overline C_{z\,\epsilon} + \kappa_{\mathbf f}'\overline B \mathbf f\right)^{2r_m}$.
	We observe that
	\begin{align*}
	\mathbb E_x\left(\overline C_{z\,\epsilon} + \kappa_{\mathbf f}'\overline B \mathbf{f} \right)^{2r_m} =
	\left(\left(\mathbb E_x\left(\overline C_{z\,\epsilon} + \kappa_{\mathbf f}'\overline B \mathbf{f}\right)^{2r_m} \right)^{1 \over 2r_m}\right)^{2r_m}
	&\leq \left( \left\|\overline C_{z\,\epsilon}\right\|_{L_{2r_m}} + \kappa_{\mathbf f}'\overline B \mathbf{f} \right)^{2r_m}~,
	\end{align*}
	and note that the assumptions on the innovations imply that $\left\|\overline C_{z\,\epsilon}\right\|_{L_{2r_m}}$ exists. 
	Using steps analogous to those used to get to \eqref{eqn:ddot} we have that
	\begin{align*}
	\left( \left\|\overline C_{z\,\epsilon}\right\|_{L_{2r_m}} + \kappa_{\mathbf f}'\overline B \mathbf{f} \right)^{2r_m}
	&\leq 
	\mathbb E_x\left(\kappa' \mathbf C_\epsilon(Z_t) \ddot x\right)^{2r_m}
	\leq 
	(\kappa^{\otimes 2r_m})'\mathbb E\left(\mathbf C_\epsilon(Z_t)^{\otimes 2r_m}\right)\dot x^{\otimes 2r_m} ~.
	\end{align*}
	The claim of case $(ii)$ then follows using the same steps of case $(i)$ after equation \eqref{eqn:Ekronx}.
	When $x \in S_{2\,\epsilon}$ it follows from Assumption \ref{asm:dgp} and the definition of $S_{2\,\epsilon}$ that 
$
		\sup_{ \substack{ x \in S_{2\,\epsilon} \\ \theta \in \Theta} }
		\mathbb E\left(q_X(X_{\theta\,t})\vert X_{\theta\,t-1}=x \right) 
		\leq
		\gamma_2 < \infty,
$
	where we have used the fact the expectation exists and it is bounded over $\Theta$ for every $x \in S_{2\,\epsilon}$ provided
	that $Z_{1\,t}$ and $Z_{2\,t}$ have $2 r_m$ moments. Since $(1-\gamma_1)q_X(x)$ is positive the claim of the proposition holds when $x \in S_{2\,\epsilon}$.
\end{proof}

\bibliography{references}
\bibliographystyle{natbib}


\end{document}